\documentclass{article}

     \PassOptionsToPackage{numbers, compress}{natbib}
 \usepackage[preprint]{neurips_2026}

\usepackage[utf8]{inputenc} 
\usepackage[T1]{fontenc}    
\usepackage{hyperref}       
\usepackage{bookmark}
\usepackage{url}            
\usepackage{booktabs}       
\usepackage{amsfonts}       
\usepackage{nicefrac}       
\usepackage{microtype}      
\usepackage{xcolor}         

\usepackage{xspace}
\usepackage{xr}
\usepackage{amsmath}
\usepackage{amssymb}
\usepackage{amsthm}
\usepackage{tikz}
\usetikzlibrary{matrix, positioning, arrows, arrows.meta, positioning, calc, fit, mindmap, decorations.pathreplacing}
\usepackage{pgfplots}
\pgfplotsset{compat=1.18}
\usepackage{pgfplotstable}
\usepackage{minted}
\usepgfplotslibrary{colorbrewer}

\newtheorem{theorem}{Theorem}
\newtheorem{proposition}{Proposition}
\newtheorem{corollary}{Corollary}
\newtheorem{definition}{Definition}
\newtheorem{lemma}{Lemma}
\theoremstyle{remark}
\newtheorem{remark}{Remark}

\usepackage{subcaption}
\usepackage{multirow}
\usepackage{placeins}  
\usepackage{wrapfig}

\title{Frequency Domain Reservoir Computing}

\author{
  Klaus Schertler \\
  Airbus Central Research \& Technology \\
  Taufkirchen, Germany \\
  \texttt{klaus.schertler@airbus.com} \\
  \And
  Xiomara Runge \\
  Airbus Central Research \& Technology \\
  Taufkirchen, Germany \\
  \texttt{xiomara.runge@airbus.com} \\
  \AND
  Andrea Ceni \\
  Department of Computer Science \\
  University of Pisa, Italy \\
  \texttt{andrea.ceni@unipi.it} \\
  \And
  David Kappel \\
  Faculty of Technology \\
  Bielefeld University, Germany \\
  \texttt{david.kappel@uni-bielefeld.de} \\
  \And
  Claudio Gallicchio \\
  Department of Computer Science \\
  University of Pisa, Italy \\
  \texttt{claudio.gallicchio@unipi.it} \\
}

\begin{document}

\providecommand{\FD}{frequency domain\xspace}   
\providecommand{\FDh}{frequency-domain\xspace}  
\providecommand{\SD}{spatial domain\xspace}     
\providecommand{\SDh}{spatial-domain\xspace}    
\providecommand{\FT}{Fourier transform\xspace}
\providecommand{\FTh}{Fourier-transformed\xspace}


\providecommand{\SNx}{N_{\text{\!x}}}               
\providecommand{\SNy}{N_{\text{\!y}}}               
\providecommand{\SNr}{N}                            
\providecommand{\FNa}{N_1}                          
\providecommand{\FNb}{N_2}                          
\providecommand{\Sx}[1]{\mathbf{x}_{#1}}            
\providecommand{\Sy}[1]{\mathbf{y}_{#1}}            
\providecommand{\Sr}[1]{\mathbf{r}_{#1}}            
\providecommand{\SWx}{\mathbf{W}_{\text{\!x}}}      
\providecommand{\SWr}{\mathbf{W}_{\text{\!r}}}      
\providecommand{\SWy}{\mathbf{W}_{\text{\!y}}}      
\providecommand{\Sb}{\mathbf{b}}                    
\providecommand{\Sxzp}{\mathbf{x}_{\text{zp}}}      
\providecommand{\SXzp}{\mathbf{X}_{\text{zp}}}      
\providecommand{\Sactfun}{\Phi}                     


\providecommand{\Fr}[1]{\hat{\mathbf{r}}_{#1}}          
\providecommand{\FR}[1]{\hat{\mathbf{R}}_{#1}}          
\providecommand{\Fwx}{\hat{\mathbf{w}}_{\text{\!x}}}    
\providecommand{\FWx}{\hat{\mathbf{W}}_{\text{\!x}}}    
\providecommand{\Fwr}{\hat{\mathbf{w}}_{\text{\!r}}}    
\providecommand{\FWr}{\hat{\mathbf{W}}_{\text{\!r}}}    
\providecommand{\Fb}{\hat{\mathbf{b}}}                  
\providecommand{\FB}{\hat{\mathbf{B}}}                  
\providecommand{\Fx}[1]{\hat{\mathbf{x}}_{#1}}          
\providecommand{\FX}[1]{\hat{\mathbf{X}}_{#1}}          
\providecommand{\Fxzp}{\hat{\mathbf{x}}_{\text{zp}}}    
\providecommand{\FXzp}{\hat{\mathbf{X}}_{\text{zp}}}    

\providecommand{\FWp}{\mathbf{W}_{\text{\!p}}}          
\providecommand{\Fp}[1]{\mathbf{p}_{#1}}                

\providecommand{\Factfun}{\hat{\Phi}}                   


\providecommand{\pack}[1]{\mathcal{P}\{#1\}}              
\providecommand{\fourier}[1]{\mathcal{F}\{#1\}}           
\providecommand{\invfourier}[1]{\mathcal{F}^{-1}\!\{#1\}} 

\maketitle

\begin{abstract}
  While the quadratic sequence-length bottleneck of transformers has fueled a resurgence in recurrent models, effectively capturing complex dynamics requires architectures that balance efficient training with highly expressive latent states. Echo State Networks (ESNs) offer a compelling approach by utilizing fixed recurrent weights to circumvent backpropagation through time, enabling a closed-form training solution. However, achieving the expressivity needed for complex tasks demands large reservoirs, exposing an $\mathcal{O}(\SNr^2)$ state-update bottleneck that prevents ESNs from matching the scale of contemporary recurrent models.
  To address this limitation, we introduce Frequency Domain Reservoir Computing (FRESCO), an ESN architecture operating entirely in the frequency domain while avoiding domain-shift overheads to achieve $\mathcal{O}(\SNr)$ complexity for dense, non-linear recurrent updates. By employing a novel dimensional zero-padding input embedding, a packed \FDh readout, and a natively applied \FDh non-linearity, FRESCO drastically reduces computational costs and energy consumption of training and inference.
  Furthermore, FRESCO matches the state-of-the-art predictive performance on memory benchmarks, sequential classification, and multivariate long-horizon forecasting, offering a scalable path forward for dense recurrent architectures.
\end{abstract}

\section{Introduction}
\label{sec:intro}

Sequence modeling underpins a rapidly growing set of machine learning applications, from language and audio processing to control. While transformer-based architectures currently dominate the landscape, their quadratic cost in the sequence length has renewed the interest in recurrent neural models capable of processing long sequences with bounded per-step cost. At the heart of this resurgence lies a question that has shaped recurrent computation since its origins: how to make recurrent dynamics \emph{expressive}, \emph{stable}, and \emph{cheap to evaluate}.

The recent wave of linear-scaling recurrent architectures, including structured state spaces \cite{gu2021efficiently, gu2020hippo, fu2022hungry}, their selective variants \cite{gu2023mamba, lahoti2026mamba}, and linear recurrent units \cite{orvieto2023resurrecting}, has shown that large recurrent states can be made practical when the per-step cost scales favorably with the state dimension.
This observation reframes the efficiency question for older recurrent paradigms: in order to remain competitive in modern sequence modeling, their per-step cost should scale sub-quadratically with the size of the recurrent state.

Reservoir Computing (RC), particularly the Echo State Network (ESN) \cite{jaeger2001echo, LUKOSEVICIUS2009127}, occupies a complementary position in this landscape. By keeping recurrent connectivity \emph{fixed} and training only a linear readout via convex ridge regression, ESNs completely circumvent backpropagation through time to provide a closed-form solution. They provide well-studied stability conditions rooted in the Echo State Property (ESP), and a hardware profile that is well suited for analog and neuromorphic substrates \cite{yan2024emerging}.

Despite these strengths, the per-step cost of a dense ESN scales as $\mathcal{O}(\SNr^2)$ with the number of reservoir neurons, dominated by a recurrent matrix-vector product. This bottleneck prevents ESNs from scaling to modern sequence modeling regimes, eroding the very efficiency advantage that originally motivated RC. The frequency domain offers a principled route to overcome this bottleneck. Because the 1D Discrete Fourier Transform (DFT) diagonalizes circulant weight matrices, the dense matrix-vector product collapses to an element-wise Hadamard product (circular convolution theorem). However, a naive \FDh ESN incurs additional overheads of domain transitions: an $\mathcal{O}(\SNr \log \SNr)$ Fast Fourier Transform (FFT) to ingest each input, and an inverse FFT before every readout. In the typical $\SNr \gg \SNx$ regime, these transforms dominate the computational cost, preventing the architecture from fully realizing the $\mathcal{O}(\SNr)$ efficiency.

In this paper, we introduce \emph{frequency domain reservoir computing} (FRESCO), an ESN architecture that operates \emph{entirely} in the frequency domain and is designed for efficiency. FRESCO circumvents the input transformation bottleneck by lifting the 1D circular convolution formulation to 2D, enabling a novel \emph{dimensional zero-padding} embedding. This allows an input of size $\SNx$ to be transformed using a single $\mathcal{O}(\SNx \log \SNx)$ FFT rather than a full $\SNr$-point FFT. By replacing the dense $\mathcal{O}(\SNr^2)$ recurrent product with element-wise Hadamard multiplications between \FDh state and weight representations, the per-step cost of the recurrent update drops to $\mathcal{O}(\SNr)$, even for a fully dense, non-linear reservoir. Finally, a \emph{packed \FDh readout} exploits the Hermitian symmetry of the reservoir state to yield a contiguous, real-valued vector. On this representation, ridge regression is provably equivalent to spatial-domain training, entirely removing the inverse FFT from both training and inference. Figure~\ref{fig:FRESCO} contrasts these three architectural advantages against a standard ESN.

Our main contributions are summarized as follows:

\begin{itemize}
\item \textbf{Frequency domain reservoir computing:} We introduce FRESCO, a dense, non-linear ESN architecture executed with $\mathcal{O}(\SNr)$ complexity natively in the \FD.
\item \textbf{Dimensional zero-padding:} We propose a novel input embedding strategy mapping low-dimensional inputs into the \FDh, eliminating standard transform bottlenecks.
\item \textbf{Packed \FDh readout:} We design a redundancy-free, memory-contiguous, and FFT-free state readout, provably equivalent to spatial-domain training and inference.
\item \textbf{High-efficiency:} We show that FRESCO drastically reduces computational and energy footprints while delivering competitive results against established sequence models.
\end{itemize}

\begin{figure}[h]
    \centering
    \resizebox{\linewidth}{!}{
    \input{figures/tikz_fresco_overview}
    }
\caption[FRESCO Overview]{FRESCO (top row) accelerates ESNs by operating natively in the \FD while actively bypassing standard FFT bottlenecks. In the \FD, dense matrix operations mathematically collapse into fast element-wise products ($\odot$), lowering end-to-end computational costs compared to dense and sparse standard ESNs (bottom row). Left: The introduced dimensional zero-padding minimizes input embedding overhead. Center: The recurrent state updates are directly executed via element-wise products, and non-linearities $\Factfun$ are applied in the \FD. Right: To match standard readout speeds, the novel packed \FDh readout computes $\Sy{t}$ directly from a memory-contiguous, non-redundant state, entirely avoiding inverse FFTs. Together, these mechanisms yield highly favorable time and memory scaling with increasing input size $\SNx$ and reservoir size $\SNr$ (central plots; $\SNr=256\SNx$; see Appendix~\ref{app:theory} for details). Notation (see Appendix~\ref{app:nomenclature} for a comprehensive list): complex ($\hat{\cdot}$), real (unaccented), vectors (bold lowercase), matrices (bold uppercase). Bias/leaking terms are omitted.}
\label{fig:FRESCO}
\end{figure}

\section{Background and related work}
\label{sec:rel_work}

This section provides the background and related work to frame our proposed FRESCO approach. We first review Echo State Networks, their computational characteristics, and the stability conditions that govern their dynamics. These aspects are important to inform and motivate the design of FRESCO. We then briefly discuss deep State Space Models (SSMs), a related line of work in sequence modeling that shares theoretical foundations with RC, but follows a fundamentally different training paradigm.

\paragraph{Echo State Networks.}
Echo State Networks (ESNs) are a RC paradigm characterized by a randomly connected, fixed hidden state layer and a trainable linear readout \cite{jaeger2001echo, jaeger2007optimization, LUKOSEVICIUS2009127}, thereby bypassing the vanishing gradient problems of traditional RNNs and enabling highly efficient, closed-form training  (typically ridge regression). 

Here, we refer to the standard leaky-neurons formulation of ESNs introduced in \cite{jaeger2007optimization}. For an input $\Sx{t} \in \mathbb{R}^{\SNx}$, a reservoir of $\SNr$ neurons updates its internal state $\Sr{t} \in \mathbb{R}^{\SNr}$ via:
\begin{equation}
\Sr{t} = (1-\tau) \, \Sr{t-1} + \tau \, \Sactfun\left( \SWx \Sx{t} + \SWr \Sr{t-1} + \Sb \right) \label{eq:esn-sd-update}
\end{equation}
where $\SWr \in \mathbb{R}^{\SNr \times \SNr}$, $\SWx \in \mathbb{R}^{\SNr \times \SNx}$, and $\Sb \in \mathbb{R}^{\SNr}$ are the fixed recurrent weights, input weights, and biases. The leaking rate is $\tau \in (0, 1]$, and $\Sactfun(\cdot)$ is an element-wise nonlinearity. The network output $\Sy{t} = \SWy \Sr{t} \in \mathbb{R}^{\SNy}$ is generated through the readout matrix $\SWy \in \mathbb{R}^{\SNy \times \SNr}$, which contains the only optimized parameters.

A dense ESN's per-step complexity is dominated by these matrix-vector multiplications, scaling at $\mathcal{O}(\SNr^2 + \SNr \SNx)$. Overcoming this quadratic bottleneck is the primary objective of FRESCO, while preserving the standard computational profile for both training and inference. A comparison of both architectures is shown in Figure~\ref{fig:FRESCO}.

A key stability requirement for ESNs is the Echo State Property (Definition~\ref{def:esp}), imposing a constraint on the recurrence weights (Proposition~\ref{prop:esp_esn}). In practice, reservoirs are initialized by drawing $\SWr$ at random and then rescaling its elements to enforce a desired value of its spectral radius $\rho(\SWr)$. This requires an $\mathcal{O}(\SNr^3)$ eigendecomposition, a bottleneck that scales poorly with reservoir size.

\begin{definition}[Echo State Property \citep{jaeger2001echo}]
\label{def:esp}
A reservoir system driven by a bounded input sequence $\{\Sx{t}\}_{t\geq 0}$ satisfies the \emph{Echo State Property} (ESP) if, for any two initial states $\Sr{0}$ and $\Sr{0}'$, the corresponding state trajectories satisfy $\|\Sr{t} - \Sr{t}'\| \to 0$ as $t \to \infty$. That is, the system asymptotically forgets its initial condition and provides an ``echo'' of the driving input signal.
\end{definition}

\begin{proposition}[ESP conditions for the ESN \citep{jaeger2001echo, LUKOSEVICIUS2009127}]
\label{prop:esp_esn}
For the ESN of Eq.~\eqref{eq:esn-sd-update} with a nonlinearity $\Sactfun$ of Lipschitz constant $L_{\Sactfun}\leq 1$ (e.g.\ $\tanh$):
\emph{(Sufficient)} a sufficient condition for the ESP is $\|\SWr\|_2 < 1$,
where $\|\cdot\|_2$ denotes the matrix $2$-norm (largest singular value $\sigma_{\max}(\SWr)$);
\emph{(Necessary)} a necessary condition for the ESP is $\rho(\SWr) \leq 1$.
\end{proposition}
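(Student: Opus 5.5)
For the sufficient part, the plan is to show that the state-update map is a contraction in $\Sr{}$, uniformly in the input. Fix an input sequence and run two trajectories $\Sr{t}$ and $\Sr{t}'$ from different initial states. Subtracting the two instances of Eq.~\eqref{eq:esn-sd-update}, the input and bias terms appear identically inside $\Sactfun$, so
\[
\|\Sr{t}-\Sr{t}'\| \le (1-\tau)\|\Sr{t-1}-\Sr{t-1}'\| + \tau\,\|\Sactfun(\mathbf{a}+\SWr\Sr{t-1}) - \Sactfun(\mathbf{a}+\SWr\Sr{t-1}')\|,
\]
where $\mathbf{a}=\SWx\Sx{t}+\Sb$. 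Since $\Sactfun$ acts element-wise with Lipschitz constant $L_{\Sactfun}\le 1$, it is $1$-Lipschitz in the Euclidean norm. Hence the second term is at most $\tau\|\SWr\|_2\|\Sr{t-1}-\Sr{t-1}'\|$. This gives a contraction factor $q=(1-\tau)+\tau\|\SWr\|_2<1$ for $\tau\in(0,1]$. Iterating yields $\|\Sr{t}-\Sr{t}'\|\le q^t\|\Sr{0}-\Sr{0}'\|\to 0$, which is the ESP of Definition~\ref{def:esp}. This part is routine.

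For the necessary part, I would argue by contrapositive in the standard way of \citep{jaeger2001echo}, taking the zero input and assuming $\Sactfun(0)=0$ with $\Sactfun'(0)=1$ (as for $\tanh$) and $\Sb=0$. The origin is then a fixed point of the autonomous system. Linearizing there gives the Jacobian $(1-\tau)I+\tau\SWr$, whose eigenvalues are $(1-\tau)+\tau\lambda$ for the eigenvalues $\lambda$ of $\SWr$. If $\rho(\SWr)>1$, I would first try to show that some such eigenvalue has modulus greater than $1$; that makes the origin linearly unstable. Trajectories started near $0$ but displaced along the unstable eigendirection leave any small neighborhood, while the trajectory started at $0$ stays there. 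Two initial conditions therefore fail to converge to each other, violating the ESP. This shows $\rho(\SWr)\le 1$ is necessary.

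The main obstacle lies in the necessary direction. First, $|1-\tau+\tau\lambda|>1$ is not implied by $|\lambda|>1$ for every $\lambda$ on the complex circle; for a non-real $\lambda$, the leak can pull it inside. So I would state the condition for $\tau=1$, which is the setting of the cited result, or interpret it as applying to the effective matrix $(1-\tau)I+\tau\SWr$. Second, the claim genuinely depends on the zero input and on the zero fixed point. Going from linear instability to non-convergence of the nonlinear trajectories needs a stable/unstable manifold argument: perturbations along the unstable direction grow to a non-vanishing distance, which contradicts the ESP. I would cite this from \citep{jaeger2001echo} rather than reprove it.
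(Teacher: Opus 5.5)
Your argument is correct and follows the same route as the paper. The paper only cites this proposition. Its proofs of the FRESCO analogues in Appendix~\ref{app:esp}, however, are exactly your contraction estimate (their $\kappa=(1-\tau)+\tau\alpha\max_{i,j}|\FWr[i,j]|$ plays the role of your $q$) and your linearization at the zero-input, zero-bias fixed point.

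Your caveat on the necessary part is well founded: once $\tau<1$, the necessary half of the statement is false as written.
\begin{itemize}
\item Take $\SNr=1$, $\SWr=-1.1$, $\tau=\tfrac12$, $\Sactfun=\tanh$. The derivative of the update in $\Sr{t-1}$ is $\tfrac12-0.55\tanh'(\cdot)\in[-0.05,\tfrac12)$. So the map is a $\tfrac12$-contraction for every input, and the ESP holds although $\rho(\SWr)=1.1$.
\item This example also shows the leak pulls in more than non-real eigenvalues. It can move inside the unit circle any $\lambda$ with $|\lambda|>1$ and $\operatorname{Re}\lambda<1$, real ones included.
\item Similarly, $\Sactfun=\tfrac12\tanh$ with $\tau=1$ and $\SWr=1.5$ gives a contraction. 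So the hypotheses you add ($\Sactfun(0)=0$, $\Sactfun'(0)=1$, zero bias, admissible null input) are genuinely required, not just convenient.
\end{itemize}
What the linearization actually yields is $\rho\bigl((1-\tau)I+\tau\SWr\bigr)\le 1$, which coincides with $\rho(\SWr)\le 1$ only at $\tau=1$. The paper's own argument in Appendix~\ref{app:esp-necessary} makes the same unstated $\tau=1$ reduction when it passes from $\rho(J_0)\le 1$ to Eq.~\eqref{eq:esp-necessary}.

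The step you defer, that linear instability rules out the ESP, is also only asserted in the paper. One correction to your wording: an excursion out of a small neighbourhood does not by itself contradict the ESP, because the orbit could later return and converge to $0$. What the manifold argument really supplies is this: orbits converging to $0$ eventually lie on a local center-stable manifold of positive codimension. Hence, e.g.\ for invertible $\SWr$, the set of such initial states cannot fill a neighbourhood of the origin.
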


Having established the computational challenges of standard ESNs, the primary motivation of FRESCO, we now briefly survey State Space Models, a parallel line of work that addresses sequence modeling from a deep learning perspective.

\paragraph{State Space Models.} 
Deep state space models (SSMs) are an alternative architecture that was introduced to overcome the vanishing gradient problem in traditional recurrent neural networks \cite{voelker2019legendre}. This was achieved by replacing the nonlinear recurrence in RNNs with a linear projection that is optimized for long memory time constants \cite{voelker2019legendre, gu2020hippo, fu2022hungry}. A number of improvements of SSMs have been published, including the use of the FFT to increase learning speed \cite{gu2021efficiently, chilkuri2021parallelizing}. Modern SSMs reach state of the art performance on complex tasks including natural language modeling \cite{lahoti2026mamba, gu2023mamba}. ESNs and SSMs are built on top of related principles and theoretical considerations regarding stability and long memory constant. Recent analyses have further sharpened the theoretical bridge between ESNs and deep SSMs: both families can be cast as kernel machines over impulse responses, with ESNs realizing high-dimensional random kernels and SSMs realizing structured, optimized ones \cite{singh2025echo}.
SSMs are typically trained end-to-end using backpropagation through time, which enables the recurrent weights to be included in the training process. In contrast, ESNs usually maintain fixed recurrence and only train the readout using convex optimization, which substantially increases training speed at the expense of task performance. Our proposed FRESCO architecture preserves the fixed-recurrence architecture and ultra-fast, closed-form readout optimization of traditional ESNs, and furthermore achieves the highly favorable sub-quadratic scaling characteristic of modern SSMs.

\section{FRESCO}
\label{sec:fresco}

The core innovation of FRESCO is to i) reformulate the standard ESN of Eq.~\eqref{eq:esn-sd-update} in the \FD to exploit the inherent efficiencies of \FDh processing techniques, yet ii) systematically circumvent costly FFTs usually associated with these approaches. 
The FRESCO concept is outlined and compared to a standard ESN in Figure~\ref{fig:FRESCO}. It comprises a novel method for efficiently embedding arbitrary inputs into the \FD, executing the recurrent updates in the \FD, and performing 
readouts without requiring subsequent FFTs on the reservoir state. Combined, this reduces the per-step computational complexity to scale strictly linearly with the reservoir size.

As a foundational baseline for the reformulation in \FD, we first consider the case where weight matrices of Eq.~\eqref{eq:esn-sd-update} are constrained to be circulant.

\begin{definition}[Circulant matrix]
A matrix $\mathbf{W} \in \mathbb{R}^{M \times M}$ is called a circulant matrix if each row is a cyclic right shift of the row above it. It is fully specified by its first column, denoted as $\mathbf{w} \in \mathbb{R}^M$. Consequently, the elements of \,$\mathbf{W}$ satisfy $W_{i,j} = w_{(i-j) \bmod M}$.
\end{definition}

The profound advantage of this constraint is that circulant matrices are natively diagonalized by the DFT, enabling the circular convolution theorem (Lemma~\ref{lem:circular_conv}). 

\begin{lemma}[Circular convolution theorem \citep{oppenheim1999discrete}]
\label{lem:circular_conv}
Let $\mathbf{W} \in \mathbb{R}^{M \times M}$ be a circulant matrix with first column $\mathbf{w} \in \mathbb{R}^M$, and let $\mathbf{x} \in \mathbb{R}^M$ be an arbitrary vector. The matrix-vector product $\mathbf{y} = \mathbf{W} \mathbf{x}$ is equivalent to the circular convolution $\mathbf{y} = \mathbf{w} \star \mathbf{x}$. In the \FD, this operation is given by
\begin{equation}
    \mathcal{F}(\mathbf{y}) = \fourier{\mathbf{w}} \odot \fourier{\mathbf{x}}
\end{equation}
where $\fourier{\cdot}$ denotes the DFT and $\odot$ represents the element-wise (Hadamard) product.
\end{lemma}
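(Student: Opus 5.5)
The plan has two parts. First, read the matrix-vector product entrywise to identify it with circular convolution. Second, compute the DFT of that convolution directly and use periodicity of the complex exponential to factor it.

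\textbf{Step 1: identify $\mathbf{W}\mathbf{x}$ with a circular convolution.} By the definition of a circulant matrix, $W_{i,j} = w_{(i-j)\bmod M}$. Hence
\[
y_i = \sum_{j=0}^{M-1} W_{i,j} x_j = \sum_{j=0}^{M-1} w_{(i-j)\bmod M}\, x_j .
\]
This is exactly the definition of the circular convolution $(\mathbf{w}\star\mathbf{x})_i$, which gives the first claim.

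\textbf{Step 2: compute the DFT of the convolution.} Use the convention $\fourier{\mathbf{v}}_k = \sum_{n=0}^{M-1} v_n \,\omega^{kn}$ with $\omega = e^{-2\pi \mathrm{i}/M}$. Then
\[
\fourier{\mathbf{y}}_k = \sum_{i}\sum_{j} w_{(i-j)\bmod M}\, x_j\, \omega^{ki}.
\]
Swap the two finite sums. For each fixed $j$, substitute $m = (i-j)\bmod M$. As $i$ runs over $\{0,\dots,M-1\}$, so does $m$, because the map is a bijection of $\mathbb{Z}_M$.

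\textbf{The one delicate point.} We need $\omega^{ki} = \omega^{k(m+j)}$ even when the index wraps, i.e.\ when $i = m+j-M$. This holds because $\omega^{kM}=1$, so $\omega^{k\cdot}$ is $M$-periodic. This is the only place where circularity matters, and it is where the statement would fail for a non-circulant (e.g.\ Toeplitz) matrix.

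\textbf{Step 3: factor.} With the substitution, the double sum factors as
\[
\Big(\sum_m w_m \omega^{km}\Big)\Big(\sum_j x_j \omega^{kj}\Big) = \fourier{\mathbf{w}}_k\,\fourier{\mathbf{x}}_k .
\]
Since this holds for every $k$, we get $\fourier{\mathbf{y}} = \fourier{\mathbf{w}}\odot\fourier{\mathbf{x}}$.

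An equivalent matrix formulation is $\mathbf{W} = \mathbf{F}^{-1}\operatorname{diag}(\mathbf{F}\mathbf{w})\,\mathbf{F}$, where $\mathbf{F}$ is the DFT matrix. This follows because each Fourier column $(\omega^{-kn})_n$ is an eigenvector of $\mathbf{W}$ with eigenvalue $\fourier{\mathbf{w}}_k$, and the same reindexing argument verifies that.

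The argument is routine. The main obstacle is only bookkeeping: fixing the DFT sign and normalization convention, and handling the modular index wraparound. Both are settled by the periodicity of $\omega$. The result does not depend on the normalization convention, provided the forward transform carries no $1/M$ factor. If it did, a factor of $M$ would appear and would need to be stated.
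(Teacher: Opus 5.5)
Your proof is correct, and it is the standard textbook derivation: the paper gives no proof of its own and simply cites \citep{oppenheim1999discrete}, whose argument is the same as yours, namely reading $W_{i,j}=w_{(i-j)\bmod M}$ entrywise and then reindexing with $m=(i-j)\bmod M$, which is justified by the $M$-periodicity of $\omega^{k\cdot}$. One correction to your closing remark: a forward normalization $c$ turns the identity into $\mathcal{F}(\mathbf{y})=c^{-1}\,\mathcal{F}(\mathbf{w})\odot\mathcal{F}(\mathbf{x})$, so the unitary $1/\sqrt{M}$ convention (the one the paper invokes for its ridge-regression equivalence) also breaks the identity, introducing a factor $\sqrt{M}$; the lemma as written therefore presupposes the unnormalized forward DFT you used, as does the paper's reading of the frequency-domain weight entries as eigenvalues of the circulant matrix.
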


Lemma~\ref{lem:circular_conv} thus allows us to replace the costly matrix-vector multiplications 
of Eq.~\eqref{eq:esn-sd-update} with element-wise complex multiplications. This yields the leaky ESN state update in the \FD:

\begin{equation}
\label{eq:esn-fd-update-1D}
\Fr{t} = (1-\tau) \, \Fr{t-1} +  \tau\, \Factfun\left(\Fwx \odot \Fx{t} + \Fwr \odot \Fr{t-1} + \Fb \right),
\end{equation}
where $\Fr{}$, $\Fx{}$, and $\Fb$ are the \FD vector representations of the reservoir state, input data, and bias, respectively. $\Fwx$ and $\Fwr$ denote the \FTh first columns of circulant weight matrices $\SWx$ and $\SWr$. $\Factfun$ represents a non-linear activation function applied in \FD.

For reasons elaborated upon below, the FRESCO approach generalizes the 1D circular convolution formulation of Eq.~\eqref{eq:esn-fd-update-1D} to a 2D convolution formulation given in Eq.~\eqref{eq:esn-fd-update-2D}, as follows:

\begin{equation}
\label{eq:esn-fd-update-2D}
\FR{t} = (1-\tau) \, \FR{t-1} +  \tau\, \Factfun\left(\FWx \odot \FX{t} + \FWr \odot \FR{t-1} + \FB \right).
\end{equation}
Here, $\FR{}$, $\FX{}$, $\FB$, $\FWr$, and $\FWx$ are complex-valued matrices in $\mathbb{C}^{\FNa \times \FNb}$, representing the 2D \FDh reservoir state, input data, bias, and weights. The dimensions $\FNa$ and $\FNb$ are determined by the input size and the total number of reservoir neurons, as explained in the following.

\paragraph{Input embedding.}
While Eq.~\eqref{eq:esn-fd-update-1D} yields an elegant $\mathcal{O}(\SNr)$ update, transforming a lower-dimensional input $\Sx{t} \in \mathbb{R}^{\SNx}$ into the \FD $\Fx{t} \in \mathbb{C}^{\SNr}$ introduces computational overhead we seek to minimize. 
While zero-padding is a standard technique for matching dimensionalities with static convolution kernels in \FD, applying it to the dynamic input $\Sx{t}$ necessitates computing an $\SNr$-point DFT at every time step, incurring an $\mathcal{O}(\SNr \log \SNr)$ cost.

To circumvent this, FRESCO generalizes the 1D circular convolutions of Eq.~\eqref{eq:esn-fd-update-1D} to 2D circular convolutions. By restructuring the reservoir state into 2D matrices (Eq.~\eqref{eq:esn-fd-update-2D}), we enable a highly efficient strategy called dimensional zero-padding. Rather than padding the 1D input along its existing axis, we embed $\Sx{t}$ by appending zeros along a new second dimension to form a padded matrix $\SXzp$. As formally proved in Theorem~\ref{thm:dimensional_padding} (Appendix~\ref{app:proof-dim-zero-pad}), the 2D DFT $\FX{} = \fourier{\SXzp}$ can be expressed solely by the $\SNx$-point DFT $\Fx{t} = \fourier{\Sx{t}}$. The embedding in Eq.~\eqref{eq:esn-fd-update-2D} therefore becomes:

\begin{equation}
\FWx \odot \FX{t} = \FWx \odot \Fx{t} = \FWx \odot  \fourier{\Sx{t}}, \label{eq:embedding_dim_zeropadding}
\end{equation}

where the column vector $\Fx{t}$ is broadcasted for element-wise multiplication with the columns of the 2D input weights $\FWx$ (Figure~\ref{fig:FRESCO}, left). This reduces the embedding complexity to $\mathcal{O}(\SNx \log \SNx)$ while perfectly preserving the functional goal of the convolutions. Timing experiments (Appendix~\ref{app:dim-zero-pad}) demonstrate its superior empirical efficiency over alternative methods.

Intuitively, this strategy folds the standard 1D reservoir into a 2D spatial grid of dimensions $\SNx \times \frac{\SNr}{\SNx}$, where the first axis is determined by the input size $\SNx$. By utilizing a real-to-complex 2D FFT representation (RFFT), FRESCO exploits conjugate symmetry to compactly embed the \FDh representations within the space $\mathbb{C}^{\FNa \times \FNb}$, where $\FNa = \lfloor \SNx/2 \rfloor + 1$ and $\FNb=\SNr/\SNx$ (assuming $\SNr$ is a multiple of $\SNx$). The packed \FDh readout procedure (detailed below) inherently resolves any minor redundancies remaining in this compressed representation.

\paragraph{Recurrence weights.}
In standard ESNs, bounding the spectral radius of $\SWr$ to satisfy the Echo State Property (ESP) requires an expensive $\mathcal{O}(\SNr^3)$ decomposition. FRESCO bypasses this bottleneck by sampling the weight matrix $\FWr$ directly in the \FD. This yields a profound theoretical advantage: the \FDh weight elements correspond exactly to the eigenvalues of the convolution operator. Because the recurrence term $\FWr \odot \FR{}$ in Eq.~\eqref{eq:esn-fd-update-2D} relies on element-wise multiplication, its eigenvectors are simply the standard basis matrices $\mathbf{E}^{(i,j)}$. The eigenvalues naturally emerge from $\FWr \odot \mathbf{E}^{(i,j)} = \FWr[i,j] \cdot \mathbf{E}^{(i,j)}$ as the individual elements $\FWr[i,j]$.
Treating $\mathbb{C}^{\FNa \times \FNb}$ as a Euclidean space under the Frobenius norm, the spectral radius of the Hadamard operator $\mathcal{L}(\cdot) = \FWr \odot (\cdot)$,
and hence of the spatial recurrent matrix $\SWr$ whose eigenvalues are precisely
the entries of $\FWr$, is therefore:
\begin{equation}
\rho(\mathcal{L}) = \max_{i,j} |\FWr[i,j]| = \rho(\SWr),
\label{eq:spectral-radius-hadamard}
\end{equation}
directly readable from the entries of $\FWr$ at zero cost, with no eigendecomposition. 
Consequently, sampling the recurrence weights directly in the \FD enables explicit shaping of the eigenvalue spectrum, such as constraining the complex eigenvalues $\FWr[i,j]$ to a specific radial ring. This grants FRESCO fine-grained control over the reservoir's temporal properties at zero marginal cost. Since these parameters are strictly initialized and maintained in the \FD, their spatial representation is never explicitly computed.

\paragraph{Non-linearity.}
We apply the complex activation function 
\begin{equation}
\operatorname{invabs}(z) = \frac{\alpha z}{1+|z|}
\label{eq:invabs}
\end{equation}
 element-wise to the \FDh reservoir state, where the scaling factor $\alpha$ is a tunable hyperparameter. This introduces the required non-linearity and strictly bounds the magnitude of the state components below $\alpha$. Because the denominator is strictly real and positive, this operation preserves the phase of $z$. 
 Moreover, $\operatorname{invabs}$ admits a Lipschitz constant equal to $\alpha$, a property that enables the ESP characterization derived below for FRESCO.
 However, purely point-wise recurrence inherently isolates the dynamics of each frequency bin, which can restrict overall network expressivity.

To address this limitation and systematically investigate the impact of cross-frequency coupling, we introduce two variants of the FRESCO architecture, denoted as FRESCO (plain) and FRESCO (mix). FRESCO (plain) applies the non-linearity $\Factfun(z) = \operatorname{invabs}(z)$, Eq.~\eqref{eq:invabs}. FRESCO (mix) introduces a structural mixing mechanism. In addition to the $\operatorname{invabs}$ activation, we apply a discrete circular shift to the flattened pre-activation matrix. By establishing a single-cycle ring topology directly within the \FD, feeding the state of vector bin $i$ into the adjacent bin $i+1$, we elegantly couple the frequency components without sacrificing the optimal scaling of point-wise \FDh operations. 
This choice is principled, as it provides deterministic cross-frequency coupling without introducing learnable parameters or additional computational overhead.
By the Fourier shift theorem, this shift induces a global, complex phase modulation in the spatial-domain state. Upon activation, this modulation drives rich, cross-frequency harmonic mixing. 
In both variants, the subsequent packed \FDh readout (discussed in the following section) implicitly projects the state back to the real domain by extracting only the components corresponding to a real-to-complex representation of the \FDh reservoir.

\paragraph{Readout and training.}
The reservoir output $\Sy{t}$ is typically defined as a real-valued vector in the \SD. A naive readout strategy would require mapping the complex \FDh state $\FR{t}$ back to the spatial domain via an inverse FFT at each time step: $\Sr{t} = \text{vec}(\invfourier{\FR{t}})$. FRESCO bypasses this costly transformation by introducing a packed \FDh readout. This mechanism executes training (via standard ridge regression) and inference (via a real-valued linear readout) natively on the \FDh representation, completely eliminating computational overhead while maintaining mathematical equivalence to a training process performed explicitly on the inverse-transformed reservoir states.

This equivalence stems from the unitary nature of the normalized DFT: performing ridge regression on the flattened complex state $\text{vec}(\FR{})$ is mathematically identical to training on the spatial states (Proposition~\ref{prop:unitary_equiv}, Appendix~\ref{app:ridge_invariance}). However, directly utilizing raw \FDh states is suboptimal, as their inherent Hermitian symmetries retain redundant parameters and compel inefficient complex arithmetic for an ultimately real-valued target. This structural inefficiency persists whether operating natively on complex numbers, or simply reinterpreting the state as interleaved real values. To resolve this, we introduce a packing operation $\pack{\cdot}$ that extracts the non-redundant degrees of freedom from $\FR{t}$ into a dense, real-valued vector $\Fp{t} = \pack{\FR{t}} \in \mathbb{R}^N$ in a highly efficient, memory-contiguous way. The final readout is simply:
\begin{equation}
\Sy{t} = \FWp \cdot \Fp{t},
\label{eq:fd-readout}
\end{equation}

where $\FWp$ is the learned weight matrix, matching the exact trainable parameter count of a standard ESN. Because $\pack{\cdot}$ acts as a scaled unitary operation (formally proved in Proposition~\ref{prop:unitary_equiv}, Appendix~\ref{app:unitarity_packed}), ridge regression on $\Fp{}$ remains mathematically equivalent to training on the spatial states, up to a global scaling factor absorbed by the regularization term (Corollary~\ref{cor:scaled_unitary}, Appendix~\ref{app:ridge_invariance}).

For a hardware-efficient implementation of $\pack{\cdot}$, we represent $\FR{}$ using a custom axis-reordered 2D RFFT format. Standard 2D RFFT formats scatter non-redundant frequency components across non-contiguous memory segments, necessitating costly copying or strided memory gathering. Our custom format consolidates these components into a single contiguous block (Appendix~\ref{app:packed_readout}), extracting $\Fp{}$ using a hardware-efficient memory view. Consequently, FRESCO's per-step readout complexity strictly matches the standard ESN approach  as demonstrated in Figure~\ref{fig:FRESCO}.

\paragraph{Echo State Property.}
Two ingredients enable a complete ESP characterization for both FRESCO variants, in analogy with Proposition~\ref{prop:esp_esn}: the identity $\rho(\mathcal{L}) = \max_{i,j}|\FWr[i,j]|$ (Eq.~\eqref{eq:spectral-radius-hadamard}), which makes the spectral radius directly readable from $\FWr$, and the Lipschitz constant $L_{\Factfun} = \alpha$. We provide a sufficient and a necessary condition below.

\begin{proposition}[Sufficient condition for FRESCO ESP]
\label{prop:esp_sufficient}
If $\;\alpha \cdot \max_{i,j}\lvert\FWr[i,j]\rvert < 1$, then FRESCO satisfies the Echo State Property.
\end{proposition}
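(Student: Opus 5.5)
The plan is a contraction argument in the Frobenius norm on $\mathbb{C}^{\FNa \times \FNb}$. We take two trajectories $\FR{t}$ and $\FR{t}'$ driven by the same bounded input sequence from different initial states. We then bound $\Delta_t = \|\FR{t} - \FR{t}'\|_F$ recursively and show that $\Delta_t \le \kappa^t \Delta_0$ for some $\kappa<1$. Then $\Delta_t \to 0$. Because the normalized DFT (and the packing $\pack{\cdot}$) is a scaled unitary map, as established for the readout, convergence in the frequency domain is equivalent to convergence of the spatial states. Definition~\ref{def:esp} is therefore satisfied in whichever representation it is read.

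\textbf{Step 1: Lipschitz bound on the pre-activation.} Write $\mathbf{A}_t = \FWx \odot \FX{t} + \FWr \odot \FR{t-1} + \FB$, and define $\mathbf{A}_t'$ the same way with $\FR{t-1}'$ in place of $\FR{t-1}$. The input and bias terms cancel in the difference, so $\mathbf{A}_t - \mathbf{A}_t' = \FWr \odot (\FR{t-1} - \FR{t-1}')$. Since the Hadamard operator is diagonal in the standard basis, as used for Eq.~\eqref{eq:spectral-radius-hadamard}, its operator norm equals its spectral radius. This gives $\|\mathbf{A}_t - \mathbf{A}_t'\|_F \le \rho\,\Delta_{t-1}$ with $\rho = \max_{i,j}|\FWr[i,j]|$.

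For FRESCO (mix), the circular shift of the flattened pre-activation is a permutation, hence an isometry in Frobenius norm, and the same bound holds. Note that it is the mixing form, not the plain form, where one must check the order of composition. The input term is shifted too, but it cancels in the difference either way.

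\textbf{Step 2: Lipschitz bound on $\operatorname{invabs}$.} We use $L_{\Factfun}=\alpha$ elementwise, i.e.\ $|\operatorname{invabs}(z)-\operatorname{invabs}(w)| \le \alpha|z-w|$ on $\mathbb{C}$. This is the step most deserving of care. The paper asserts it, but a quick verification is to note that $g(z) = z/(1+|z|)$ is the radial map $r \mapsto r/(1+r)$. Its radial derivative is $1/(1+r)^2 \le 1$ and its tangential stretch is $1/(1+r) \le 1$. So the real Jacobian has operator norm at most $1$, and on the convex domain $\mathbb{C}$ this yields global Lipschitz constant $1$ for $g$, hence $\alpha$ for $\operatorname{invabs}$. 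Summing the squared elementwise bounds gives $\|\Factfun(\mathbf{A}_t) - \Factfun(\mathbf{A}_t')\|_F \le \alpha\rho\,\Delta_{t-1}$.

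\textbf{Step 3: Leaky combination.} By the triangle inequality,
\begin{equation*}
\Delta_t \le (1-\tau)\Delta_{t-1} + \tau\,\alpha\rho\,\Delta_{t-1} = \bigl(1-\tau(1-\alpha\rho)\bigr)\Delta_{t-1}.
\end{equation*}
Set $\kappa = 1-\tau(1-\alpha\rho)$. Since $\tau\in(0,1]$ and $\alpha\rho<1$, we have $0 \le \kappa < 1$. Iterating gives geometric convergence of $\Delta_t$ to zero, uniformly over the input sequence, which concludes the argument.
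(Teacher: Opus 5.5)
Your proposal is correct and follows the paper's proof essentially step for step. It uses the same Frobenius-norm contraction with $\kappa=(1-\tau)+\tau\alpha\max_{i,j}|\FWr[i,j]|$, the same radial/tangential Jacobian bound giving $L_{\Factfun}=\alpha$, the same bound of the Hadamard term by its largest entry, and the same observation that the cyclic shift in FRESCO (mix) is a Frobenius isometry. Your extra remark transferring convergence to the spatial domain is unnecessary, since the paper concludes directly on the frequency-domain state, and it overstates things slightly. The stored state, especially after mixing, need not be exactly Hermitian-symmetric. Packing then acts as a bounded projection rather than a unitary map, so only the implication from frequency-domain convergence to packed-state convergence holds, not an equivalence; that implication is all you need.
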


The condition holds for both variants; see Appendix~\ref{app:esp-sufficient} for the proof.

\begin{proposition}[Necessary condition for FRESCO ESP]
\label{prop:esp_necessary}
A necessary condition for FRESCO to satisfy the Echo State Property is
\begin{equation}
\left(\,\prod_{i=1}^{\FNa}\prod_{j=1}^{\FNb} |\FWr[i,j]|\right)^{1/\SNr} \leq \frac{1}{\alpha}.
\label{eq:esp-necessary}
\end{equation}
\end{proposition}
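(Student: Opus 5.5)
The plan is to argue by contraposition, in the spirit of the necessary part of Proposition~\ref{prop:esp_esn}. Assume the geometric mean in Eq.~\eqref{eq:esp-necessary} exceeds $1/\alpha$. I would exhibit one admissible bounded input sequence and two initial states whose trajectories do not converge. The natural choice is the zero input $\Sx{t}\equiv \mathbf{0}$, together with $\FB=\mathbf{0}$ (or whatever bias leaves the origin fixed). Since $\operatorname{invabs}(0)=0$ and the circular shift of FRESCO (mix) is linear, $\FR{}=\mathbf{0}$ is then a fixed point of Eq.~\eqref{eq:esn-fd-update-2D} for both variants. The ESP forces every trajectory to converge to the trajectory starting at this fixed point, i.e.\ to $\mathbf{0}$. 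So it suffices to show that $\mathbf{0}$ is linearly unstable, so that arbitrarily small perturbations are not attracted back.

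\textbf{Linearization at the origin.} I would view the state as a real vector in $\mathbb{R}^{\SNr}$ through the packing $\pack{\cdot}$, which is a scaled unitary map (Proposition~\ref{prop:unitary_equiv}). This lets me work with a genuine real Jacobian. The real-linear derivative of $\operatorname{invabs}$ at $z=0$ is multiplication by $\alpha$: indeed $\alpha z/(1+|z|)=\alpha z+O(|z|^2)$. The Jacobian of the update at $\mathbf{0}$ is therefore
\[
\mathbf{J}=(1-\tau)\mathbf{I}+\tau\,\alpha\,\mathbf{P}\,\mathbf{D}.
\]
Here $\mathbf{D}$ is the real representation of the Hadamard operator $\FWr\odot(\cdot)$, and $\mathbf{P}$ is the identity for FRESCO (plain) or the circular-shift permutation for FRESCO (mix).

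\textbf{Determinant computation.} A complex entry $w$ acts on its two real coordinates with determinant $|w|^2$. Self-conjugate (real) bins contribute $|w|$, and the Hermitian redundancies are exactly what $\pack{\cdot}$ removes. Since $|\det\mathbf{P}|=1$, this gives $|\det(\alpha\mathbf{P}\mathbf{D})|=\alpha^{\SNr}\prod_{i,j}|\FWr[i,j]|$, with the product interpreted over the $\SNr$ real degrees of freedom as in Eq.~\eqref{eq:esp-necessary}. The spectral radius dominates the geometric mean of the eigenvalue moduli, $\rho(\mathbf{A})\ge|\det\mathbf{A}|^{1/\SNr}$. Hence the assumed violation gives $\rho(\alpha\mathbf{P}\mathbf{D})>1$.

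\textbf{Instability implies failure of the ESP.} With $\rho(\mathbf{J})>1$, the origin is an unstable fixed point. I would take $\FR{0}=\mathbf{0}$ and $\FR{0}'=\varepsilon\,\mathbf{v}$, where $\mathbf{v}$ lies along an unstable eigendirection (or its real two-dimensional invariant subspace). While the state stays in a small neighbourhood, the quadratic remainder of $\operatorname{invabs}$ is dominated by the linear growth. The perturbed trajectory therefore leaves some fixed ball around $\mathbf{0}$ infinitely often, or at least does not tend to $\mathbf{0}$, whereas the first trajectory stays at $\mathbf{0}$. This contradicts Definition~\ref{def:esp}.

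\textbf{Main obstacle: the leaky case.} For $\tau=1$ the argument is complete, since $\mathbf{J}=\alpha\mathbf{P}\mathbf{D}$. For $\tau<1$, however, an eigenvalue $\mu$ of $\alpha\mathbf{P}\mathbf{D}$ with $|\mu|>1$ maps to $(1-\tau)+\tau\mu$, which need not have modulus exceeding $1$ when $\mu$ has large phase. This mirrors the subtlety in the classical leaky-ESN necessary condition. My first attempt would be to state and prove the result for the effective (non-leaky) recurrence operator, matching Proposition~\ref{prop:esp_esn}. Failing that, I would exploit the fact that $\mathbf{J}$ is still a convex combination of $\mathbf{I}$ and $\alpha\mathbf{P}\mathbf{D}$ and try to bound $|\det\mathbf{J}|$ from below. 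If that also fails, I would restrict the claim to the leak-free regime, or to spectra whose entries avoid the region where leaking can pull eigenvalues inside the unit disk.
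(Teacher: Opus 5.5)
Your skeleton (zero input and bias, the fixed point $\FR{}=\mathbf{0}$, the Jacobian $(1-\tau)I+\tau\alpha PD_{\FWr}$, and ``spectral radius above one contradicts the ESP'') is the paper's. The difference is the eigenvalue estimate. The paper uses that the shift is one full cycle, so $(PD_{\FWr})^{\SNr}=\bigl(\prod_{i,j}\FWr[i,j]\bigr)I$ and every eigenvalue of $PD_{\FWr}$ has modulus exactly the geometric mean. You use $\rho(A)\geq|\det A|^{1/\SNr}$, which holds for any permutation and so treats both variants at once. For FRESCO (plain) it gives only the geometric-mean bound, not the sharper $\max$ bound, but that is all the proposition claims.

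However, your packed setup breaks down for FRESCO (mix). $\pack{\cdot}$ is a scaled unitary map only on the Hermitian-symmetric subspace, and the cyclic shift does not preserve that subspace: the DC slot, which must be real, receives the last, generically non-real, entry of the flattened array. The mix state therefore leaves the subspace after one step, and the packing does not conjugate the update to any map on $\mathbb{R}^{\SNr}$. Your real matrix $\mathbf{P}\mathbf{D}$, with its count ``$|w|^2$ per conjugate pair, $|w|$ per self-conjugate bin'', does not describe the actual dynamics.

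Linearize on the complex array instead, as the paper does. Since $\operatorname{invabs}(z)=\alpha z+O(|z|^2)$, the Jacobian at $\mathbf{0}$ is complex-linear. For an array with $M$ complex entries, $|\det(\alpha PD_{\FWr})|=\alpha^{M}\prod_{i,j}|\FWr[i,j]|$, so $\rho(\alpha PD_{\FWr})\geq\alpha g$ with $g=\bigl(\prod_{i,j}|\FWr[i,j]|\bigr)^{1/M}$. The exponent $1/\SNr$ in Eq.~\eqref{eq:esp-necessary} corresponds to $M=\SNr$, which is the reading implicit in the paper's ``full $\SNr$-cycle''. For the $\FNa\times\FNb$ array indexed in that equation, the same argument gives $1/(\FNa\FNb)$. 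With this repair and $\tau=1$, your route is a valid alternative to the paper's.

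Your diagnosis of the leaky case is right, and it is a genuine gap: the fallbacks you list restrict the proposition rather than prove it. Your determinant bound discards what the cycle identity provides, namely that the eigenvalues of $\alpha PD_{\FWr}$ are the $M$ equally spaced roots on the circle of radius $\alpha g$, one of them within angle $\pi/M$ of the positive real axis. Yet even this does not justify the paper's closing sentence ``stability requires $\rho(J_0)\leq 1$, which yields Eq.~\eqref{eq:esp-necessary}'' once $\tau<1$. Take FRESCO (mix) with two entries $a$ and $-a$, $\alpha a=1.2$, and $\tau=\tfrac12$. The eigenvalues of $J_0$ are $\tfrac12\pm 0.6i$, of modulus $\sqrt{0.61}<1$, although $\alpha g>1$. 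More generally, for any $M$, a negative real $\prod_{i,j}\FWr[i,j]$ with $\alpha g$ exceeding $1$ by less than a margin of order $(1-\tau)/M^2$ keeps $\rho(J_0)<1$. So linearization at $\mathbf{0}$ yields at best the weaker condition $\rho\bigl((1-\tau)I+\tau\alpha PD_{\FWr}\bigr)\leq 1$. Both your argument and the paper's establish the stated bound only for $\tau=1$.

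Finally, ``leaves some fixed ball infinitely often'' does not follow from linear instability, because an orbit may leave and return. For $\tau=1$ you can close this with a nonlinear version of your determinant idea. Write $r_m^{(t)}$ and $w_m$ for the flattened entries of $\FR{t}$ and $\FWr$. With zero input and bias, in either variant,
$\sum_m\log|r_m^{(t+1)}|-\sum_m\log|r_m^{(t)}|=M\log(\alpha g)-\sum_m\log\bigl(1+|w_m|\,|r_m^{(t)}|\bigr)$.
Along a trajectory tending to $\mathbf{0}$ the last sum vanishes, so the log-sum would grow without bound, which is impossible. Hence if $\alpha g>1$, no trajectory with all entries nonzero converges to the zero trajectory.
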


This condition is derived for FRESCO (mix), where the circular-shift mixing creates a genuine gap with the sufficient condition. For FRESCO (plain) the necessary condition reduces to $\max_{i,j}|\FWr[i,j]|\leq 1/\alpha$, closing the gap. Proofs and a detailed discussion are provided in Appendix~\ref{app:esp-necessary}.

\section{Experiments}
\label{sec:experiments}

We evaluate FRESCO along three complementary axes using two proposed variants. FRESCO~(plain) keeps frequencies isolated, applying only the non-linear $\operatorname{invabs}$ activation (Eq.~\eqref{eq:invabs}), whereas FRESCO~(mix) introduces additional deterministic single-cycle mixing to drive cross-frequency memory interactions. First, we compare these variants against standard ESNs on classical RC regression benchmarks (NARMA10, Mackey-Glass \cite{jaeger2004harnessing}) to assess baseline predictive capabilities, dimensional scaling, and inference speeds. Second, we evaluate these same RC models on ten datasets from the UCR archive \cite{8894743} to test sequence-level discrimination. Finally, for multivariate long-horizon forecasting (ETT, Solar, Weather \cite{wu2021autoformer,lai2018modeling}), we shift our comparison from RC baselines to contemporary deep sequence models to investigate FRESCO's potential as a highly energy-efficient alternative for complex predictive tasks.

For all tasks and models, hyperparameters are tuned on a validation split via an Optuna TPE search with Hyperband pruning. The optimal configuration is then retrained on the combined train and validation data and evaluated on a held-out test set. We report the mean and standard deviation across multiple random initializations.
Full details regarding dataset properties, the exact train, validation, and test splits for each dataset, and hyperparameter search ranges are given in Appendix~\ref{app:experiments}.

\subsection{Classical RC regression benchmarks}
\label{sec:benchmark_regression}

For NARMA10, the input signal $x_t$ is sampled uniformly in $(0, 0.05)$, and the target sequence is generated by $ y_t =
0.3 y_{t-1}
+
0.05 y_{t-1} \sum_{i=1}^{10} y_{t-i}
+
1.5 x_{t-10} x_{t-1}
+
0.1 $.
This task requires the model to combine nonlinear computation with memory over the previous ten steps, allowing to test nonlinear temporal processing capabilities under controlled conditions. 
For Mackey-Glass, we use the chaotic time-delay system $ \frac{dx(t)}{dt}
=
\frac{0.2 x(t-17)}{1 + x(t-17)^{10}}
-
0.1 x(t) $ and evaluate 84-step-ahead forecasting, i.e. the model receives $x_t$ and predicts $x_{t+84}$.

Figure~\ref{fig:scaling_size} illustrates the quality-efficiency trade-off by plotting test NRMSE against end-to-end per-step inference time 
for varying reservoir sizes. In terms of predictive performance, FRESCO (mix) achieves the lowest overall error on Mackey-Glass and matches standard ESN configurations on NARMA10. It substantially outperforms FRESCO (plain) on both tasks, indicating that cross-frequency mixing is essential for capturing highly non-linear temporal dynamics.
Computationally, both FRESCO variants exhibit superior scaling behavior, running up to two orders of magnitude faster than dense ESNs for the largest reservoirs. 

%
%
\begin{figure}[t]
    \centering
    \begin{subfigure}[t]{0.48\linewidth}
        \centering
        \includegraphics[width=\linewidth]{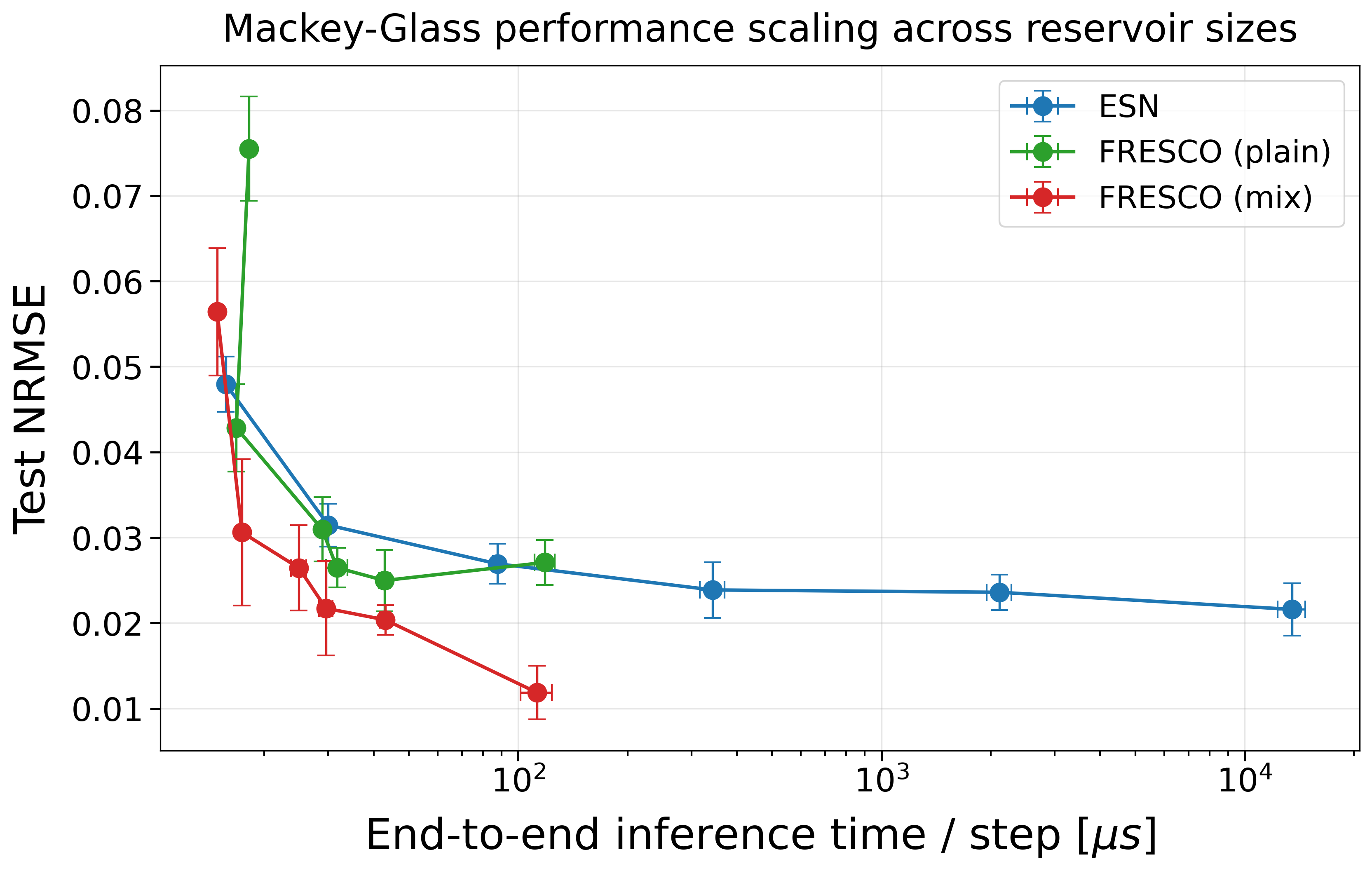}
        \label{fig:ett}
    \end{subfigure}
    \hfill
    \begin{subfigure}[t]{0.48\linewidth}
        \centering
        \includegraphics[width=\linewidth]{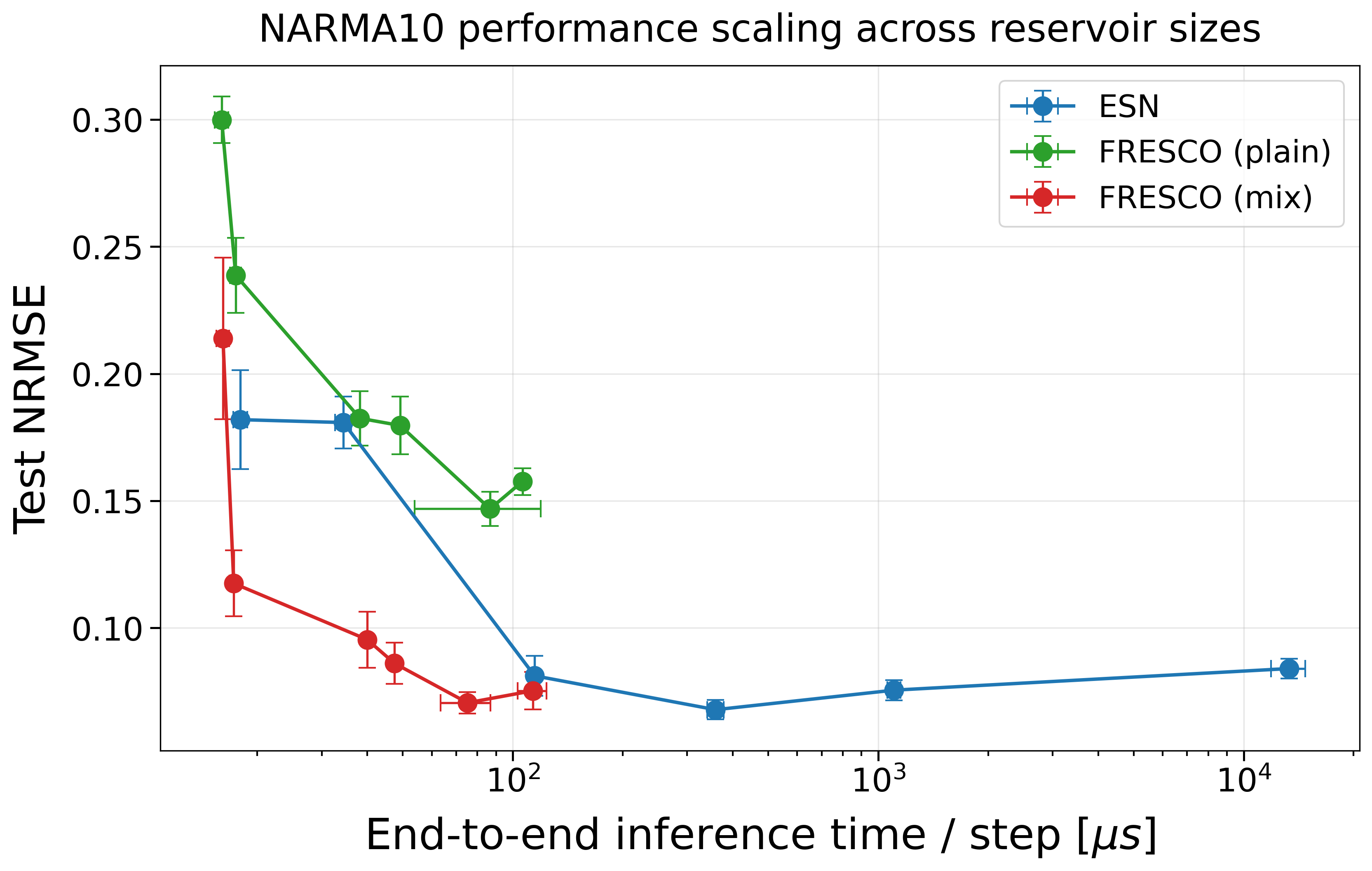}
        \label{fig:weather}
    \end{subfigure}
    \caption{Performance scaling for Mackey-Glass (left) and NARMA10 (right) for reservoir sizes $\in \{ 128, 256, 512, 1024, 2048, 4096 \}$. Plots show mean and standard deviation of NRMSE versus end-to-end execution time from input to output for a single time step across 20 random initializations.}
    \label{fig:scaling_size}
\end{figure}

\subsection{Time-series classification}
\label{sec:benchmark_classification}

We evaluate sequence-level discrimination on ten datasets from the UEA \& UCR repository \cite{8894743} (listed in Table~\ref{tab:reservoir_accuracy}) using the \texttt{aeon} library \cite{JMLR:v25:23-1444}. We use average pooling over the packed reservoir states to extract a single feature vector for the readout classifier (details in Appendix~\ref{app:classification_details}).

As detailed in Table~\ref{tab:reservoir_accuracy}, FRESCO achieves competitive predictive performance alongside substantial inference speedups over standard ESNs, reaching up to a factor of 26. The sole exception is the Libras dataset, where hyperparameter optimization selected a computationally light ESN ($\SNr = 256$) paired with a comparatively large FRESCO configuration ($\SNr = 2048$). As expected, the timing critical-difference (CD) diagram (Figure~\ref{fig:cd-accuracy}, bottom) confirms the general FRESCO speed advantage. Crucially, this efficiency comes at no statistical cost to performance: the accuracy CD diagram (Figure~\ref{fig:cd-accuracy}, top) groups all models into a single clique based on the standard Wilcoxon procedure~\cite{demvsar2006statistical}. Finally, the per-dataset results indicate that the optimal FRESCO variant is task-dependent.

\begin{figure*}[t]
    \centering

    \begin{minipage}[t]{0.49\textwidth}
        \centering
        \vspace{0pt}
        \captionof{table}{Mean classification test accuracy with standard deviation across ten random initializations. Best results are highlighted in bold. The speedup factor compares the inference time of the highest-accuracy FRESCO model against ESN.}
        \label{tab:reservoir_accuracy}
        \vspace{1em}
        \scriptsize
        \setlength{\tabcolsep}{2.2pt}
        \renewcommand{\arraystretch}{0.95}
        \resizebox{\linewidth}{!}{%
\begin{tabular}{lcccr}
\toprule
Dataset & ESN & FRESCO (plain) & FRESCO (mix) & Speedup \\
\midrule
Adiac & $0.648_{\pm 0.005}$ & \textbf{0.686}$_{\bf\pm 0.017}$ & $0.609_{\pm 0.013}$ & $2.7\times$ \\
FordA & $0.892_{\pm 0.003}$ & $0.900_{\pm 0.009}$ & \textbf{0.908}$_{\bf\pm 0.006}$ & $2.3\times$ \\
JapaneseVowels & $0.975_{\pm 0.001}$ & $0.962_{\pm 0.005}$ & \textbf{0.988}$_{\bf\pm 0.004}$ & $9.3\times$ \\
Libras & \textbf{0.831}$_{\bf\pm 0.008}$ & $0.764_{\pm 0.011}$ & $0.778_{\pm 0.008}$ & $0.1\times$ \\
Lightning2 & $0.674_{\pm 0.009}$ & $0.711_{\pm 0.008}$ & \textbf{0.726}$_{\bf\pm 0.008}$ & $26.0\times$ \\
Lightning7 & $0.571_{\pm 0.029}$ & $0.664_{\pm 0.021}$ & \textbf{0.815}$_{\bf\pm 0.019}$ & $25.7\times$ \\
PEMS-SF & \textbf{0.750}$_{\bf\pm 0.024}$ & $0.747_{\pm 0.013}$ & $0.734_{\pm 0.007}$ & $2.8\times$ \\
ShapesAll & $0.796_{\pm 0.004}$ & \textbf{0.809}$_{\bf\pm 0.006}$ & $0.790_{\pm 0.005}$ & $2.0\times$ \\
Wafer & \textbf{0.996}$_{\bf\pm 0.000}$ & $0.993_{\pm 0.001}$ & $0.994_{\pm 0.001}$ & $5.1\times$ \\
Yoga & \textbf{0.857}$_{\bf\pm 0.005}$ & $0.809_{\pm 0.012}$ & $0.789_{\pm 0.005}$ & $7.4\times$ \\
\bottomrule
\end{tabular}
        }
    \end{minipage}
    \hfill
    \begin{minipage}[t]{0.47\textwidth}
        \centering
        \vspace{0pt}
        \captionof{figure}{Critical difference diagrams of model accuracy (top) and inference time (bottom), confirming statistically indistinguishable predictive performance but significantly faster inference speeds of the FRESCO models.}
        \label{fig:cd-accuracy}
                \includegraphics[
            width=\linewidth,
            keepaspectratio
        ]{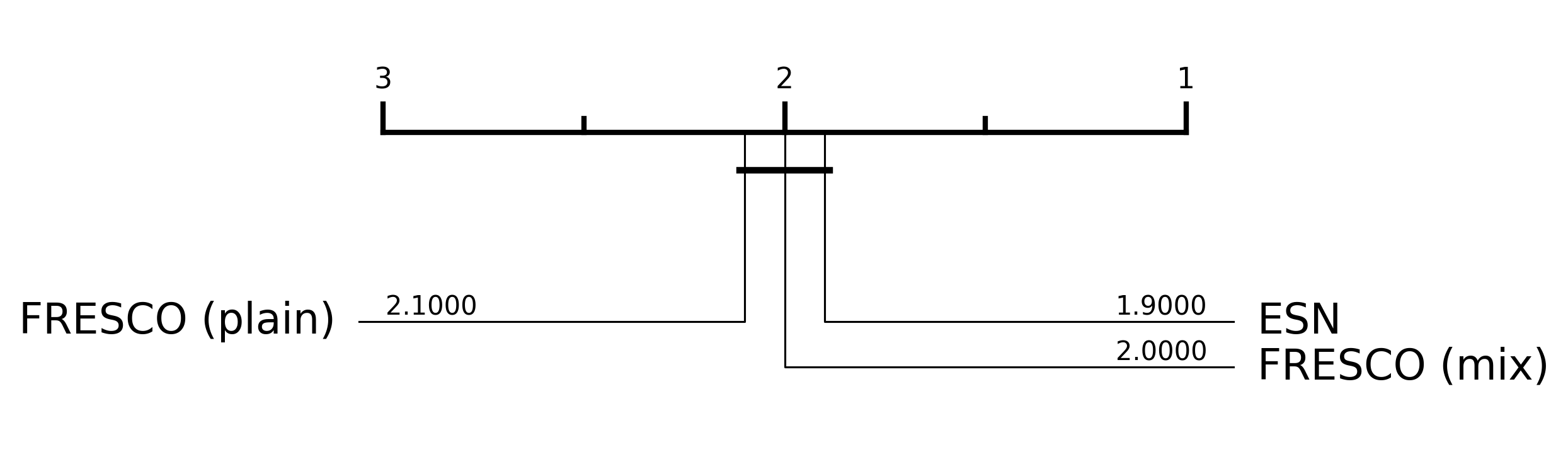}
        
        \vspace{0.5em} 

        \label{fig:cd-time}
        \includegraphics[
            width=\linewidth,
            keepaspectratio
        ]{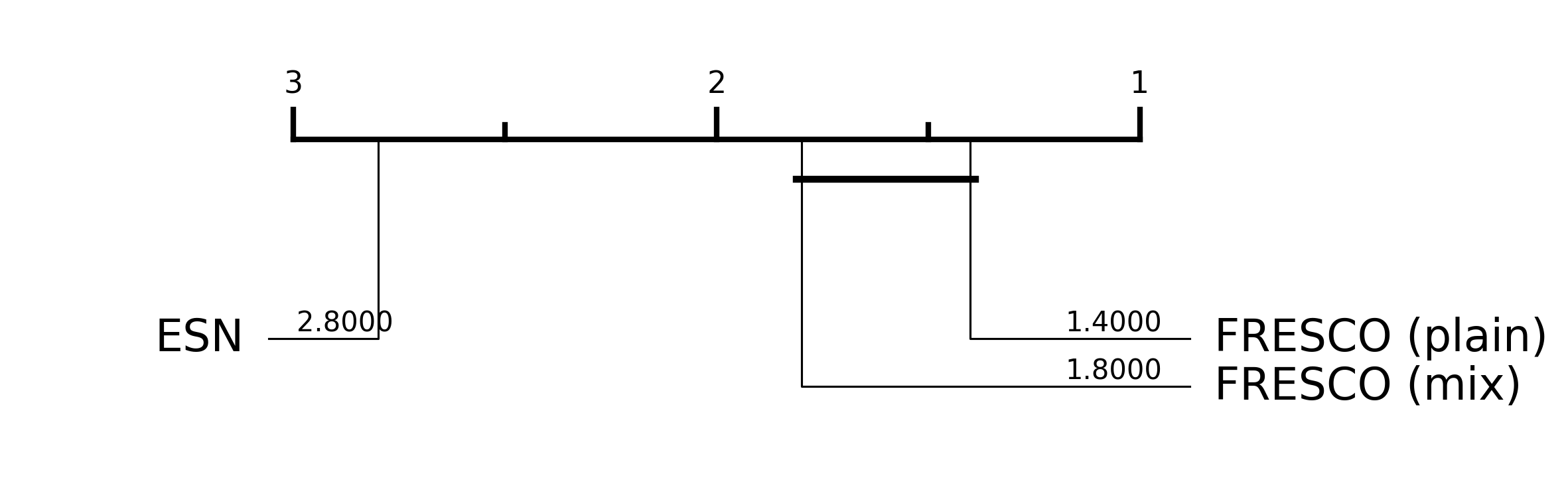}
    \end{minipage}

    \label{fig:accuracy-comparison}
\end{figure*}

\subsection{Long-horizon multivariate forecasting}
\label{sec:long-horizon_forecasting}

Finally, we benchmark FRESCO on multivariate long-horizon forecasting (ETT, Solar, Weather) with a fixed 96-step lookback and prediction horizons $H \in \{ 96, 192, 336, 720 \}$. We compare FRESCO and ESNs against three deep sequence baselines: LSTM \cite{hochreiter1997long}, Mamba \cite{gu2023mamba}, and Transformer \cite{vaswani2017attention}. Deep baselines are implemented as encoder-only one-shot forecasters, while reservoir models apply a ridge readout to the mean-pooled final state of the encoder window. Details on pre-processing, optimization, and deep learning baselines are given in Appendices~\ref{app:hypers_long_forecast}-\ref{app:forecasting_dl}.

Figure~\ref{fig:emissions} illustrates the resulting accuracy-energy-time trade-off. Data refer to the final selected configuration for each model on each dataset and prediction horizon (full numerical results in Appendix~\ref{app:additional_results}, Table~\ref{tab:anytime_log_results}). Figure~\ref{fig:emissions} confirms that FRESCO is consistently competitive with or superior to all general-purpose sequence baselines across all datasets and prediction horizons. FRESCO occupies a highly favorable regime: it matches or exceeds the predictive accuracy of the deep baselines while consuming up to three orders of magnitude less energy (measured in kWh). This dramatic reduction correlates directly with FRESCO's minimal training and inference times, demonstrating that \FDh reservoirs preserve much of the accuracy of substantially more expensive sequence models at a fraction of the environmental cost. While also standard ESNs avoid backpropagation, their dense recurrent scaling often requires extremely large reservoirs to solve complex tasks, leading to substantial energy footprints. 

\begin{wrapfigure}{r}{0.48\linewidth}
\vspace{-0.5em}
\centering
\includegraphics[width=\linewidth]{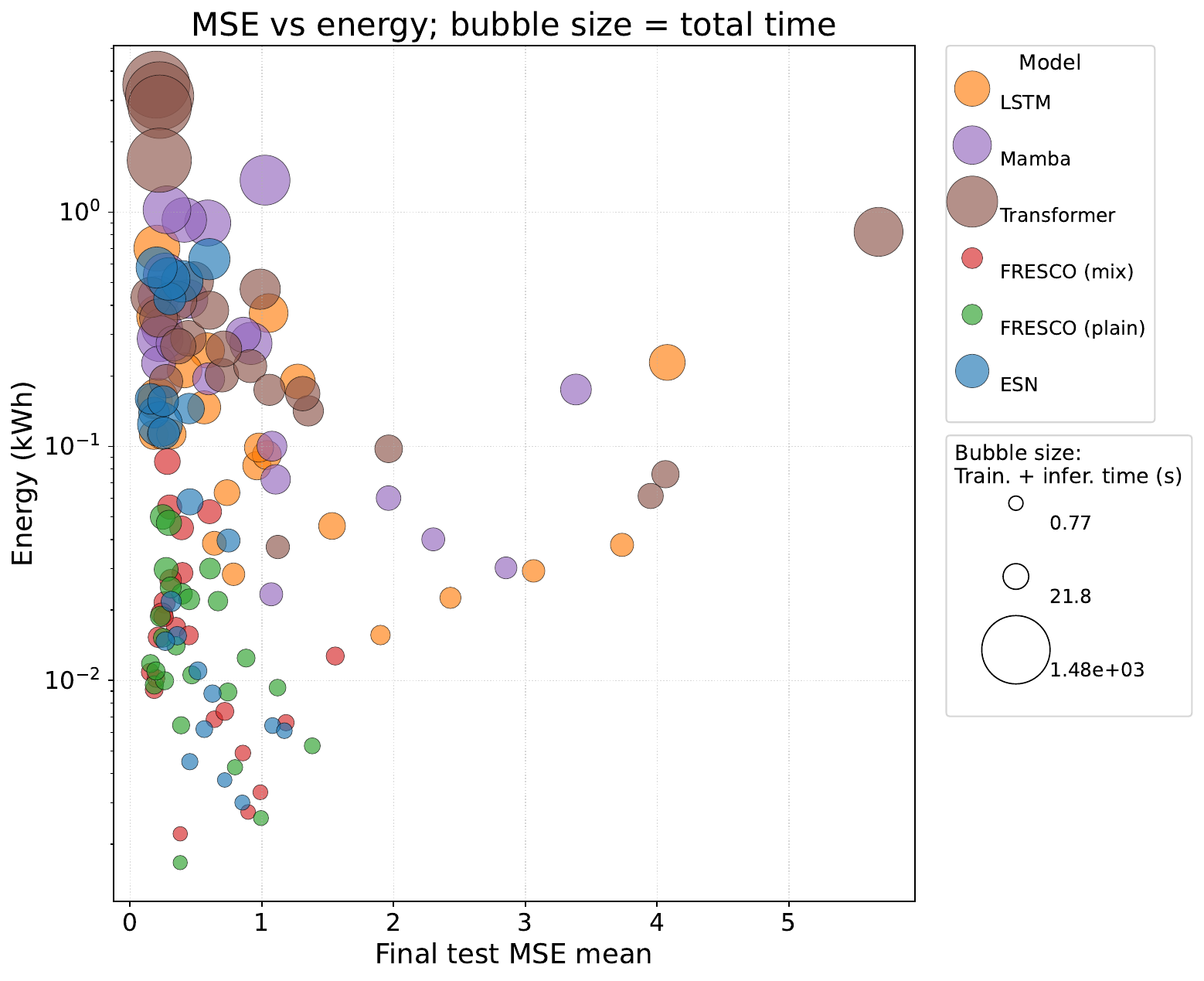}
\vspace{-0.75em}
\caption{Accuracy, energy, and runtime (training and inference) for long-horizon forecasting, showing FRESCO's superior efficiency.}
\label{fig:emissions}
\vspace{-1.0em}
\end{wrapfigure}

To further contextualize these results, Table~\ref{tab:mixfresco_sota} compares FRESCO against specialized, highly tuned forecasting architectures evaluated under identical 96-step lookback conditions \cite{liu2024itransformer}. This comparison is intentionally demanding. Despite relying on a generic five-hour hyperparameter search, fixed recurrent dynamics, and a closed-form readout, FRESCO achieves performance on par with task-specific deep forecasting models. This establishes FRESCO as a highly competitive baseline for long-horizon multivariate forecasting with substantially lower training and inference cost.
\begin{table*}[t]
\small
\centering
\caption{Forecasting results with selected models and state-of-the-art deep learning baselines taken from \cite{liu2024itransformer}. Best and second best are colored red and blue, respectively.}
\label{tab:mixfresco_sota}
\scriptsize
\setlength{\tabcolsep}{2.2pt}
\renewcommand{\arraystretch}{0.95}
\resizebox{\textwidth}{!}{%
\begin{tabular}{ll|cc|cc|cc|cc|cc|cc|cc}
\toprule
 & Models  & \multicolumn{2}{c|}{FRESCO (mix)}  & \multicolumn{2}{c|}{FRESCO (plain)}  & \multicolumn{2}{c|}{iTransformer}  & \multicolumn{2}{c|}{PatchTST}  & \multicolumn{2}{c|}{TimesNet}  & \multicolumn{2}{c|}{FEDformer}  & \multicolumn{2}{c}{Autoformer} \\
 &   & \multicolumn{2}{c|}{Ours}  & \multicolumn{2}{c|}{Ours}  & \multicolumn{2}{c|}{\cite{liu2024itransformer}}  & \multicolumn{2}{c|}{\cite{nie2023a}}  & \multicolumn{2}{c|}{\cite{wu2023timesnet}}  & \multicolumn{2}{c|}{\cite{pmlr-v162-zhou22g}}  & \multicolumn{2}{c}{\cite{wu2021autoformer}} \\
Dataset & Horizon  & MSE & MAE  & MSE & MAE  & MSE & MAE  & MSE & MAE  & MSE & MAE  & MSE & MAE  & MSE & MAE \\
\midrule
\multirow{4}{*}{\rotatebox[origin=c]{90}{ETTm1}} & 96 & 0.349 & 0.389 & 0.349 & 0.390 & \textcolor{blue}{0.334} & \textcolor{blue}{0.368} & \textcolor{red}{0.329} & \textcolor{red}{0.367} & 0.338 & 0.375 & 0.379 & 0.419 & 0.505 & 0.475 \\
 & 192 & 0.398 & 0.423 & 0.394 & 0.419 & 0.377 & 0.391 & \textcolor{red}{0.367} & \textcolor{red}{0.385} & \textcolor{blue}{0.374} & \textcolor{blue}{0.387} & 0.426 & 0.441 & 0.553 & 0.496 \\
 & 336 & 0.446 & 0.457 & 0.450 & 0.461 & 0.426 & 0.420 & \textcolor{red}{0.399} & \textcolor{red}{0.410} & \textcolor{blue}{0.410} & \textcolor{blue}{0.411} & 0.445 & 0.459 & 0.621 & 0.537 \\
 & 720 & 0.603 & 0.555 & 0.607 & 0.558 & 0.491 & 0.459 & \textcolor{red}{0.454} & \textcolor{red}{0.439} & \textcolor{blue}{0.478} & \textcolor{blue}{0.450} & 0.543 & 0.490 & 0.671 & 0.561 \\
\midrule
\multirow{4}{*}{\rotatebox[origin=c]{90}{ETTm2}} & 96 & 0.183 & 0.288 & 0.184 & 0.289 & \textcolor{blue}{0.180} & \textcolor{blue}{0.264} & \textcolor{red}{0.175} & \textcolor{red}{0.259} & 0.187 & 0.267 & 0.203 & 0.287 & 0.255 & 0.339 \\
 & 192 & 0.262 & 0.353 & 0.262 & 0.353 & 0.250 & \textcolor{blue}{0.309} & \textcolor{red}{0.241} & \textcolor{red}{0.302} & \textcolor{blue}{0.249} & \textcolor{blue}{0.309} & 0.269 & 0.328 & 0.281 & 0.340 \\
 & 336 & 0.391 & 0.450 & 0.387 & 0.448 & \textcolor{blue}{0.311} & \textcolor{blue}{0.348} & \textcolor{red}{0.305} & \textcolor{red}{0.343} & 0.321 & 0.351 & 0.325 & 0.366 & 0.339 & 0.372 \\
 & 720 & 0.719 & 0.652 & 0.743 & 0.664 & 0.412 & 0.407 & \textcolor{red}{0.402} & \textcolor{red}{0.400} & \textcolor{blue}{0.408} & \textcolor{blue}{0.403} & 0.421 & 0.415 & 0.433 & 0.432 \\
\midrule
\multirow{4}{*}{\rotatebox[origin=c]{90}{Solar}} & 96 & \textcolor{blue}{0.215} & 0.315 & 0.230 & 0.333 & \textcolor{red}{0.203} & \textcolor{red}{0.237} & 0.234 & \textcolor{blue}{0.286} & 0.250 & 0.292 & 0.242 & 0.342 & 0.884 & 0.711 \\
 & 192 & \textcolor{blue}{0.241} & 0.330 & 0.248 & 0.348 & \textcolor{red}{0.233} & \textcolor{red}{0.261} & 0.267 & \textcolor{blue}{0.310} & 0.296 & 0.318 & 0.285 & 0.380 & 0.834 & 0.692 \\
 & 336 & 0.300 & 0.379 & \textcolor{blue}{0.274} & 0.364 & \textcolor{red}{0.248} & \textcolor{red}{0.273} & 0.290 & \textcolor{blue}{0.315} & 0.319 & 0.330 & 0.282 & 0.376 & 0.941 & 0.723 \\
 & 720 & \textcolor{blue}{0.283} & 0.351 & 0.295 & 0.381 & \textcolor{red}{0.249} & \textcolor{red}{0.275} & 0.289 & \textcolor{blue}{0.317} & 0.338 & 0.337 & 0.357 & 0.427 & 0.882 & 0.717 \\
\midrule
\multirow{4}{*}{\rotatebox[origin=c]{90}{Weather}} & 96 & \textcolor{red}{0.153} & 0.232 & \textcolor{blue}{0.155} & 0.235 & 0.174 & \textcolor{red}{0.214} & 0.177 & \textcolor{blue}{0.218} & 0.172 & 0.220 & 0.217 & 0.296 & 0.266 & 0.336 \\
 & 192 & \textcolor{blue}{0.197} & 0.278 & \textcolor{red}{0.196} & 0.277 & 0.221 & \textcolor{red}{0.254} & 0.225 & \textcolor{blue}{0.259} & 0.219 & 0.261 & 0.276 & 0.336 & 0.307 & 0.367 \\
 & 336 & \textcolor{blue}{0.254} & 0.327 & \textcolor{red}{0.250} & 0.322 & 0.278 & \textcolor{red}{0.296} & 0.278 & \textcolor{blue}{0.297} & 0.280 & 0.306 & 0.339 & 0.380 & 0.359 & 0.395 \\
 & 720 & \textcolor{red}{0.308} & 0.367 & \textcolor{blue}{0.309} & 0.365 & 0.358 & \textcolor{red}{0.347} & 0.354 & \textcolor{blue}{0.348} & 0.365 & 0.359 & 0.403 & 0.428 & 0.419 & 0.428 \\
\bottomrule
\end{tabular}%
}
\end{table*}

\section{Conclusion}

FRESCO demonstrates that the computational bottlenecks of dense recurrent layers can be elegantly bypassed by shifting computation to the frequency domain. While achieving strictly linear scaling requires constraining recurrent connectivities to circular convolutions, the absence of statistically significant accuracy differences across our classification benchmarks confirms that this structural prior does not systematically compromise expressivity.
Empirically, FRESCO achieves highly favorable scaling for both input embedding and recurrent updates, surpassing the inference speeds of standard ESNs. Beyond sheer speed, it matches the predictive accuracy of heavily parameterized deep sequence models on complex, long-horizon multivariate forecasting while consuming up to three orders of magnitude less energy. Ultimately, this proves that the mathematical elegance of the frequency domain offers a powerful, highly sustainable alternative to more resource-intensive sequence modeling approaches.

Future work will further exploit the dimensional zero-padding embedding to scale to even higher-dimensional data, such as 2D spatial inputs.
Additionally, because the optimal choice between isolated frequency dynamics and cross-frequency mixing proved task-dependent, investigating novel frequency-domain non-linearities remains a promising avenue to further enhance the expressivity of the FRESCO reservoir.

\bibliography{references}


\appendix

\section{Proofs and theoretical analysis}
\label{app:theory}

\subsection{Contribution: Input embedding by dimensional zero-padding}
\label{app:dim-zero-pad}

In the FRESCO architecture, the primary computational advantage stems from exploiting the circular convolution theorem, which replaces dense $\mathcal{O}(\SNr^2)$ matrix-vector multiplications with highly efficient $\mathcal{O}(\SNr)$ element-wise operations in the \FD. However, to execute these point-wise recurrent updates, an external real-valued input vector $\Sx{} \in \mathbb{R}^{\SNx}$ must first be transformed into the potentially substantially larger \FDh representation of the reservoir of size $\SNr$ (where $\SNx < \SNr$). 

Because element-wise operations require strictly matching dimensions, the low-dimensional input $\Sx{}$ must be mathematically expanded and transformed to match the full dimensionality of the \FDh reservoir state. The critical requirement here is to ensure that the involved Fast Fourier Transform (FFT) from the spatial to the frequency domain does not introduce a new computational bottleneck. If mapping the input into the \FD scales poorly, this overhead could offset the algorithmic efficiency gained during the recurrent updates.

Specifically, an optimal input embedding mechanism must seamlessly combine both the random projection by the input weights and the Fourier transformation, ultimately yielding a compatible, full-resolution complex-valued representation. In the following Sections~\ref{app:embedding-option-1} to~\ref{app:embedding-option-3}, we analyze three different approaches to computing a \FDh embedding. In the benchmark Section~\ref{app:embedding-benchmark}, we will prove the effectiveness of the proposed dimensional zero-padding technique compared to the other two options.

\subsubsection{Option 1: Complex dense layer embedding}
\label{app:embedding-option-1}

In a standard Echo State Network (ESN), the contribution of a real-valued input vector $\Sx{} \in \mathbb{R}^{\SNx{}}$ to a higher-dimensional reservoir $\Sr{} \in \mathbb{R}^{\SNr}$ (where $\SNr > \SNx$) is computed via the real-valued dense projection of Eq.~\eqref{eq:embedding_real_dense_layer}, using an input weight matrix $\SWx \in \mathbb{R}^{\SNr \times \SNx}$. 

\paragraph{Dense layer embedding (baseline)}
\begin{equation}
\SWx \cdot \Sx{} \label{eq:embedding_real_dense_layer}
\end{equation}

To natively embed this input into the \FD, we can formulate an analogous \emph{complex dense layer embedding}. By exploiting the linearity of the Fourier transform, the effective \FDh input weights can be analytically derived by left-multiplying the spatial weights by the Discrete Fourier Transform (DFT) matrix $\mathbf{F} \in \mathbb{R}^{\SNr \times \SNr}$, such that $\FWx = \mathbf{F} \cdot \SWx \in \mathbb{R}^{\SNr \times \SNx}$.

Because the spatial projection $\SWx \cdot \Sx{}$ is strictly real-valued, its resulting \FDh representation naturally exhibits Hermitian symmetry. We can eliminate this redundant information and computational overhead by restricting $\mathbf{F}$ to the operator of a real-to-complex FFT (RFFT). Assuming an even reservoir dimension $\SNr$, this reduces the number of rows of $\FWx$ from $\SNr$ to $\frac{\SNr}{2} + 1$, leading to $\FWx \in \mathbb{C}^{(\frac{\SNr}{2} + 1) \times N_x}$. The embedding of the input $\Sx{}$ is then executed as the single complex matrix-vector product of Eq.~\eqref{eq:embedding_complex_dense_layer}.

\paragraph{Complex dense layer embedding}
\begin{equation}
\FWx \cdot \Sx{} \label{eq:embedding_complex_dense_layer}
\end{equation}

This is the direct algebraic translation of a real valued dense layer embedding into the \FD.

\subsubsection{Option 2: Embedding by zero-padding}
\label{app:embedding-option-2}

While the complex dense layer embedding explained in the previous Section~\ref{app:embedding-option-1} naturally translates the spatial projection into the \FD, it fundamentally relies on a dense matrix-vector multiplication with a time complexity of $\mathcal{O}(\SNr \cdot \SNx)$. For larger input dimensions $\SNx$, this operation risks becoming a computational bottleneck compared to directly applying a FFT on the input, followed by element-wise operations of the circular convolution theorem.

To be able to directly apply a FFT to the input, we need to pad the input vector $\Sx{}$ with $\SNr - \SNx$ zeros, yielding a padded vector $\Sxzp \in \mathbb{R}^{\SNr}$. This zero-padded vector is subsequently transformed into the \FD using an $\SNr$-point FFT, as illustrated in Figure~\ref{fig:1d_padding}. 

\begin{tikzpicture}
    \begin{scope}[shift={(0,0)}, scale=0.5]
        \node [fit={(0,0) (3,1)}, draw = black!60, fill=gray!10, inner sep=-0.1cm, thick, pin = north:$\Sx{}$] {};
        \draw[help lines, draw] (0,0) grid (3,1);
    \end{scope}
    
    \begin{scope}[shift={(3,0)}, scale=0.5]
        \node at (-1.5,0.5) {$\xrightarrow[]{pad}$};
        \node [fit={(0,0) (3,1)}, draw = black!60, fill=gray!10, inner sep=-0.1cm, thick, pin = north:$\Sx{}$] {};
        \node [fit={(0,0) (8,1)}, inner sep=0.1cm, pin = north:$\Sxzp$] {};
        \draw[help lines, draw] (0,0) grid (8,1);
        \node foreach \x in {3.5,...,7.5} [] at (\x,0.5) {0};
    \end{scope}
    
    \begin{scope}[shift={(8.5,0)}, scale=0.5]
        \node at (-1.5,0.5) {$\xrightarrow[]{\fourier{\cdot}}$};
        \node [fit={(0,0) (8,1)}, draw = black!60, fill=gray!10, inner sep=-0.1cm, thick, pin = north:$\Fxzp \text{$=$} \fourier{\Sxzp}$] {};
        \draw[help lines, draw] (0,0) grid (8,1);
    \end{scope}
\end{tikzpicture}
\begin{figure}[h]
    \centering
    \caption{Standard 1D zero-padding: The input vector $\Sx{}$ is padded with zeros to match the reservoir size before computing the frequency domain representation $\fourier{\Sxzp}$. This requires an $\SNr$-point DFT.}
    \label{fig:1d_padding}
\end{figure}

Since, again, we have a real input vector we can apply the more efficient RFFT to end up with a \FDh representation of the input given by $\Fxzp \in \mathbb{C}^{\frac{\SNr}{2} + 1}$ for even $\SNr$.
The final input embedding is then efficiently computed via the element-wise product of Eq.~\eqref{eq:embedding_zeropadding}, where $\Fwx \in \mathbb{C}^{\frac{\SNr}{2} + 1}$ represents the \FD input weight vector.

\paragraph{Embedding by zero-padding}
\begin{equation}
\Fwx \odot \Fxzp = \Fwx \odot\fourier{\Sxzp}\label{eq:embedding_zeropadding}
\end{equation}

Although this strategy successfully replaces the $\mathcal{O}(\SNr \cdot \SNx)$ matrix multiplication with an $\mathcal{O}(\SNr)$ element-wise product, it still requires the $\mathcal{O}(\SNr \log \SNr)$ operation of a full $\SNr$-point RFFT to be executed for every embedded input.

\subsubsection{Option 3: Embedding by dimensional zero-padding}
\label{app:embedding-option-3}

To circumvent the bottleneck of a full $\SNr$-point RFFT in the zero-padding approach presented in the previous Section~\ref{app:embedding-option-2}, we introduce the concept of dimensional zero-padding. Rather than padding the one-dimensional input $\Sx{}$ along its existing dimension, we embed the input into a two-dimensional space and apply zero-padding along the newly added dimension. This allows $\Sx{}$ to act like a two-dimensional convolution kernel applied to the input weights - then also represented two-dimensional.
Figure~\ref{fig:dimensional_padding} provides an intuition for this process when mapping a 1D vector into a 2D matrix. As formally proved in Section~\ref{app:proof-dim-zero-pad}, dimensional zero-padding allows us to construct an exact high-dimensional frequency representation of the input without ever computing a $\SNr$-point DFT. Instead, the computation of the full two-dimensional DFT of $\SXzp$ depends only on the $\SNx$-point DFT of the input $\Sx{}$.

\begin{figure}[h]
    \centering
    \begin{tikzpicture}
    \begin{scope}[shift={(0,0)}]
        \node [fit={(0,0) (1,4)}, draw = black!60, fill=gray!10, inner sep=-0.15cm, thick, pin = south:$\Sx{}$] {};
        \node [fit={(0,0) (5,4)}, thick, pin = $\SXzp$] {};
        \draw[help lines, draw] (0,0) grid (5,4);
        \node foreach \x in {1.5,...,4.5} foreach \y in {0.5,...,3.5} [] at (\x,\y) {0};
    \end{scope}

    \begin{scope}[shift={(7,0)}]
        \node [fit={(0,0) (1,4)}, draw = black!60, fill=gray!10, inner sep=-0.15cm, thick, pin = {[] south:$\Fx{} = \fourier{\Sx{}}\quad\quad$}] {};
        \node foreach \x in {1,...,3} [draw = black!60, fit={(\x,0) (\x+1,4)}, fill=gray!10, inner sep=-0.15cm, thick, pin = {[yshift = -0.5em] south:$\dots$}] {};
        \node [fit={(4,0) (5,4)}, draw = black!60, fill = gray!10, inner sep=-0.15cm, thick, pin = {[] south:$\Fx{}$}] {};
        \node [fit={(0,0) (5,4)}, thick, pin = $\FXzp \text{$=$} \fourier{\SXzp}$] {};
        \draw[help lines, draw] (0,0) grid (5,4);
    \end{scope}
    
    \node at (6,2) {$\begin{array}{c} \fourier{\cdot} \\ \longrightarrow  \end{array}$};
\end{tikzpicture}
    \caption{Dimensional zero-padding: A 1D input vector $\Sx{}$ is embedded into a 2D space by padding zeros along the second dimension (columns). The resulting 2D DFT consists simply of repeated columns of the much smaller 1D transform $\Fx{} = \fourier{\Sx{}}$. In practice, neither $\SXzp$ nor $\fourier{\SXzp}$ need to be created explicitly, since the result $\Fx{}$ can be broadcast into the matrix representation of the reservoir.}
    \label{fig:dimensional_padding}
\end{figure}

The underlying mechanics of this dimensional zero-padding can be intuitively understood through the separability property of the multidimensional DFT. Consider the 2D case illustrated in Figure~\ref{fig:dimensional_padding}, where a 1D input vector $\Sx{}$ is embedded into the first column of a 2D matrix, with all remaining columns padded with zeros. Because the 2D DFT is separable, it can be computed sequentially: first by applying 1D DFTs along the columns, and then along the rows. 

The column-wise DFT yields $\Fx{} = \fourier{\Sx{}}$ in the first column, while the transforms of all other zero-padded columns naturally remain zero. Subsequently, when the 1D DFTs are applied along the rows, each row acts as a 1D discrete impulse signal---a single non-zero value at the zeroth index followed entirely by zeros. Since the Fourier transform of such an impulse is a constant sequence, the initial non-zero value is uniformly broadcasted across the entire row. Consequently, the final 2D \FDh representation consists solely of identically repeated copies of $\Fx{} = \fourier{\Sx{}}$, effectively bypassing the need to compute the large 2D transform. 

Since $\FXzp$ is now a regular 2D \FDh representation, we can apply the circular convolution theorem in two dimensions by performing the complex element-wise product $\FWx \odot \FXzp$ which we more shortly write in Eq.~\eqref{eq:embedding_dim_zeropadding_app} as $\FWx \odot \Fx{}$ with the convention that $\Fx{}$ gets broadcasted over the columns of $\FWx$. 

\paragraph{Embedding by dimensional zero-padding}
\begin{equation}
\FWx \odot \FXzp = \FWx \odot \Fx{} = \FWx \odot  \fourier{\Sx{}} \label{eq:embedding_dim_zeropadding_app}
\end{equation}

Compared to the standard zero-padding approach, dimensional zero-padding reduces the FFT complexity from $\mathcal{O}(\SNr \log \SNr)$ to $\mathcal{O}(\SNx \log \SNx)$.

Note: While the primary focus of this paper emphasize embedding 1D inputs into 2D \FDh reservoirs, the dimensional zero-padding property is generalizable to higher dimensions. The mathematical equivalence holds for tensors of arbitrary dimensions (c.f. proof in Section~\ref{app:proof-dim-zero-pad}). This generality provides a highly scalable framework for applications where inputs are more naturally represented in multidimensional forms, such as embedding 2D image patches or visual receptive fields into a 3D reservoir.

\subsubsection{Experiment: Timing of input embedding}
\label{app:embedding-benchmark}

This experiment evaluates the computational efficiency of the various \FDh input embedding strategies given in Eqs.~\eqref{eq:embedding_complex_dense_layer}, \eqref{eq:embedding_zeropadding}, and~\eqref{eq:embedding_dim_zeropadding_app}. As a baseline, we benchmark these methods against the standard real-valued dense layer embedding typically employed in standard (\SDh) ESNs, as given in Eq.~\eqref{eq:embedding_real_dense_layer}. The reported minimal runtimes encompass the computation of a single embedding of an input vector $\Sx{}$ using Eqs.~\eqref{eq:embedding_real_dense_layer} to~\eqref{eq:embedding_dim_zeropadding_app} for different input sizes $\SNx$ and reservoir sizes $\SNr$, with $\SNr = 256 \SNx$. 

As illustrated in Figure~\ref{fig:bench_input_embedding}, the proposed dimensional zero-padding approach demonstrates significant computational advantages over both the alternative \FDh options and the standard ESN baseline. These results confirm that mapping \SDh inputs into the \FD does not introduce a computational bottleneck for the FRESCO architecture. Furthermore, the dimensional zero-padding method exhibits vastly superior asymptotic scaling behavior for large input- and reservoir sizes compared to standard ESN architectures. 

\begin{figure}[htbp]
    \centering
    \begin{tikzpicture}[remember picture] 

    \def\plotWidth{10cm}
    \def\plotHeight{7cm}
    \def\timediv{1000}
    \def\enlargefact{0.05}

    \pgfplotsset{
        my plot layout/.style={
            width=\plotWidth,
            height=\plotHeight,
            scale only axis,
            log basis x=2,
            xtick=data,
            grid=both,
            minor tick num=0, 
            grid style={gray!20},
            yminorgrids=false,
            label style={font=\small},
            tick label style={font=\footnotesize},
            legend style={font=\footnotesize},
        }
    }

    \pgfplotstableread[col sep=comma]{data/data_bench_input_embedding.csv}\datatable

    \begin{axis}[
        my plot layout,
        xmode=log, ymode=log,
        name=mainaxis,          
        xlabel={$\SNx$ (input size)},
        ylabel={Time ($\mu$s)},
        legend pos=north west,
        legend cell align=left,
        enlarge x limits=\enlargefact, 
        xticklabels from table={\datatable}{Nx}   
    ]
        \addplot[color=gray, dotted, mark=square*, mark options={solid}, ultra thick] table [
            x=Nx, y expr={\thisrow{time_matmul_sd_ns}/\timediv}
        ] {\datatable};
        \addlegendentry{Dense layer embedding (standard ESN)}
        
        \addplot[color=Set2-A, mark=square*, ultra thick] table [
            x=Nx, y expr={\thisrow{time_matmul_fd_ns}/\timediv}
        ] {\datatable};
        \addlegendentry{Option 1: Complex dense layer embedding}
        
        \addplot[color=Set2-C, mark=square*, ultra thick] table [
            x=Nx, y expr={\thisrow{time_zeropad_nrm_ns}/\timediv}
        ] {\datatable};
        \addlegendentry{Option 2: Standard zero-padding}

        \addplot[color=Set2-B, mark=square*, ultra thick] table [
            x=Nx, y expr={\thisrow{time_zeropad_dim_ns}/\timediv}
        ] {\datatable};
        \addlegendentry{Option 3: Dimensional zero-padding (FRESCO)}
    \end{axis}

    \begin{axis}[
        my plot layout,
        xmode=log, ymode=log,    
        at={(mainaxis.south west)},
        anchor=south west,      
        axis y line=none,
        axis x line=top,
        enlarge x limits=\enlargefact,
        xlabel={$\SNr$ (reservoir size)},
        xticklabels from table={\datatable}{N},   
        xticklabel style={rotate=45, anchor=south west} 
    ]
        \addplot[draw=none, forget plot] table [
            x=Nx, y expr={\thisrow{time_zeropad_dim_ns}/\timediv}
        ] {\datatable};
    \end{axis}
\end{tikzpicture}
    \caption[Embedding benchmark]{Comparison of minimal computational times (lower is better) for different input embedding methods across increasing input sizes $\SNx$ and reservoir sizes $\SNr$. A fixed scaling ratio of 256 reservoir neurons per input feature is maintained ($\SNr = 256 \SNx$). The proposed embedding via dimensional zero-padding exhibits highly efficient scaling behavior, outperforming standard \SDh ESN dense layers by orders of magnitude at larger network scales. }
    \label{fig:bench_input_embedding}
\end{figure}

\subsubsection{Proof: Dimensional zero-padding}
\label{app:proof-dim-zero-pad}

We formalize this generalized property for arbitrary dimensions in the following theorem and proof.

\begin{theorem}[Dimensional zero-padding\footnotemark{}]
\label{thm:dimensional_padding}
\footnotetext{Notational Convention: For mathematical convenience and consistency with standard signal processing literature, all spatial and frequency domain tensors, as well as the DFT summations in this section, are assumed to be $0$-indexed.}
Let $\mathbf{x}$ be an $n_x$-dimensional tensor over $\mathbb{R}$ or $\mathbb{C}$ with dimensions $N_1 \times N_2 \times \dots \times N_{n_x}$. Let $\mathbf{r}$ be an $n_r$-dimensional tensor over the same field, where $n_r > n_x$, sharing the exact dimensions of $\mathbf{x}$ across its first $n_x$ axes. 

Let the elements of $\mathbf{r}$ be defined by embedding $\mathbf{x}$ such that all entries outside the $0$-th spatial index of the newly added dimensions are zero:
\begin{equation}
    \mathbf{r}[i_1, \dots, i_{n_x}, i_{n_x+1}, \dots, i_{n_r}] = \mathbf{x}[i_1, \dots, i_{n_x}] \cdot \delta(i_{n_x+1}) \cdots \delta(i_{n_r})
\end{equation}
where $i_d \in \{0, 1, \dots, N_d-1\}$ denotes the $0$-indexed spatial coordinate along axis $d$, and $\delta(\cdot)$ is the Kronecker delta function ($\delta(0) = 1$, and $0$ otherwise).

Then, the $n_r$-dimensional DFT of $\mathbf{r}$, denoted as $\fourier{\mathbf{r}}$, perfectly replicates the lower-dimensional DFT of $\mathbf{x}$ across all frequency indices of the padded dimensions:
\begin{equation}
    \fourier{\mathbf{r}}[k_1, \dots, k_{n_x}, k_{n_x+1}, \dots, k_{n_r}] = \fourier{\mathbf{x}}[k_1, \dots, k_{n_x}]
\end{equation}
where $k_d$ represents the frequency index along axis $d$.
\end{theorem}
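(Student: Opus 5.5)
We prove the identity directly from the definition of the multidimensional DFT, using its separable product structure. Fix the unnormalized convention used in the statement:
\begin{equation*}
\fourier{\mathbf{r}}[k_1,\dots,k_{n_r}] = \sum_{i_1=0}^{N_1-1}\cdots\sum_{i_{n_r}=0}^{N_{n_r}-1} \mathbf{r}[i_1,\dots,i_{n_r}] \prod_{d=1}^{n_r} e^{-2\pi \mathrm{i}\, k_d i_d / N_d}.
\end{equation*}
This is the convention under which a unit impulse transforms to the all-ones sequence, as used in the intuitive argument preceding the theorem. Here $N_{n_x+1},\dots,N_{n_r}$ are the sizes of the padded axes. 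With a unitary normalization the identity would only hold up to the factor $\prod_{d>n_x} N_d^{-1/2}$, so I would state that convention explicitly as a remark.

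\textbf{Step 1: substitute the embedding.} Replace $\mathbf{r}$ by $\mathbf{x}[i_1,\dots,i_{n_x}]\prod_{d>n_x}\delta(i_d)$. Every summand factors into a part depending only on $(i_1,\dots,i_{n_x})$ and a part depending only on $(i_{n_x+1},\dots,i_{n_r})$. Because the index ranges form a Cartesian product, the multiple sum splits (Fubini for finite sums) into
\begin{equation*}
\Bigl(\sum_{i_1,\dots,i_{n_x}} \mathbf{x}[i_1,\dots,i_{n_x}] \prod_{d=1}^{n_x} e^{-2\pi \mathrm{i}\, k_d i_d/N_d}\Bigr)\cdot \prod_{d=n_x+1}^{n_r}\Bigl(\sum_{i_d=0}^{N_d-1}\delta(i_d)\, e^{-2\pi \mathrm{i}\, k_d i_d/N_d}\Bigr).
\end{equation*}

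\textbf{Step 2: evaluate the padded-axis factors.} In each factor for $d>n_x$, the Kronecker delta keeps only the term $i_d=0$, where the exponential equals $1$. Each factor is therefore $1$ for every $k_d$.

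\textbf{Step 3: identify the remaining factor.} The remaining factor is by definition the $n_x$-dimensional DFT $\fourier{\mathbf{x}}[k_1,\dots,k_{n_x}]$, which gives the claim.

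\textbf{Remarks.} Nothing in the argument depends on the field: it holds over $\mathbb{R}$ and $\mathbb{C}$ alike. Only Step 1 needs care, because the factorization must be justified by the product structure of both the kernel and the embedded tensor. There is no real obstacle here. The only real pitfall is the normalization convention, which I would fix at the outset.

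As an optional corollary, restricting to the RFFT half-spectrum (the last or the chosen axis truncated to $\lfloor N/2\rfloor+1$ entries) preserves the identity entrywise. This justifies the broadcasting in Eq.~\eqref{eq:embedding_dim_zeropadding_app}.
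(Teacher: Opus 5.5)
Your proof is correct and follows essentially the same route as the paper. Both substitute the embedding into the unnormalized multidimensional DFT, use Kronecker-delta sifting to collapse the padded-axis sums to $i_d=0$, where the exponential factor equals $1$, and recognize the remaining sum as $\fourier{\mathbf{x}}$. Your explicit factorization and your remark on the normalization convention (the paper implicitly uses the unnormalized DFT) are sensible clarifications but do not change the argument.
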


\begin{proof}
By definition, the $n_r$-dimensional DFT of the padded tensor $\mathbf{r}$, evaluated at the frequency indices $(k_1, \dots, k_{n_r})$, is given by the multi-dimensional summation:
\begin{equation}
    \fourier{\mathbf{r}}[k_1, \dots, k_{n_r}] = \sum_{i_1=0}^{N_1-1} \cdots \sum_{i_{n_r}=0}^{N_{n_r}-1} \mathbf{r}[i_1, \dots, i_{n_r}] \exp\left(-i 2\pi \sum_{d=1}^{n_r} \frac{i_d k_d}{N_d} \right).
\end{equation}

Substituting the definition of the elements of $\mathbf{r}$ into this equation yields:
\begin{align}
    \fourier{\mathbf{r}}[k_1, \dots, k_{n_r}] &= \sum_{i_1=0}^{N_1-1} \cdots \sum_{i_{n_r}=0}^{N_{n_r}-1} \Bigg( \mathbf{x}[i_1, \dots, i_{n_x}] \prod_{j=n_x+1}^{n_r} \delta(i_j) \Bigg) \exp\left(-i 2\pi \sum_{d=1}^{n_r} \frac{i_d k_d}{N_d} \right).
\end{align}

We now apply the sifting property of the Kronecker delta function. For all padded dimensions $j \in \{n_x+1, \dots, n_r\}$, the terms in the summation are non-zero if and only if $i_j = 0$. Consequently, the summations over these higher dimensions collapse entirely to the single index $0$. 

When $i_j = 0$ for all $j > n_x$, their corresponding components in the complex exponent vanish because $\frac{0 \cdot k_j}{N_j} = 0$, meaning $e^0 = 1$. This simplifies the equation to:
\begin{equation}
    \fourier{\mathbf{r}}[k_1, \dots, k_{n_r}] = \sum_{i_1=0}^{N_1-1} \cdots \sum_{i_{n_x}=0}^{N_{n_x}-1} \mathbf{x}[i_1, \dots, i_{n_x}] \exp\left(-i 2\pi \sum_{d=1}^{n_x} \frac{i_d k_d}{N_d} \right).
\end{equation}

We recognize this remaining summation as the exact definition of the $n_x$-dimensional DFT of the original tensor $\mathbf{x}$. The frequency indices belonging to the padded dimensions ($k_{n_x+1}, \dots, k_{n_r}$) have completely dropped out of the expression. Therefore, we arrive at:
\begin{equation}
    \fourier{\mathbf{r}}[k_1, \dots, k_{n_r}] = \fourier{\mathbf{x}}[k_1, \dots, k_{n_x}],
\end{equation}
which concludes the proof.
\end{proof}

\subsection{Contribution: Packed \FDh readout}
\label{app:packed_readout}

To fully realize the performance benefits of a \FDh architecture, the abstract mathematical operations must be grounded in hardware-efficient data structures. The core challenge lies in designing a suitable complex 2D \FDh matrix and memory representation for the \FDh reservoir state $\FR{}$ (and the equally structured weights $\FWr, \FWx$, and biases $\FB$) that simultaneously allows to utilize dimensional zero-padding, while also enabling a fast, vectorizable readout of the reservoir's independent degrees of freedom. 

To allow for dimensional zero-padding (detailed in Section~\ref{app:embedding-option-3}), the only requirement is that the \FDh reservoir state is represented as a complex matrix $\FR{}$ in which one axis reflects the size of the RFFT $\Fx{}$ of the embedded input (c.f. Figure~\ref{fig:dimensional_padding}). In this way, Eq.~\eqref{eq:embedding_dim_zeropadding_app} can be used for input embedding.

During the recurrent operation of the reservoir, its state must be continuously read out to compute the matrix-vector product of a dense linear readout layer. Unfortunately, the complex elements of the matrix $\FR{}$ are not perfectly suited to represent the reservoir state since it contains redundant information from Hermitian symmetries. These redundancies artificially increase the number of values to be processed during readout and can complicate the regularization dynamics in the training of the readout weights\footnotemark{}. 
\footnotetext{In ridge regression, penalty terms may distribute weights across perfectly coupled variables, which can subtly alter the effective regularization applied to the underlying independent degrees of freedom.} 
As established in Proposition~\ref{prop:packed_dft_unitary}, the redundant complex frequency state $\FR{}$ of the reservoir can indeed be perfectly represented by a packed (non-redundant) real-valued vector 

\begin{equation}
    \Fp{} = \pack{\FR{}} \in \mathbb{R}^\SNr, 
    \label{eq:packing}
\end{equation}

with the unitary packing operator $\pack{\cdot}$, so that the linear readout can be performed using

\begin{equation}
\Sy{} = \FWp \cdot \Fp{}
\label{eq:packed_readout}
\end{equation}

where $\FWp \in \mathbb{R}^{\SNx \times \SNr}$ is the trained readout weight matrix. This ensures computational parity with standard spatial-domain reservoirs in terms of reservoir degrees of freedom ($\SNr$) as well as learned parameters of the readout weight ($\SNx \cdot \SNr$).

To achieve maximum hardware efficiency during inference, the computation of Eqs.~\eqref{eq:packing} and~\eqref{eq:packed_readout} should ideally rely on bulk memory operations, allowing for the use of highly optimized dense linear algebra routines (e.g., BLAS \texttt{gemv}). Ideally, this requires the complex matrix $\FR{}$ to be directly reinterpreted as a contiguous 1D array of non-redundant real floating-point numbers, exploiting the native interlaced storage of complex values ($[\Re_0, \Im_0, \Re_1, \Im_1, \dots]$) without the need to actively allocate and construct $\Fp{}$ in memory.

\begin{figure}[h]
    \centering
    \resizebox{\linewidth}{!}{
    \begin{tikzpicture}[
    cell/.style={
        draw, 
        minimum size=12mm, 
        anchor=center, 
        align=center, 
        inner sep=0pt, 
        font=\Large 
    },
    label font/.style={
        font=\LARGE
    }
]
    \matrix[
        matrix of nodes,
        nodes={cell},
        column sep=-\pgflinewidth, 
        row sep=-\pgflinewidth
    ] (mat) at (0,0) {
        |[fill=gray!15]| $\mathcal{I}_0$ & $\mathcal{I}_+$ & $\mathcal{I}_+$ & |[fill=gray!15]| $\mathcal{I}_0$ \\
        |[fill=blue!15]| $\mathcal{I}_+$ & $\mathcal{I}_+$ & $\mathcal{I}_+$ & |[fill=red!15]| $\mathcal{I}_+$ \\
        |[fill=green!15]| $\mathcal{I}_+$ & $\mathcal{I}_+$ & $\mathcal{I}_+$ & |[fill=yellow!30]| $\mathcal{I}_+$ \\
        |[fill=gray!15]| $\mathcal{I}_0$ & $\mathcal{I}_+$ & $\mathcal{I}_+$ & |[fill=gray!15]| $\mathcal{I}_0$ \\
        |[fill=green!15]| $\mathcal{I}_-$ & $\mathcal{I}_+$ & $\mathcal{I}_+$ & |[fill=yellow!30]| $\mathcal{I}_-$ \\
        |[fill=blue!15]| $\mathcal{I}_-$ & $\mathcal{I}_+$ & $\mathcal{I}_+$ & |[fill=red!15]| $\mathcal{I}_-$ \\
    };

    \matrix[
        matrix of nodes,
        nodes={cell, minimum size=10mm}, 
        column sep=-\pgflinewidth,
        below=2cm of mat 
    ] (vec) {
        |[fill=gray!15]| $\mathcal{I}_0$ & $\mathcal{I}_+$ & $\mathcal{I}_+$ & |[fill=gray!15]| $\mathcal{I}_0$ &
        |[fill=blue!15]| $\mathcal{I}_+$ & $\mathcal{I}_+$ & $\mathcal{I}_+$ & |[fill=red!15]| $\mathcal{I}_+$ &
        |[fill=green!15]| $\mathcal{I}_+$ & $\mathcal{I}_+$ & $\mathcal{I}_+$ & |[fill=yellow!30]| $\mathcal{I}_+$ &
        |[fill=gray!15]| $\mathcal{I}_0$ & $\mathcal{I}_+$ & $\mathcal{I}_+$ & |[fill=gray!15]| $\mathcal{I}_0$ &
        |[fill=green!15]| $\mathcal{I}_-$ & $\mathcal{I}_+$ & $\mathcal{I}_+$ & |[fill=yellow!30]| $\mathcal{I}_-$ &
        |[fill=blue!15]| $\mathcal{I}_-$ & $\mathcal{I}_+$ & $\mathcal{I}_+$ & |[fill=red!15]| $\mathcal{I}_-$ \\
    };

    \draw [decorate, decoration={brace, amplitude=12pt}, thick] 
        ([yshift=-8pt]vec-1-15.south east) -- ([yshift=-8pt] vec-1-2.south west) 
        node [label font, midway, below=16pt] {Segment 1};

    \draw [decorate, decoration={brace, amplitude=12pt}, thick] 
        ([yshift=-8pt]vec-1-19.south east) -- ([yshift=-8pt] vec-1-18.south west) 
        node [label font, midway, below=16pt] {Segment 2};

    \draw [decorate, decoration={brace, amplitude=12pt}, thick] 
        ([yshift=-8pt]vec-1-23.south east) -- ([yshift=-8pt] vec-1-22.south west) 
        node [label font, midway, below=16pt] {Segment 3};

    \draw[-Latex, very thick] (mat.south) -- node[right, label font, xshift=12pt] {flatten} (vec.north);

\end{tikzpicture}
    }
    \caption{Memory layout of the standard RFFT of a 6x6 matrix in a row-major memory format. Real values belonging to index group $\mathcal{I}_0$ (c.f. Proposition~\ref{prop:packed_dft_unitary}) are shaded gray, while complex elements (without conjugated counterpart) of group $\mathcal{I}_+$ are shown on white background. All other colors show redundant complex conjugated  pairs ($\mathcal{I}_+$, $\mathcal{I}_-$) as Hermitian symmetry in the first and the last column. For packed readout, we only need to extract one unique element, say $\mathcal{I}_+$ of each color. From the flattened in-memory order one can see that the unique complex elements ($\mathcal{I}_+$) are distributed into 3 different non-consecutive segments which does not allow for an efficient bulk memory readout.}
    \label{fig:memory_rfft}
\end{figure}

The primary obstacle to achieving the contiguous memory layout required for a packed readout lies in how standard software libraries implement multidimensional transforms, and how this scatters Hermitian-redundant information in memory. In a row-major memory architecture (such as C or Python), memory is contiguous along the last dimension (the rows)\footnotemark{}. 
Standard FFT libraries using a row-major ordering effectively compute a 2D RFFT by applying a 1D RFFT along the memory-contiguous rows, followed by a full 1D FFT along the non-contiguous columns. While this ordering may be optimized for transform speed (especially for in-place RFFTs) by maximizing cache utilization, it is detrimental to our packing objectives. It weaves the redundant symmetric components intricately throughout the memory layout (c.f. Figure~\ref{fig:memory_rfft}). Consequently, isolating the unique degrees of freedom from this standard output forces the programmer to explicitly allocate and construct the packed vector $\Fp{}$ using strided memory-gathering operations from different segments, thereby preventing efficient bulk memory access.

\footnotetext{For simplicity, we assume row-major ordering in the following discussion, though the same mechanism applies similarly to column-major architectures, like in Julia.} 

\subsubsection{Axis-reordered 2D RFFT}

To circumvent the memory fragmentation bottleneck, discussed above, we propose a tailored, non-standard structural reordering of the 2D RFFT, which we call the \emph{axis-reordered 2D RFFT}. Specifically, we first apply the 1D RFFT along the non-contiguous (column) axis, and subsequently apply the full 1D FFT along the contiguous (row axis). A simple Python implementation is shown in Listing~\ref{listing:axis_reordered_rfft}.

Before discussing the consequences of this transformation on the memory layout, it is worth noting that since in the discussed row-major memory architecture, the 1D RFFT is applied along the column axis, this naturally defines the dimension along which the input vector $\Fx{}$ created by a 1D RFFT needs to be broadcast across the 2D input weights $\FWx$ (c.f. Figure~\ref{fig:dimensional_padding}). 

\begin{listing}[!ht]
\begin{minted}{python}
import numpy as np

def axis_reordered_rfft2(X):
    """
    Computes the axis-reordered 2D RFFT for a real matrix X of shape
    (K1, K2). Returns a complex array of shape (K1//2 + 1, K2).
    """
    return np.fft.rfft2(X, axes=(-1,-2))
\end{minted}
\caption{Example implementation of the axis-reordered 2D RFFT using Python.}
\label{listing:axis_reordered_rfft}
\end{listing}

The specific sequence of the axis-reordered 2D RFFT fundamentally reorganizes the symmetries, grouping them in a way that aligns better with the contiguous memory layout of unique complex components (c.f. Figure~\ref{fig:memory_rfft_axis_reordered}). 
By applying the first 1D RFFT on the non-contiguous column axis first, the intermediate representation yields strictly real-valued sequences in its DC and Nyquist rows (the first and last rows, respectively). All intermediate rows contain uniquely complex data. When the second, full 1D FFT is subsequently applied along the memory-contiguous rows, it operates on these strictly real first and last rows to produce standard 1D Hermitian symmetries entirely within continuous memory blocks. This alignment allows the redundant symmetric halves of these specific contiguous rows to be effortlessly cut away via simple truncation. As a result, almost the entire mass of non-redundant complex data is safely isolated and consolidated into a single, uninterrupted contiguous block as shown in Figure~\ref{fig:memory_rfft_axis_reordered}.

\begin{figure}[h]
    \centering
    \resizebox{\linewidth}{!}{\begin{tikzpicture}[
    cell/.style={
        draw, 
        minimum size=12mm,
        anchor=center, 
        align=center, 
        inner sep=0pt, 
        font=\Large 
    },
    label font/.style={
        font=\LARGE
    }
]

    \matrix[
        matrix of nodes,
        nodes={cell},
        column sep=-\pgflinewidth, 
        row sep=-\pgflinewidth
    ] (mat) at (0,0) {
        |[fill=gray!15]| $\mathcal{I}_0$ & |[fill=blue!15]| $\mathcal{I}_+$ & |[fill=red!15]| $\mathcal{I}_+$ & |[fill=gray!15]| $\mathcal{I}_0$ & |[fill=red!15]| $\mathcal{I}_-$ &|[fill=blue!15]| $\mathcal{I}_-$ \\
        $\mathcal{I}_+$ & $\mathcal{I}_+$ & $\mathcal{I}_+$ & $\mathcal{I}_+$ & $\mathcal{I}_+$ & $\mathcal{I}_+$ \\
        $\mathcal{I}_+$ & $\mathcal{I}_+$ & $\mathcal{I}_+$ & $\mathcal{I}_+$ & $\mathcal{I}_+$ & $\mathcal{I}_+$ \\
        |[fill=gray!15]| $\mathcal{I}_0$ & |[fill=green!15]| $\mathcal{I}_+$ & |[fill=yellow!30]| $\mathcal{I}_+$ & |[fill=gray!15]| $\mathcal{I}_0$ & |[fill=yellow!30]| $\mathcal{I}_-$ &|[fill=green!15]| $\mathcal{I}_-$ \\
    };

    \matrix[
        matrix of nodes,
        nodes={cell, minimum size=10mm}, 
        column sep=-\pgflinewidth,
        below=2cm of mat
    ] (vec) {
        |[fill=gray!15]| $\mathcal{I}_0$ & |[fill=blue!15]| $\mathcal{I}_+$ & |[fill=red!15]| $\mathcal{I}_+$ & |[fill=gray!15]| $\mathcal{I}_0$ & |[fill=red!15]| $\mathcal{I}_-$ &|[fill=blue!15]| $\mathcal{I}_-$ &
        $\mathcal{I}_+$ & $\mathcal{I}_+$ & $\mathcal{I}_+$ & $\mathcal{I}_+$ & $\mathcal{I}_+$ & $\mathcal{I}_+$ &
        $\mathcal{I}_+$ & $\mathcal{I}_+$ & $\mathcal{I}_+$ & $\mathcal{I}_+$ & $\mathcal{I}_+$ & $\mathcal{I}_+$ &
        |[fill=gray!15]| $\mathcal{I}_0$ & |[fill=green!15]| $\mathcal{I}_+$ & |[fill=yellow!30]| $\mathcal{I}_+$ & |[fill=gray!15]| $\mathcal{I}_0$ & |[fill=yellow!40]| $\mathcal{I}_-$ &|[fill=green!15]| $\mathcal{I}_-$ \\
    };

    \draw [decorate, decoration={brace, amplitude=12pt}, thick] 
        ([yshift=-8pt]vec-1-21.south east) -- ([yshift=-8pt]vec-1-5.south west) 
        node [label font, midway, below=16pt] {Single contiguous memory block covering all unique complex numbers};

    \draw[-Latex, very thick] (mat.south) -- node[right, label font, xshift=12pt] {flatten} (vec.north);

\end{tikzpicture}}
    \caption{Memory layout of the proposed axis-reordered RFFT. See Proposition~\ref{prop:packed_dft_unitary} for a definition of the index groups $\mathcal{I}_0$, $\mathcal{I}_+$, and $\mathcal{I}_-$.}
    \label{fig:memory_rfft_axis_reordered}
\end{figure}

\subsubsection{Packed \FDh readout}

Thanks to the axis-reordered 2D RFFT, we eliminate the need for extracting isolated memory segments from $\FR{}$ to construct $\Fp{}$. The vast majority of the packed vector $\Fp{}$ can simply be mapped directly from a continuous memory block via a zero-overhead pointer reinterpretation. Consequently, almost the full readout matrix multiplication can be executed directly on this raw memory view using optimized linear algebra routines on CPU or GPU. 

The minor structural anomalies that remain (specifically, the four self-conjugate elements of $\mathcal{I}_0$: DC-DC, Nyquist-DC, DC-Nyquist, and Nyquist-Nyquist) are subsequently resolved using four scalar corrections per output. Moreover, instead of scaling the large number of elements of $\mathcal{I}_+$ as derived in Proposition~\ref{prop:packed_dft_unitary} by $\sqrt{2}$, we only scale the remaining 4 elements of $\mathcal{I}_0$ by $1/\sqrt{2}$. Although this leads to a global factor on the packed vector $\Fp{}$, we have seen in Corollary~\ref{cor:scaled_unitary} that this can be compensated for by a rescaled regularization parameter $\lambda$.

Listing~\ref{listing:pack} provides an explicit implementation of the packing operation $\pack{\cdot}$, which is practically used to accumulate sequential reservoir states into a dense feature matrix for training the readout weights via ridge regression. 

\begin{listing}[!ht]
\begin{minted}{python}
import numpy as np

def pack(R):
    """
    Packs the redundant complex representation R as e.g. obtained by 
    R = axis_reordered_rfft2(X) into a dense, real-valued vector p of
    length N = K1 * K1 where (K1, K2) is the shape of the real matrix X.
    The original spatial domain dimensions K1 and K2 are assumed to be even.    
    """ 
    K1 = 2 * (R.shape[0] - 1)
    K2 = R.shape[1]

    R_view = R.view(np.float64).flatten()

    # --- Define source indices (indices of X_view) ---

    idx_src_dc_dc = 0
    idx_src_dc_ny = K2
    idx_src_ny_dc = K1 * K2
    idx_src_ny_ny = idx_src_ny_dc + K2
    
    src_start     = idx_src_dc_ny + 2
    src_end       = idx_src_ny_ny
    block_len     = src_end - src_start
    
    # --- Define destination indices (indices of p) ---
    
    idx_dst_ny_dc = idx_src_ny_dc - src_start + 2
    idx_dst_ny_ny = idx_dst_ny_dc + 1

    # --- Copy bulk part ---

    p = np.empty(K1*K2, dtype = np.float64)
    p[2 : 2 + block_len] = R_view[src_start : src_end]
    
    # --- Copy the scaled 4 remaining elements ---
    
    scale            = 1/np.sqrt(2.0)

    p[0]             = R_view[idx_src_dc_dc] * scale
    p[1]             = R_view[idx_src_dc_ny] * scale
    p[idx_dst_ny_dc] = R_view[idx_src_ny_dc] * scale
    p[idx_dst_ny_ny] = R_view[idx_src_ny_ny] * scale
    
    return p
\end{minted}
\caption{Example Python implementation of the packing operation $\Fp{} = \pack{\FR{}}$.}
\label{listing:pack}
\end{listing}

\subsubsection{Experiment: Timing of readout}
\label{app:readout-benchmark}

This experiment evaluates the computational efficiency of the proposed packed \FDh readout defined in Eq.~\eqref{eq:packed_readout}. As a baseline, we benchmark this method against the standard real-valued dense readout layer typically employed in conventional (\SDh) ESNs.The reported runtimes encompass the computation of a single output vector. For FRESCO, this includes the extraction of the packed state vector $\Fp{}$ via the packing operation $\pack{\FR{}}$ followed by the dense matrix-vector multiplication. For the baseline ESN, this consists purely of the standard matrix-vector product. We evaluate these operations across different reservoir sizes $\SNr$ and output sizes $\SNy$ by assuming $\SNr = 256 \SNy$. As illustrated in Figure~\ref{fig:bench_readout}, the packed \FDh readout matches the computational speed of the standard ESN baseline. These results confirm that the tailored memory reordering and the packing operation $\pack{\cdot}$ introduce negligible computational overhead. Consequently, extracting the target output directly from the complex \FDh reservoir state completely circumvents the need for costly inverse FFTs while fully preserving the high readout efficiency of standard ESN architectures. 

\begin{figure}[htbp]
    \centering
    \begin{tikzpicture}[remember picture] 

    \def\plotWidth{10cm}
    \def\plotHeight{7cm}
    \def\timediv{1000}
    \def\enlargefact{0.05}

    \pgfplotsset{
        my plot layout/.style={
            width=\plotWidth,
            height=\plotHeight,
            scale only axis,
            log basis x=2,
            xtick=data,
            grid=both,
            minor tick num=0, 
            grid style={gray!20},
            yminorgrids=false,
            label style={font=\small},
            tick label style={font=\footnotesize},
            legend style={font=\footnotesize},
        }
    }

    \pgfplotstableread[col sep=comma]{data/data_bench_readout.csv}\datatable

    \begin{axis}[
        my plot layout,
        xmode=log, ymode=log,
        name=mainaxis,          
        xlabel={$\SNy$ (output size)},
        ylabel={Time ($\mu$s)},
        legend pos=north west,
        legend cell align=left,
        enlarge x limits=\enlargefact, 
        xticklabels from table={\datatable}{Nx}   
    ]
        \addplot[color=gray, dotted, mark=square*, mark options={solid}, ultra thick] table [
            x=Nx, y expr={\thisrow{time_readout_sd_blas_ns}/\timediv}
        ] {\datatable};
        \addlegendentry{Dense layer readout (standard ESN)}
        
        \addplot[color=Set2-B, mark=square*, ultra thick] table [
            x=Nx, y expr={\thisrow{time_readout_fd_blas_ns}/\timediv}
        ] {\datatable};
        \addlegendentry{Packed frequency-domain readout (FRESCO)}
    \end{axis}

    \begin{axis}[
        my plot layout,
        xmode=log, ymode=log,    
        at={(mainaxis.south west)},
        anchor=south west,      
        axis y line=none,
        axis x line=top,
        enlarge x limits=\enlargefact,
        xlabel={$\SNr$ (reservoir size)},
        xticklabels from table={\datatable}{N},   
        xticklabel style={rotate=45, anchor=south west} 
    ]
        \addplot[draw=none, forget plot] table [
            x=Nx, y expr={\thisrow{time_readout_fd_blas_ns}/\timediv}
        ] {\datatable};
    \end{axis}
\end{tikzpicture}
    \caption{Comparison of minimal computational times for packed \FDh readout compared to a conventional ESN readout as baseline, for increasing reservoir size $\SNr$ and output size $\SNy$. The small differences confirm that the packing operation $\Fp{} = \pack{\FR{}}$ is negligible compared to the actual matrix-vector products.}
    \label{fig:bench_readout}
\end{figure}

\subsubsection{Proof: Invariance of ridge regression under unitary transformation}
\label{app:ridge_invariance}

In this section, we established that the optimal weights obtained via ridge regression are mathematically invariant under unitary transformations of the feature space. This powerful property guarantees that training a reservoir readout yields equivalent results whether performed in the spatial- or frequency domain, or the packed reservoir representation, provided the transformation between them is unitary (such as the normalized DFT, or the packing operation as shown in the subsequent section).

\begin{proposition}[Ridge regression under unitary feature transformations]
\label{prop:unitary_equiv}
Let $Y \in \mathbb{C}^{k \times n}$ be a target matrix and $R \in \mathbb{C}^{p \times n}$ be a feature matrix. Let $W$ be the weight matrix minimizing the ridge regression objective for the original features $R$. Consider a unitary linear transformation matrix $D \in \mathbb{C}^{p \times p}$ (such that $D^\dagger D = D D^\dagger = I$, where $\dagger$ denotes the Hermitian transpose). Then, the weight matrix $W_{\!D}$ minimizing the ridge regression objective for the transformed features $DR$ is equivalent to the original weights transformed by $D^{-1}$:
\[
W_{\!D} = W D^{-1}.
\]
Thus, performing ridge regression in the transformed domain is mathematically equivalent to the original domain, and the corresponding weights can be recovered by post-multiplying the transformed weights by $D$.
\end{proposition}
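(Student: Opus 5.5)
The plan is to write the ridge objective explicitly, derive its closed-form minimizer, and then show that substituting the transformed features $DR$ turns this minimizer into $WD^{-1}$. With features as columns of $R$ and targets as columns of $Y$, the objective is
\[
J(W) = \|Y - WR\|_F^2 + \lambda \|W\|_F^2 .
\]
Because $RR^\dagger + \lambda I$ is Hermitian positive definite for $\lambda>0$, $J$ is strictly convex in $W$. Its unique minimizer is
\[
W = Y R^\dagger (R R^\dagger + \lambda I)^{-1}.
\]
This can be justified by setting the Wirtinger derivative with respect to $\bar W$ to zero, which gives the normal equations $W(RR^\dagger+\lambda I) = YR^\dagger$. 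The same formula with $R$ replaced by $DR$ gives the transformed minimizer
\[
W_{\!D} = Y (DR)^\dagger (DR R^\dagger D^\dagger + \lambda I)^{-1} = Y R^\dagger D^\dagger \bigl(D (RR^\dagger + \lambda I) D^\dagger\bigr)^{-1}.
\]
The second equality uses unitarity to write $\lambda I = \lambda D D^\dagger$.

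The key step is to invert the conjugated matrix:
\[
\bigl(D (RR^\dagger+\lambda I) D^\dagger\bigr)^{-1} = D (RR^\dagger+\lambda I)^{-1} D^\dagger,
\]
again using $D^{-1}=D^\dagger$. Substituting this back, the factor $D^\dagger D = I$ cancels, leaving $W_{\!D} = YR^\dagger (RR^\dagger+\lambda I)^{-1} D^\dagger = W D^{-1}$.

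As a cross-check that avoids closed forms, I would also give a direct argument. Unitary invariance of the Frobenius norm shows that $J_D(V) = \|Y - VDR\|_F^2 + \lambda\|V\|_F^2$ equals $J(VD)$, since $\|V\|_F = \|VD\|_F$. The map $V \mapsto VD$ is a bijection, so minimizers correspond: $V^\ast D = W$, that is, $V^\ast = WD^{-1}$. This route also makes uniqueness immediate from strict convexity.

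The main obstacle is keeping the complex setting consistent. The objective must use Hermitian transposes so that the regularizer $\|W\|_F^2$ is invariant under right multiplication by a unitary, and the weights $W_{\!D}$ obtained by right multiplication must be interpreted correctly. I would rely on the norm-invariance argument as the primary proof, because it needs neither differentiation with respect to complex matrices nor any invertibility assumption on $RR^\dagger$ beyond $\lambda>0$. Finally, I would note that predictions coincide: $W_{\!D}(DR) = WR$, so inference is identical in both domains.
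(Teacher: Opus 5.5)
Your proposal is correct. Its first half (closed-form minimizer, rewriting $\lambda I = \lambda DD^\dagger$, factoring $D(RR^\dagger+\lambda I)D^\dagger$ and inverting the conjugation) is essentially the paper's proof.

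The argument you designate as primary takes a genuinely different route. It rests on $J_D(V) = J(VD)$, which follows from $\|VD\|_F = \|V\|_F$, together with the bijection $V \mapsto VD$. This works at the level of the objective rather than its minimizer formula, which buys several things:
\begin{itemize}
\item It needs neither complex (Wirtinger) differentiation nor an explicit inverse.
\item It transfers minimizers directly, with uniqueness then following from strict convexity for $\lambda>0$.
\item It makes Remark~\ref{rem:num_solvers} self-evident, because the equivalence becomes a property of the optimization problem itself, not of any formula used to solve it.
\end{itemize}

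It also gives Corollary~\ref{cor:scaled_unitary}, which the paper states without proof, in one line. If $D^\dagger D = cI$ with $D$ square, then $DD^\dagger = cI$ as well, so $\|VD\|_F^2 = c\|V\|_F^2$. Hence $\|Y-VDR\|_F^2+\lambda\|V\|_F^2 = \|Y-(VD)R\|_F^2 + (\lambda/c)\|VD\|_F^2$, which yields $W_{\!D} = W(\lambda/c)D^{-1}$.

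In exchange, the paper's computation gives the explicit normal-equation form $YR^\dagger(RR^\dagger+\lambda I)^{-1}$ that the implementation actually solves, e.g.\ via Cholesky.
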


\begin{proof}
The ridge regression objective for the original data is to minimize:
\[
L(W) = \|Y - W R\|_F^2 + \lambda \|W\|_F^2,
\]

where $\lambda$ is the regularization parameter. Setting the gradient of $L(W)$ with respect to $W$ to zero yields the standard solution:
\[
W = Y R^\dagger (R R^\dagger + \lambda I)^{-1}
\]

When applying the transformation $D$ so that $R$ is replaced by $D R$, we seek the weight matrix $W_{\!D}$ that minimizes:
\[
L(W_{\!D}) = \|Y - W_{\!D} (D R)\|_F^2 + \lambda \|W_{\!D}\|_F^2
\]

Setting the gradient of $L(W_{\!D})$ to zero and solving for the transformed data yields:
\[
W_{\!D} = Y (D R)^\dagger ((D R) (D R)^\dagger + \lambda I)^{-1}
\]

Expanding the Hermitian transpose $(D R)^\dagger = R^\dagger D^\dagger$ leads to:
\[
W_{\!D} = Y R^\dagger D^\dagger (D R R^\dagger D^\dagger + \lambda I)^{-1}
\]

Because $D$ is a unitary matrix, it satisfies $D D^\dagger = I$. We can therefore rewrite the identity matrix $I$ in the regularization term as $D D^\dagger$. Substituting this yields:
\[
\begin{aligned}
W_{\!D} &= Y R^\dagger D^\dagger (D R R^\dagger D^\dagger + \lambda D D^\dagger)^{-1} \\
W_{\!D} &= Y R^\dagger D^\dagger [D (R R^\dagger + \lambda I) D^\dagger]^{-1}
\end{aligned}
\]

Applying the matrix inverse property $(ABC)^{-1} = C^{-1} B^{-1} A^{-1}$:
\[
W_{\!D} = Y R^\dagger D^\dagger (D^\dagger)^{-1} (R R^\dagger + \lambda I)^{-1} D^{-1}
\]

Since $D^\dagger (D^\dagger)^{-1} = I$ for unitary matrices, the expression simplifies to:
\[
W_{\!D} = Y R^\dagger (R R^\dagger + \lambda I)^{-1} D^{-1}
\]

Recognizing the original definition of $W$, we perfectly recover the spatial weights post-multiplied by $D^{-1}$ (or $D^\dagger$):
\[
W_{\!D} = W D^{-1} = W D^\dagger 
\]
\end{proof}

Depending on the normalization of the DFT and the explicit frequency representation used, this equivalence holds up to a pre-factor that can be absorbed in the regularization:

\begin{corollary}[Scaled unitary transformations]
\label{cor:scaled_unitary}
If $D$ applies a uniform scale alongside a unitary transformation such that $D^\dagger D = cI$ for some positive real scalar $c$, mathematical equivalence is maintained by adjusting the regularization parameter to $\lambda' = \lambda / c$, yielding $W_{\!D} = W(\lambda') D^{-1}$.
\end{corollary}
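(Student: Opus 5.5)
The approach is to compute the scaled ridge objective directly and reduce it to the original one with a rescaled regularizer. Write $D = \sqrt{c}\,U$ with $U = D/\sqrt{c}$. Then $U^\dagger U = I$, and since $D$ is square ($p\times p$) this also gives $U U^\dagger = I$, so $U$ is unitary in the sense of Proposition~\ref{prop:unitary_equiv}. For the transformed features $DR$ the ridge objective with regularization $\lambda$ is
\[
L_D(W_{\!D}) = \|Y - W_{\!D} D R\|_F^2 + \lambda \|W_{\!D}\|_F^2 .
\]
I will substitute $V = \sqrt{c}\,W_{\!D}$. Because $W_{\!D}D = W_{\!D}\sqrt{c}\,U = VU$, and $\|W_{\!D}\|_F^2 = \|V\|_F^2/c$, the objective becomes
\[
L_D = \|Y - V U R\|_F^2 + (\lambda/c)\,\|V\|_F^2 .
\]
This is the ridge objective for the unitarily transformed features $UR$ with regularization $\lambda' = \lambda/c$.

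Next I apply Proposition~\ref{prop:unitary_equiv} to $U$ with parameter $\lambda'$. It gives the minimizer $V = W(\lambda')\,U^{-1}$, where $W(\lambda')$ denotes the original-domain ridge solution at regularization $\lambda'$. Undoing the substitution,
\[
W_{\!D} = V/\sqrt{c} = W(\lambda')\,U^{-1}/\sqrt{c} = W(\lambda')\,(\sqrt{c}\,U)^{-1} = W(\lambda')\,D^{-1},
\]
which is the claimed identity.

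It remains to justify that minimizers correspond under the change of variables. The map $W_{\!D}\mapsto V$ is a bijection, and the objective is strictly convex for $\lambda>0$, so the unique minimizers correspond. Predictions are also unchanged: $W_{\!D} D R = W(\lambda') R$.

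As a cross-check, one can substitute into the closed form $W_{\!D} = Y R^\dagger D^\dagger (D R R^\dagger D^\dagger + \lambda I)^{-1}$. Using $\lambda I = (\lambda/c) D D^\dagger$, which requires $DD^\dagger = cI$ and follows from squareness, this factors as $D(RR^\dagger + \lambda' I)D^\dagger$, and the rest is as in the proof of the proposition. The main point needing care is precisely this use of $DD^\dagger = cI$ from $D^\dagger D = cI$. If $D$ were rectangular (for example a packing map between spaces of different dimension), I would instead rely only on the substitution argument, which uses only the norm-preservation property $\|VU R\|$ structure and the identity $U^\dagger U = I$.
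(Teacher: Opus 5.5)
Your proof is correct. The paper gives no separate proof of this corollary. It is meant to follow by rerunning the closed-form computation of Proposition~\ref{prop:unitary_equiv} with $\lambda I = (\lambda/c)\,DD^\dagger$, which is exactly your cross-check. As you note, $DD^\dagger = cI$ follows from $D^\dagger D = cI$ because $D$ is square.

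Your main route works differently. You substitute $V = \sqrt{c}\,W_{\!D}$ in the objective, which reduces the scaled case to the unitary one, and you then use Proposition~\ref{prop:unitary_equiv} as a black box without touching the normal equations. This makes it clear why the regularizer must be rescaled: in terms of $V$ the data term $\|Y - VUR\|_F^2$ does not depend on the scale, and only the penalty $\|W_{\!D}\|_F^2 = \|V\|_F^2/c$ does. The paper's computational route instead yields the explicit solution formula directly.

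One caveat concerns your closing remark about rectangular $D$, which is not needed for the statement. For a rectangular $D$ with $D^\dagger D = cI$, $D^{-1}$ does not exist. Also, Proposition~\ref{prop:unitary_equiv} cannot be applied to the isometry $U$, because its proof uses $UU^\dagger = I$. So the substitution alone does not finish the argument.

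You would need an extra step. For example, split $V = AU^\dagger + B$ with $BU = 0$. Then $VUR = AR$ and $\|V\|_F^2 = \|A\|_F^2 + \|B\|_F^2$, so $B = 0$ and $A = W(\lambda')$. This gives $W_{\!D} = W(\lambda')\,D^\dagger/c$.

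This does not affect the corollary as stated, where $D$ is square. The paper's packing matrix $U$ is also $N \times N$, so packing is not actually a rectangular example.
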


\begin{remark}[Independence from numerical solvers]
\label{rem:num_solvers}
The mathematical equivalence demonstrated in Proposition~\ref{prop:unitary_equiv} is a fundamental property of the ridge regression objective. Therefore, the invariance holds strictly true regardless of the numerical method used to compute the matrix inversion (e.g., Cholesky decomposition for positive-definite systems or the Bunch-Kaufman decomposition). 
\end{remark}

\subsubsection{Proof: Unitarity of the packed \FDh representations}
\label{app:unitarity_packed}

\begin{figure}[h]
    \centering
    \resizebox{\linewidth}{!}{
    \begin{tikzpicture}[
    block/.style={
        rectangle, 
        draw=black, 
        fill=white, 
        very thick, 
        minimum width=4.6cm, 
        minimum height=1.4cm, 
        align=center,
        font=\sffamily,
        inner sep=6pt 
    },
    bi_arrow/.style={
        {Stealth[scale=1.2]}-{Stealth[scale=1.2]}, 
        thick, 
        shorten >=6pt, 
        shorten <=6pt
    },
    math_label/.style={
        font=\small
    }
]

    
    \node[block] (spatial) {
        \textbf{Spatial Domain}\\ 
        \vspace{0.1cm}
        Representation \\
        $\Sr{} \in \mathbb{R}^N$\\
    };

    \node[block] (freq) [right=3.5cm of spatial] {
        \textbf{Frequency Domain}\\ 
        \vspace{0.1cm}
        Representation\\
        $\Fr{} \in \mathbb{C}^N$\\
    };

    \node[block] (packed) [right=3.5cm of freq] {
        \textbf{Packed}\\ 
        \vspace{0.1cm}
        Representation \\
        $\Fp{} \in \mathbb{R}^N$\\
    };


    \draw[bi_arrow] (spatial) -- (freq) 
        node[midway, above=3pt, math_label] {DFT $\fourier{\cdot}$}
        node[midway, below=3pt, math_label] {Unitary};

    \draw[bi_arrow] (freq) -- (packed) 
        node[midway, above=3pt, math_label] {Packing $\pack{\cdot}$}
        node[midway, below=3pt, math_label] {Unitary};

\end{tikzpicture}
    }
    \caption{Relationship between the reservoir state representations. The spatial domain state $\Sr{}$ is transformed to the complex frequency domain $\Fr{}$ via the DFT $\fourier{\cdot}$, which is then transformed to the dense, real-valued packed state $\Fp{}$ via the packing operation $\pack{\cdot}$. Because both transformations are unitary, performing ridge regression on the packed representation $\Fp{}$ yields mathematically identical results to training directly in the spatial domain.}
    \label{fig:unitary_reps}
\end{figure} 

\paragraph{Context and motivation.} 
The purpose of the following proposition is to rigorously prove that the packed representation introduced in Section~\ref{app:packed_readout} is connected to the full complex frequency domain via a unitary mapping. By establishing this unitary link, we can invoke Proposition~\ref{prop:unitary_equiv} from the previous section to definitively claim that performing ridge regression on the highly efficient packed vectors is mathematically identical to training in the original spatial domain. The transformations between the \SDh, \FDh and the packed reservoir states are illustrated in Figure~\ref{fig:unitary_reps}.

Crucially, the following mathematical formulation is constructive. Specifically, Step 4 of the proof (``Proving the Basis Spans $W$ over $\mathbb{R}$'') serves as the practical algorithmic recipe for creating these packed vectors. It details exactly how to isolate purely real self-conjugate elements, how to split complex conjugate pairs into real and imaginary components, and precisely how to scale them (e.g., by a factor of $\sqrt{2}$) to conserve energy and satisfy the unitary condition. Note that while the proof explicitly constructs the unitary matrix $U$ to establish theoretical equivalence, this large $N \times N$ matrix never needs to be instantiated in software; one simply applies the inexpensive, element-wise packing and scaling rules derived herein.

\paragraph{Setup and definitions.} 
Let $x$ be a real-valued $n$-dimensional tensor defined on a spatial grid of dimensions $\mathbf{N} = (N_1, N_2, \dots, N_n)$. The total number of elements is $N = \prod_{d=1}^n N_d$. Let $\mathcal{X}$ be the $n$-dimensional DFT of $x$, residing on the frequency grid $\mathcal{G} = \mathbb{Z}_{N_1} \times \mathbb{Z}_{N_2} \times \dots \times \mathbb{Z}_{N_n}$. 

Because the input $x$ is real, its frequency representation $\mathcal{X}$ exhibits Hermitian symmetry. For any multi-dimensional frequency index $\mathbf{k} \in \mathcal{G}$, this symmetry is defined as:
\begin{equation}
    \mathcal{X}(\mathbf{k}) = \mathcal{X}^*(-\mathbf{k} \pmod{\mathbf{N}})
\end{equation}
where $(\cdot)^*$ denotes complex conjugation, and $-\mathbf{k} \pmod{\mathbf{N}}$ denotes the element-wise negation of the indices modulo their respective dimension sizes. 

Let $F: \mathcal{G} \to \mathcal{I}$ be a bijective flattening function (e.g., standard row-major ordering) that maps the $n$-dimensional grid coordinates to a 1D linear index set $\mathcal{I} = \{0, 1, \dots, N-1\}$. Let $X \in \mathbb{C}^N$ be the flattened 1D vector of $\mathcal{X}$, such that $X[F(\mathbf{k})] = \mathcal{X}(\mathbf{k})$. The Hermitian symmetry on the grid $\mathcal{G}$ induces a bijective mapping $S: \mathcal{I} \to \mathcal{I}$ on the linear indices, defined by $S(F(\mathbf{k})) = F(-\mathbf{k} \pmod{\mathbf{N}})$. Intuitively, $S$ captures the Hermitian symmetry within the flattened 1D representation, mapping each linear index in $\mathcal{I}$ to the index of its complex conjugate counterpart, or mapping to itself if the corresponding frequency component is purely real. Because double negation on the modular grid yields the original coordinate, $S$ is an involution, meaning $S(S(m)) = m$ for all $m \in \mathcal{I}$. 

We define $W = \{ X \in \mathbb{C}^N \mid X[m] = X^*[S(m)] \;\; \forall m \in \mathcal{I} \}$ as the subset of $\mathbb{C}^N$ consisting of all valid frequency vectors that satisfy this Hermitian symmetry. Because these symmetry constraints couple the real and imaginary components of the complex frequencies, they eliminate exactly half of the $2N$ real degrees of freedom available in $\mathbb{C}^N$. Consequently, any valid frequency vector in $W$ is entirely determined by exactly $N$ independent real values. This motivates the formulation of an efficient, packed representation in $\mathbb{R}^N$, alongside a unitary transformation capable of perfectly reconstructing any possible vector in $W$ from this packed state.

\begin{proposition}[Unitarity of the packed representation]
\label{prop:packed_dft_unitary}
There exists a unitary matrix $U \in \mathbb{C}^{N \times N}$ whose columns form an orthonormal basis such that any complex symmetric vector $X \in W$ can be expressed as a linear combination of these columns using real coefficients. Consequently, for any $X \in W$, there exists a unique real-valued packed vector $p \in \mathbb{R}^N$ such that $X = U p$.
\end{proposition}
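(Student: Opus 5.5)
The approach is to build $U$ explicitly from the orbit structure of the involution $S$ on $\mathcal{I}$.

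\textbf{Partition of the index set.} Since $S$ is an involution, $\mathcal{I}$ splits into fixed points and two-element orbits. Let $\mathcal{I}_0=\{m: S(m)=m\}$. From each two-element orbit $\{m,S(m)\}$ choose one representative; collect these in $\mathcal{I}_+$, and let $\mathcal{I}_-=S(\mathcal{I}_+)$. Then
\[
\mathcal{I}=\mathcal{I}_0\,\dot\cup\,\mathcal{I}_+\,\dot\cup\,\mathcal{I}_-,\qquad |\mathcal{I}_0|+2|\mathcal{I}_+|=N .
\]

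\textbf{Definition of the columns.} Write $e_m$ for the standard basis vectors of $\mathbb{C}^N$. For $m\in\mathcal{I}_0$ take the column $u_m=e_m$. For each $m\in\mathcal{I}_+$ take the two columns
\[
u_m^{\Re}=\tfrac{1}{\sqrt2}\bigl(e_m+e_{S(m)}\bigr),\qquad u_m^{\Im}=\tfrac{i}{\sqrt2}\bigl(e_m-e_{S(m)}\bigr).
\]
This gives exactly $|\mathcal{I}_0|+2|\mathcal{I}_+|=N$ columns. Arranging them in any fixed order gives $U\in\mathbb{C}^{N\times N}$.

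\textbf{Unitarity.} I check orthonormality directly with the Hermitian inner product.
\begin{itemize}
\item Columns built from different orbits have disjoint supports, so they are orthogonal.
\item Each column has norm one. For example, $\|u_m^{\Im}\|^2=\tfrac12(1+1)=1$.
\item Within one orbit, $\langle u_m^{\Re},u_m^{\Im}\rangle=\tfrac{1}{2}\bigl(i\cdot 1+(-i)\cdot 1\bigr)=0$, up to the conjugation convention.
\end{itemize}
Hence $U^\dagger U=I$. Since $U$ is square, this gives unitarity.

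\textbf{Spanning $W$ over $\mathbb{R}$.} This step is constructive and is the only place needing care.

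First, every column lies in $W$.
\begin{itemize}
\item For $m\in\mathcal{I}_0$, the vector $e_m$ satisfies $X[m]=X^*[S(m)]$ because the entries are real and $S(m)=m$.
\item For $u_m^{\Re}$, the entries at $m$ and $S(m)$ are both $1/\sqrt2$, real and equal.
\item For $u_m^{\Im}$, the entries are $i/\sqrt2$ at $m$ and $-i/\sqrt2$ at $S(m)$, which are conjugates.
\end{itemize}
Real linear combinations preserve $W$, because the defining condition is $\mathbb{R}$-linear. So the real span of the columns lies in $W$.

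Conversely, take $X\in W$.
\begin{itemize}
\item For $m\in\mathcal{I}_0$, the condition $X[m]=X^*[m]$ forces $X[m]\in\mathbb{R}$. Set $p_m=X[m]$.
\item For $m\in\mathcal{I}_+$, write $X[m]=a+ib$. Then $X[S(m)]=a-ib$. Set $p_m^{\Re}=\sqrt2\,a$ and $p_m^{\Im}=\sqrt2\,b$.
\end{itemize}
A direct substitution shows $\sum p\,u = X$. This is exactly the packing recipe: real self-conjugate entries are copied, and the real and imaginary parts of the $\mathcal{I}_+$ entries are scaled by $\sqrt2$.

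\textbf{Uniqueness.} Uniqueness of $p$ follows from injectivity of $U$, since it is unitary. Alternatively, $p=U^\dagger X$, and this is real for $X\in W$.

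I expect no genuine obstacle. The one subtlety is that $W$ is only a real vector space, not a complex subspace. The claim must therefore be read as saying that the columns form an orthonormal basis of $\mathbb{C}^N$ whose \emph{real} span equals $W$. Consequently, $\|X\|=\|p\|$ for every $X\in W$, which gives the energy conservation needed to invoke Proposition~\ref{prop:unitary_equiv} and Corollary~\ref{cor:scaled_unitary}.
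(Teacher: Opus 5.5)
Your proof is correct and follows the paper's construction essentially step for step. It uses the same partition $\mathcal{I}_0\cup\mathcal{I}_+\cup\mathcal{I}_-$ and the same columns: your $u_m^{\Im}=\tfrac{i}{\sqrt2}(e_m-e_{S(m)})$ is exactly the paper's $u_{S(m)}$, indexed from the $\mathcal{I}_+$ side instead of the $\mathcal{I}_-$ side. It also uses the same orthonormality checks and the same real coefficients $X[m]$, $\sqrt2\,\Re X[m]$ and $\sqrt2\,\Im X[m]$; your explicit checks that each column lies in $W$ and that $p=U^\dagger X$ is unique only spell out points the paper asserts without argument.
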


\begin{proof}
\textbf{1. Partitioning the index set $\mathcal{I}$.} Using the involution $S$, we partition the linear index set $\mathcal{I}$ into three disjoint subsets:
\begin{itemize}
    \item $\mathcal{I}_0 = \{m \in \mathcal{I} \mid S(m) = m\}$. These are the fixed points of the involution (e.g., the generalized DC and Nyquist frequencies). For these indices, $X[m] = X^*[m]$, meaning $X[m] \in \mathbb{R}$.
    \item For all indices where $S(m) \neq m$, the involution groups them into disjoint pairs $(m, S(m))$. We split these pairs into two sets, $\mathcal{I}_+$ and $\mathcal{I}_-$, such that $\mathcal{I}_+$ contains exactly one index from each pair, and $\mathcal{I}_-$ contains its symmetric counterpart. Thus, $m \in \mathcal{I}_+ \iff S(m) \in \mathcal{I}_-$.
\end{itemize}
By definition, $\mathcal{I}_0 \cup \mathcal{I}_+ \cup \mathcal{I}_- = \mathcal{I}$.

\textbf{2. Construction of the basis vectors.} Let $e_m \in \mathbb{R}^N$ denote the standard basis column vector (a $1$ at index $m$, and $0$ elsewhere). We define $N$ basis vectors $u_m \in \mathbb{C}^N$ piecewise:
\begin{align}
    \text{For } m \in \mathcal{I}_0&: \quad u_m = e_m \\
    \text{For } m \in \mathcal{I}_+&: \quad u_m = \frac{1}{\sqrt{2}}(e_m + e_{S(m)}) \\
    \text{For } m \in \mathcal{I}_-&: \quad u_m = \frac{i}{\sqrt{2}}(e_{S(m)} - e_m)
\end{align}

\textbf{3. Proving unitarity ($U^\dagger U = I$).} Let $U = \begin{bmatrix} u_0 & \dots & u_{N-1} \end{bmatrix}$. To show that $U$ is unitary, we verify that its columns are orthonormal ($u_a^\dagger u_b = \delta_{a,b}$). 
For unit norms ($a = b$):
\begin{itemize}
    \item If $a \in \mathcal{I}_0$: $\|u_a\|^2 = e_a^\top e_a = 1$.
    \item If $a \in \mathcal{I}_+$: $\|u_a\|^2 = \frac{1}{2}(e_a^\top e_a + e_{S(a)}^\top e_{S(a)}) = 1$.
    \item If $a \in \mathcal{I}_-$: $\|u_a\|^2 = \left(\frac{-i}{\sqrt{2}}\right)\left(\frac{i}{\sqrt{2}}\right) (e_{S(a)}^\top e_{S(a)} + e_a^\top e_a) = 1$.
\end{itemize}
For orthogonality ($a \neq b$): If $a$ and $b$ do not belong to the same symmetric pair ($b \neq S(a)$), their underlying standard basis vectors are disjoint, yielding $u_a^\dagger u_b = 0$. If they belong to the same pair, assume without loss of generality that $a \in \mathcal{I}_+$ and $b = S(a) \in \mathcal{I}_-$. Their inner product evaluates to:
\begin{equation}
    u_a^\dagger u_b = \left[ \frac{1}{\sqrt{2}}(e_a + e_b) \right]^\dagger \left[ \frac{i}{\sqrt{2}}(e_a - e_b) \right] = \frac{i}{2} (e_a^\top + e_b^\top)(e_a - e_b) = \frac{i}{2}(e_a^\top e_a - e_b^\top e_b) = 0
\end{equation}
Since all $N$ columns are orthonormal, $U$ is unitary.

\textbf{4. Proving the basis spans $W$ over $\mathbb{R}$.} To prove $U$ maps $\mathbb{R}^N \to W$, we must show any $X \in W$ can be uniquely constructed as $X = \sum_{m=0}^{N-1} p_m u_m$ using real coefficients $p_m \in \mathbb{R}$. We define the coefficients $p_m$ as follows:
\begin{itemize}
    \item For $m \in \mathcal{I}_0$: $p_m = X[m]$.
    \item For $m \in \mathcal{I}_+$: $p_m = \sqrt{2} \Re(X[m])$.
    \item For $m \in \mathcal{I}_-$: $p_m = \sqrt{2} \Im(X[S(m)])$.
\end{itemize}
Evaluating the sum for $\mathcal{I}_0$ yields $X[m]e_m$. For a specific symmetric pair $a \in \mathcal{I}_+$ and $b = S(a) \in \mathcal{I}_-$, the sum is:
\begin{align}
    p_a u_a + p_b u_b &= \sqrt{2}\Re(X[a]) \left[ \frac{1}{\sqrt{2}}(e_a + e_b) \right] + \sqrt{2}\Im(X[a]) \left[ \frac{i}{\sqrt{2}}(e_a - e_b) \right] \\
    &= (\Re(X[a]) + i\Im(X[a]))e_a + (\Re(X[a]) - i\Im(X[a]))e_b \\
    &= X[a]e_a + X^*[a]e_b = X[a]e_a + X[b]e_b
\end{align}
This reconstructs the elements of $X$ at both indices $a$ and $b$. Summing over all partitions yields exactly $X = Up$. Because $p \in \mathbb{R}^N$ and $U$ is unitary, the columns of $U$ form an orthonormal basis for $W$ over $\mathbb{R}$.
\end{proof}

\subsection{Proofs of the Echo State Property Conditions}
\label{app:esp}

This section provides the proofs of Propositions~\ref{prop:esp_sufficient} and~\ref{prop:esp_necessary} from the main text.

\subsubsection{Proof of Proposition~\ref{prop:esp_sufficient} (sufficient condition)}
\label{app:esp-sufficient}

Both FRESCO variants share the same Lipschitz constant $L_{\Factfun}=\alpha$. For $\operatorname{invabs}(z)=\frac{\alpha z}{1+|z|}$, writing $z=re^{i\theta}$ shows the radial and tangential singular values of its Jacobian are $\frac{\alpha}{(1+r)^2}\leq\alpha$ and $\frac{\alpha}{1+r}\leq\alpha$, so $|\operatorname{invabs}(z)-\operatorname{invabs}(w)|\leq\alpha|z-w|$. In FRESCO (mix) the cyclic shift is an isometry under the Frobenius norm (permuting elements preserves the sum of squared magnitudes), so the Lipschitz constant of the composed activation remains $\alpha$.

\begin{proof}
Let $\kappa=(1-\tau)+\tau\alpha\max_{i,j}|\FWr[i,j]|$ and assume $\kappa<1$. Fix a bounded input sequence and consider two trajectories $\FR{t}$ and $\FR{t}'$ driven by the same input from different initial states. Let $\Delta_t=\FR{t}-\FR{t}'$ and $\mathbf{U}_t=\FWx\odot\FX{t}+\FB$. The update Eq.~\eqref{eq:esn-fd-update-2D} gives
\begin{align}
\|\Delta_t\|_F
&\leq (1-\tau)\|\Delta_{t-1}\|_F
 + \tau\bigl\|\Factfun\bigl(\FWr\odot\FR{t-1}+\mathbf{U}_t\bigr)
             -\Factfun\bigl(\FWr\odot\FR{t-1}'+\mathbf{U}_t\bigr)\bigr\|_F \notag\\
&\leq (1-\tau)\|\Delta_{t-1}\|_F
 + \tau\,\alpha\,\|\FWr\odot\Delta_{t-1}\|_F \notag\\
&\leq (1-\tau)\|\Delta_{t-1}\|_F
 + \tau\,\alpha\,\max_{i,j}|\FWr[i,j]|\cdot\|\Delta_{t-1}\|_F \notag\\
&= \kappa\,\|\Delta_{t-1}\|_F,
\end{align}
where the second inequality uses $L_{\Factfun}=\alpha$ and the third uses $\|\FWr\odot\mathbf{M}\|_F\leq\max_{i,j}|\FWr[i,j]|\cdot\|\mathbf{M}\|_F$. By induction, $\|\Delta_t\|_F\leq\kappa^t\|\Delta_0\|_F\to 0$ exponentially.
\end{proof}

\paragraph{Connection to the standard ESN condition.}
By Eq.~\eqref{eq:spectral-radius-hadamard}, $\max_{i,j}|\FWr[i,j]|=\rho(\SWr)$, so the condition is equivalent to $\alpha\cdot\rho(\SWr)<1$, directly mirroring Proposition~\ref{prop:esp_esn} with $\alpha$ replacing the Lipschitz constant of $\tanh$.

\subsubsection{Proof of Proposition~\ref{prop:esp_necessary} (necessary condition)}
\label{app:esp-necessary}

\begin{proof}
Consider the zero-input, zero-bias system, for which $\hat{\mathbf{R}}^{*}=\mathbf{0}$ is a fixed point. A necessary condition for the ESP is stability of this fixed point. Linearizing around $\mathbf{0}$ (using $|\operatorname{invabs}'(z)|=\alpha/(1+|z|)=\alpha$ at $z=0$):
\begin{equation}
J_0 = (1-\tau)I + \tau\alpha\,P\,D_{\FWr},
\end{equation}
where $P$ is the permutation matrix of the cyclic shift and $D_{\FWr}=\operatorname{diag}(\operatorname{vec}(\FWr))$. Because $P$ is a full $\SNr$-cycle,
\begin{equation}
(P\,D_{\FWr})^{\SNr} = \Bigl(\prod_{i,j}\FWr[i,j]\Bigr)\,I,
\end{equation}
so all $\SNr$ eigenvalues of $PD_{\FWr}$ share the same modulus $\bigl(\prod_{i,j}|\FWr[i,j]|\bigr)^{1/\SNr}$. Stability requires $\rho(J_0)\leq 1$, which yields Eq.~\eqref{eq:esp-necessary}.
\end{proof}

\paragraph{Gap between sufficient and necessary conditions.}
The sufficient condition controls $\max_{i,j}|\FWr[i,j]|$; the necessary condition controls the geometric mean. Since the geometric mean never exceeds the maximum, entries with $|\FWr[i,j]|>1/\alpha$ are permissible in FRESCO (mix) provided the geometric mean stays below $1/\alpha$. The circular shift enables this gap: it couples all frequency bins so that global stability is governed by the product of all entries rather than by the worst one.

\paragraph{FRESCO (plain): no gap.}
Without the circular shift, $J_0^{\text{plain}}=(1-\tau)I+\tau\alpha\,D_{\FWr}$ is diagonal. Stability then requires $\alpha\cdot\max_{i,j}|\FWr[i,j]|\leq 1$, coinciding with the sufficient condition up to the strict inequality.


\section{Implementation details}
\label{app:implementation}

We provide two implementations of the models used in the experiments. The first implementation is based on NumPy and Numba and is used for the classical Reservoir Computing benchmarks, namely NARMA10, Mackey-Glass, and time-series classification. This backend is CPU-only and was designed to provide a controlled and coherent comparison between FRESCO-family reservoirs and standard Reservoir Computing baselines. In addition to FRESCO models, we also implemented the standard ESN in the same NumPy/Numba stack. This avoids comparing FRESCO against a baseline implemented with different software stacks and ensures that the reported execution times reflect the reservoir update structure rather than backend-specific overheads.

The second implementation is based on PyTorch and is used for the long-horizon multivariate forecasting benchmarks. These experiments require extracting features for a large number of sliding windows and for multivariate input dimensions, and therefore benefit from GPU execution. The PyTorch implementation is used for the ESN, FRESCO models, and deep learning baselines, namely LSTM, Mamba, and Transformer. 

\paragraph{Reservoir variants.}
The ESN baseline uses a dense random recurrent matrix, an input projection, a bias term, leaky integration, and a real-valued $\tanh$ nonlinearity. The dense recurrent matrix is rescaled to the desired spectral radius before evaluation. In our experiments, we explore reservoir sizes of up to nearly 18k units (see Appendix \ref{app:hypers_long_forecast}). Given the finite hyperparameter search budget, we avoid spending excessive time on spectral radius rescaling of large dense matrices by adopting a fast rescaling method from the literature \cite{gallicchio2019fast}. 

FRESCO-family reservoirs operate in the frequency domain. Both FRESCO (plain) and FRESCO (mix) use the native complex-valued nonlinearity
\begin{equation}
\operatorname{invabs}(z) = \frac{\alpha z}{1+|z|},
\label{eq:app-frequency-nonlinearity}
\end{equation}
where $\alpha$ is the gain parameter. FRESCO (mix) applies an additional deterministic cyclic mixing of frequency coefficients before the nonlinearity.

\paragraph{Packed frequency-domain states.}
For the 
FRESCO variants, the complex frequency-domain state contains Hermitian redundancies because it represents a real-valued spatial-domain signal. We therefore store and expose a packed real-valued representation of the non-redundant frequency coefficients to the readout. During training, the collected complex states are packed once into a real feature matrix. During timed inference, 
FRESCO variants use a direct packed readout path that avoids constructing a new packed vector at every time step.

\paragraph{Readout training.}
For all reservoir models, recurrent and input weights remain fixed after initialization and only the readout is trained. The readout is a linear ridge regression model. For the NumPy/Numba implementation, the default backend is scikit-learn ridge regression with a Cholesky solver. In the PyTorch implementation used for long-horizon forecasting, the same closed-form ridge objective is solved through a custom PyTorch linear algebra module, so that the large reservoir matrices produced by GPU reservoir extraction do not need to be unnecessarily moved back to a CPU solver.

\section{Experimental setup}
\label{app:experiments}

This section provides the details necessary to reproduce the experiments reported in the main paper. All experiments use validation-based model selection. Hyperparameters are optimized with Optuna using a TPE sampler and Hyperband pruning. The validation objective is NRMSE for univariate regression (NARMA10 and Mackey-Glass), accuracy for classification, and MSE for long-horizon multivariate forecasting. After model selection, the best hyperparameter configuration is retrained on the union of the training and validation partitions and evaluated on the held-out test partition over twenty seeds (for NARMA10 and Mackey-Glass), and ten seeds (for all other tasks).

\subsection{Dataset details and pre-processing}
\label{app:dataset-details}

\subsubsection{Classical RC regression benchmarks}

\paragraph{NARMA10.}
For NARMA10, the input sequence is sampled as $x_t \sim \mathcal{U}(0,0.5)$. The target is generated by
\begin{equation}
y_t =
0.3 y_{t-1}
+
0.05 y_{t-1}\sum_{i=1}^{10} y_{t-i}
+
1.5 x_{t-10}x_{t-1}
+
0.1 .
\label{eq:app-narma10}
\end{equation}
The total sequence length is 1750. The first 200 time steps are discarded as washout. The next 1000 points are used for training, the following 250 points for validation, and the final 300 points for testing.

\paragraph{Mackey-Glass.}
For Mackey-Glass, we use the one-dimensional chaotic time series generated from
\begin{equation}
\frac{dx(t)}{dt}
=
\frac{0.2 x(t-17)}{1 + x(t-17)^{10}}
-
0.1 x(t).
\label{eq:app-mackey-glass}
\end{equation}
We generated a time-series of 10000 time steps integrating the equation \ref{eq:app-mackey-glass}.
The prediction task is 84-step-ahead forecasting. Given the scalar input $x_t$, the model predicts $x_{t+84}$. The first 200 points of the lagged sequence are discarded as washout, followed by 1000 training points and 250 validation points. All remaining 8466 points after lagging, washout, training, and validation are used for testing.

\subsubsection{Time-series classification}
\label{app:classification_details}

We evaluate time-series classification on ten datasets from the UEA/UCR archive using the original train/test splits (detailed in Table~\ref{tab:app-stats-classification}). A validation split, using a validation ratio of 0.2, is created from the original training partition. For each sequence, the reservoir is driven by the full time series and a single feature vector is extracted for the readout classifier. The default representation is average pooling over packed reservoir states,
\begin{equation}
\bar{r}
=
\frac{1}{T}
\sum_{t=1}^{T} r_t.
\label{eq:app-average-pooling}
\end{equation}

The batched NumPy/Numba classification pipeline processes multiple independent sequences together.  This batching only affects feature extraction efficiency: the readout still receives one feature vector per sequence. The batch size is 128 for all models and all datasets.

\begin{table}[t]
  \centering
  \small
  \caption{Dataset statistics for the time-series classification experiments. \# Tr and \# Ts denote the number of training and test sequences, respectively. Length is the number of time steps, Input dim. is the number of input channels, and \# Classes is the number of target classes.}
  \label{tab:app-stats-classification}
  \begin{tabular}{lrrrrrl}
    \toprule
    \textbf{Name} & \textbf{\# Tr} & \textbf{\# Ts} & \textbf{Length} & \textbf{Input dim.} & \textbf{\# Classes} & \textbf{Type} \\
    \midrule
    Adiac & 390 & 391 & 176 & 1 & 37 & Image \\
    FordA & 3601 & 1320 & 500 & 1 & 2 & Sensor \\
    JapaneseVowels & 270 & 370 & 29 & 12 & 9 & Audio \\
    Libras & 180 & 180 & 45 & 2 & 15 & HAR \\
    Lightning2 & 60 & 61 & 637 & 1 & 2 & Sensor \\
    Lightning7 & 70 & 73 & 319 & 1 & 7 & Sensor \\
    PEMS-SF & 267 & 173 & 144 & 963 & 7 & Traffic \\
    ShapesAll & 600 & 600 & 512 & 1 & 60 & Image \\
    Wafer & 1000 & 6164 & 152 & 1 & 2 & Sensor \\
    Yoga & 300 & 3000 & 426 & 1 & 2 & Image \\
    \bottomrule
  \end{tabular}
\end{table}

\subsubsection{Long-horizon multivariate forecasting}
\label{app:forecasting_details}

We evaluate long-horizon multivariate forecasting on ETTh1, ETTh2, ETTm1, ETTm2, Solar, and Weather. The input lookback length is fixed to $L_{\mathrm{in}}=96$, the stride is one, and the forecasting horizons are
\[
H \in \{96,192,336,720\}.
\]
All probed variants are listed in Table~\ref{tab:app-stats-longhorizon}. These tasks are multivariate: each model receives a window
\[
\textbf{X}_{t:t+95} \in \mathbb{R}^{96 \times D}
\]
and predicts all variables over the next horizon,
\[
\hat{\textbf{X}}_{t+96:t+96+H-1} \in \mathbb{R}^{H \times D}.
\]
Inputs and targets are standardized using statistics fitted on the training partition and then applied to validation and test partitions. The same scaling protocol is used across both reservoir models and deep learning baselines.

\begin{table*}[t]
\centering
\scriptsize
\caption{Dataset statistics for long-horizon forecasting with input window length $L_{\mathrm{in}}=96$. ``Freq.'' denotes the sampling frequency, ``Ch.'' the number of variables, and $L$ the total length of the dataset. ``Train'', ``Val'', and ``Test'' indicate the number of time steps in each split. ``Train win.'', ``Val win.'', and ``Test win.'' denote the number of sliding-window samples for each horizon.}
\label{tab:app-stats-longhorizon}
\begin{tabular}{lrrrrrrrrrr}
\toprule
Dataset & Freq. & Ch. & $L$ & Train & Val & Test & $H$ & Train win. & Val win. & Test win. \\
\midrule
ETTh1 & 1h & 7 & 14400 & 8640 & 2880 & 2880 & 96  & 8449  & 2785  & 2785 \\
ETTh1 & 1h & 7 & 14400 & 8640 & 2880 & 2880 & 192 & 8353  & 2689  & 2689 \\
ETTh1 & 1h & 7 & 14400 & 8640 & 2880 & 2880 & 336 & 8209  & 2545  & 2545 \\
ETTh1 & 1h & 7 & 14400 & 8640 & 2880 & 2880 & 720 & 7825  & 2161  & 2161 \\
ETTh2 & 1h & 7 & 14400 & 8640 & 2880 & 2880 & 96  & 8449  & 2785  & 2785 \\
ETTh2 & 1h & 7 & 14400 & 8640 & 2880 & 2880 & 192 & 8353  & 2689  & 2689 \\
ETTh2 & 1h & 7 & 14400 & 8640 & 2880 & 2880 & 336 & 8209  & 2545  & 2545 \\
ETTh2 & 1h & 7 & 14400 & 8640 & 2880 & 2880 & 720 & 7825  & 2161  & 2161 \\
ETTm1 & 15min & 7 & 57600 & 34560 & 11520 & 11520 & 96  & 34369 & 11425 & 11425 \\
ETTm1 & 15min & 7 & 57600 & 34560 & 11520 & 11520 & 192 & 34273 & 11329 & 11329 \\
ETTm1 & 15min & 7 & 57600 & 34560 & 11520 & 11520 & 336 & 34129 & 11185 & 11185 \\
ETTm1 & 15min & 7 & 57600 & 34560 & 11520 & 11520 & 720 & 33745 & 10801 & 10801 \\
ETTm2 & 15min & 7 & 57600 & 34560 & 11520 & 11520 & 96  & 34369 & 11425 & 11425 \\
ETTm2 & 15min & 7 & 57600 & 34560 & 11520 & 11520 & 192 & 34273 & 11329 & 11329 \\
ETTm2 & 15min & 7 & 57600 & 34560 & 11520 & 11520 & 336 & 34129 & 11185 & 11185 \\
ETTm2 & 15min & 7 & 57600 & 34560 & 11520 & 11520 & 720 & 33745 & 10801 & 10801 \\
Solar & 10min & 137 & 52560 & 36792 & 5256 & 10512 & 96  & 36601 & 5161 & 10417 \\
Solar & 10min & 137 & 52560 & 36792 & 5256 & 10512 & 192 & 36505 & 5065 & 10321 \\
Solar & 10min & 137 & 52560 & 36792 & 5256 & 10512 & 336 & 36361 & 4921 & 10177 \\
Solar & 10min & 137 & 52560 & 36792 & 5256 & 10512 & 720 & 35977 & 4537 & 9793 \\
Weather & 10min & 21 & 52696 & 36887 & 5270 & 10539 & 96  & 36696 & 5175 & 10444 \\
Weather & 10min & 21 & 52696 & 36887 & 5270 & 10539 & 192 & 36600 & 5079 & 10348 \\
Weather & 10min & 21 & 52696 & 36887 & 5270 & 10539 & 336 & 36456 & 4935 & 10204 \\
Weather & 10min & 21 & 52696 & 36887 & 5270 & 10539 & 720 & 36072 & 4551 & 9820 \\
\bottomrule
\end{tabular}
\end{table*}

\subsection{Hyperparameters}
\label{app:hyperparameters}

\subsubsection{Scalar RC regression}

For NARMA10 and Mackey-Glass, we compare FRESCO variants with the standard ESN. For each model and reservoir size, hyperparameter optimization is performed with 1000 Optuna trials. The TPE sampler uses multivariate and grouped sampling with 40 startup trials. Hyperband pruning uses three resource levels and reduction factor 2.

The scalar RC search uses six seeds during model selection, and twenty seeds for final testing.
The multi-fidelity schedule has three stages: 512 training points and two seeds, 768 training points and four seeds, and the full training set with six seeds. The final evaluation retrains the readout on the union of training and validation data and evaluates NRMSE and inference time on the test set.

\begin{table}[t]
\centering
\small
\caption{Hyperparameter search ranges for the scalar RC regression benchmarks. Log-uniform sampling is used where indicated. For FRESCO-family models, the parameter listed as spectral radius corresponds to the outer radius of the frequency-domain eigenvalue ring.}
\label{tab:app-hparams-scalar-rc}
\begin{tabular}{lll}
\toprule
Hyperparameter & Models & Search range \\
\midrule
Spectral radius & ESN & $\mathcal{U}(0.1,1.0)$ \\
Spectral radius & FRESCO variants & $ 1.0 $ \\
Inner ring radius & FRESCO variants & $\mathcal{U}(10^{-3},0.999)$ \\
Leak rate & All & log-$\mathcal{U}(10^{-5},1)$ \\
Input scale & All & log-$\mathcal{U}(10^{-7},10^{2})$ \\
Bias scale & All & log-$\mathcal{U}(10^{-7},10^{2})$ \\
Ridge regularization & All & log-$\mathcal{U}(10^{-12},10^{2})$ \\
Gain $\alpha$ & FRESCO variants & log-$\mathcal{U}(0.1,10)$ \\
\bottomrule
\end{tabular}
\end{table}

The scalar regression metric is
\begin{equation}
\mathrm{NRMSE}
=
\sqrt{
\frac{
\frac{1}{T}\sum_{t=1}^{T}(\hat{y}_t-y_t)^2
}{
\mathrm{Var}(y)
}
}.
\label{eq:app-nrmse}
\end{equation}
For the plots of Figure \ref{fig:scaling_size}, the vertical coordinate is the mean test NRMSE across final seeds, and the horizontal coordinate is the mean end-to-end inference time per temporal step in microseconds. Timing includes input ingestion, reservoir update, state packing for FRESCO variants, and readout evaluation.

\subsubsection{Time-series classification}

For time-series classification, we use the same reservoir models as in the scalar RC benchmarks: the standard ESN versus the two FRESCO variants. The reservoir size is included in model selection and is sampled from
\[
\{128,256,512,1024,2048\}.
\]
The remaining reservoir hyperparameter ranges are the same as in Table~\ref{tab:app-hparams-scalar-rc}. Hyperparameters are selected by validation accuracy. For each dataset and model, the search is run for up to 1000 trials or until the wall-clock budget of one hour is reached. After model selection, the best configuration is retrained on the full original training split and evaluated on the original test split over ten random seeds. We report accuracy in Table \ref{tab:reservoir_accuracy}.

\subsubsection{Long-horizon reservoir forecasting}
\label{app:hypers_long_forecast}

For long-horizon forecasting, the ESN and FRESCO variants are implemented in PyTorch. For each input window, the reservoir state is reset to zero, driven for 96 time steps, and the final state is used as the window-level feature vector. The readout maps this feature vector directly to the flattened multivariate forecasting horizon.
The reservoir sizes explored by the search protocol as well as the remaining hyperparameter are detailed in Table~\ref{tab:app-hparams-long-rc}.

For large reservoirs, we apply contiguous block mean-pooling to the reservoir features before the readout computation. If the raw reservoir feature dimension is $N$ and the block size is $b$, the pooled feature dimension is $N/b$. The automatic pooling schedule is chosen so that all candidate reservoir sizes for a given dataset family yield the same readout input dimension. In this way, we can exploit the benefits of large reservoir dynamics without increasing the readout trainable parameters. For ETT, the readout input dimension after pooling is 256; for Solar it is 276; and for Weather it is 264.

Reservoir model selection minimizes validation MSE. The search uses three seeds during model selection and ten final seeds for test evaluation. The multi-fidelity schedule uses 256 training windows with one seed, 1024 training windows with two seeds, and the full training set with three seeds. The Optuna sampler is TPE with multivariate and grouped sampling, with 20 startup trials. Hyperband pruning uses three resource levels and reduction factor 3. Each search is capped by both a maximum number of 1000 trials and a wall-clock timeout budget of five hours. This limit was chosen to reflect practical, resource-constrained deployment scenarios, while Table~\ref{tab:mixfresco_sota} provides a comparison with highly optimized models from the literature achieving state-of-the-art results.

\begin{table}[t]
\centering
\small
\caption{Hyperparameter search ranges for reservoir models in long-horizon forecasting.}
\label{tab:app-hparams-long-rc}
\begin{tabular}{lll}
\toprule
Hyperparameter & Datasets/Models & Search range \\
\midrule
Reservoir size & ETT & $\{256,512,1024,2048,4096,8192,16384\}$ \\
Reservoir size & Solar & $\{276,552,1104,2208,4416,8832,17664\}$ \\
Reservoir size & Weather & $\{264,528,1056,2112,4224,8448,16896\}$ \\
Spectral radius & ESN & $\mathcal{U}(0.1,1.0)$ \\
Spectral radius & FRESCO variants  & $ 1.0 $ \\
Inner ring radius & FRESCO variants & $\mathcal{U}(10^{-3},0.999)$ \\
Leak rate & All & log-$\mathcal{U}(10^{-5},1)$ \\
Input scale & All & log-$\mathcal{U}(10^{-7},10^{2})$ \\
Bias scale & All & log-$\mathcal{U}(10^{-7},10^{2})$ \\
Ridge regularization & All & log-$\mathcal{U}(10^{-6},10^{6})$ \\
Gain $\alpha$ & FRESCO variants & log-$\mathcal{U}(0.1,10)$ \\
\bottomrule
\end{tabular}
\end{table}

\subsubsection{Deep forecasting baselines}
\label{app:forecasting_dl} 
The LSTM, Mamba, and Transformer baselines are encoder-only one-shot forecasters. Each model receives an input window
\[
\textbf{X} \in \mathbb{R}^{96 \times D},
\]
encodes the window, extracts the last-token representation, and applies a linear prediction head to output $H \times D$ values.

The LSTM baseline uses a stack of LSTM layers followed by dropout and a linear head. The searched hyperparameters are: hidden size in $\{64,128,256,384,512,768\}$, number of layers in $\{1,2,3,4,5,6\} $, dropout in $\{0,0.1,0.2,0.3\}$, batch size in $\{32,64,128\}$, learning rate in $[10^{-4},5\cdot 10^{-3}]$, and weight decay in $[10^{-6},10^{-2}]$.

The Mamba baseline uses a linear input embedding, stacked Mamba blocks, a final normalization layer, and a linear head. The searched hyperparameters are: model hidden dimension in $ \{128,256,384,512,768\}$, number of layers in $\{2,4,6,8\}$, state dimension in $\{16,32,64\}$, convolution width in $\{2,4\}$, expansion factor in $\{1,2\}$, dropout in $\{0,0.1,0.2,0.3\}$, batch size in $\{32,64,128\}$, learning rate in $[10^{-4},5\cdot 10^{-3}]$, and weight decay in $[10^{-6},10^{-2}]$.

The Transformer baseline uses a linear input embedding, sinusoidal positional encoding, Transformer encoder layers with GELU activations, and a linear head. The searched hyperparameters are: embedding hidden dimension in $ \{64,128,256,512,768\}$, number of attention heads in $\{4,8,16\}$, number of encoder layers in $\{2,4,6,8\}$, feed-forward dimension in $\{64,128,256,512,1024,2048\}$, dropout in $\{0,0.1,0.2,0.3\}$, batch size in $\{32,64,128\}$, learning rate in $[10^{-4},5\cdot 10^{-3}]$, and weight decay in $[10^{-6},10^{-2}]$. Invalid Transformer configurations for which $d_{\mathrm{model}}$ is not divisible by the number of heads are pruned.

For all deep baselines, the search objective is validation MSE. The multi-fidelity schedule uses 512 training windows for 2 epochs, 2048 training windows for 5 epochs, and the full training set for 10 epochs. Training uses AdamW, gradient clipping with norm 1.0, and early stopping with patience 3 during model selection. After model selection, the best configuration is retrained on the union of training and validation data, with the last 10\% of this chronological train+validation set used as an internal holdout for early stopping. Final training runs for at most 50 epochs, uses patience 10, and applies a StepLR scheduler with step size 10 and decay factor 0.5. Final evaluation is averaged over ten random seeds.

\subsection{Computational resources}
\label{app:computational-resources}

All NumPy/Numba RC experiments are CPU-only. They were run on a machine with two Intel(R) Xeon(R) CPU E5-2698 v4 processors at 2.20 GHz, for a total of 40 physical CPU cores. 

The results in Figs.~\ref{fig:FRESCO}, \ref{fig:bench_input_embedding}, and \ref{fig:bench_readout} are obtained on an Apple M3 Pro CPU with 12 cores (6 performance and 6 efficiency) and 18 GB of RAM. OpenBLAS\footnotemark{} is used to efficiently compute the matrix-vector products for the standard ESN baseline and Eq.~\eqref{eq:embedding_complex_dense_layer}.
\footnotetext{\url{https://www.openmathlib.org/OpenBLAS/}}

Long-horizon forecasting experiments are implemented in PyTorch and run on a workstation with three NVIDIA A100 80GB PCIe GPUs. Reservoir feature extraction in the PyTorch implementation is batched over sliding windows, with default window batch size 512. However, differently from standard deep learning models, the RC results are not affected by the batch size since the readout is a closed-form solution of the reservoir features collected.

For final long-horizon train-and-test runs, energy consumption and carbon emissions are tracked with CodeCarbon's tracker \cite{courty2024mlco2}. The tracker covers the full final loop over the ten final seeds, including final training on the train+validation split and evaluation on the test set. Energy is reported in kWh.

\section{Additional information}
\label{app:additional_results}

\subsection{Full results on long-horizon multivariate forecasting}

In Table \ref{tab:anytime_log_results}, we report the test MSE and MAE results for ESN, LSTM, Mamba, Transformer, and FRESCO variants, obtained with the model selection protocol described in Appendix \ref{app:experiments}.
%
%
%
\begin{table*}[t]
\centering
\caption{Full forecasting results parsed from anytime Bayesian optimization logs. The best result is red and the second best result is blue.}
\label{tab:anytime_log_results}
\scriptsize
\setlength{\tabcolsep}{2.2pt}
\renewcommand{\arraystretch}{0.95}
\resizebox{\textwidth}{!}{%
\begin{tabular}{ll|cc|cc|cc|cc|cc|cc}
\toprule
Dataset & Models & \multicolumn{2}{c}{ESN} & \multicolumn{2}{c}{LSTM} & \multicolumn{2}{c}{Mamba} & \multicolumn{2}{c}{Transformer} & \multicolumn{2}{c}{FRESCO (plain)} & \multicolumn{2}{c}{FRESCO (mix)} \\
 &  & \multicolumn{2}{c}{Ours} & \multicolumn{2}{c}{Ours} & \multicolumn{2}{c}{Ours} & \multicolumn{2}{c}{Ours} & \multicolumn{2}{c}{Ours} & \multicolumn{2}{c}{Ours} \\
Dataset & Horizon & MSE & MAE & MSE & MAE & MSE & MAE & MSE & MAE & MSE & MAE & MSE & MAE \\
\midrule
\multirow{5}{*}{\rotatebox[origin=c]{90}{ETTm1}} & 96 & 0.358 & 0.396 & 0.562 & 0.534 & 0.589 & 0.532 & 0.443 & 0.460 & \textcolor{blue}{0.349} & \textcolor{blue}{0.390} & \textcolor{red}{0.349} & \textcolor{red}{0.389} \\
 & 192 & \textcolor{blue}{0.395} & \textcolor{blue}{0.419} & 0.583 & 0.560 & 0.596 & 0.563 & 0.483 & 0.490 & \textcolor{red}{0.394} & \textcolor{red}{0.419} & 0.398 & 0.423 \\
 & 336 & \textcolor{blue}{0.448} & \textcolor{blue}{0.457} & 0.736 & 0.659 & 0.860 & 0.712 & 0.604 & 0.585 & 0.450 & 0.461 & \textcolor{red}{0.446} & \textcolor{red}{0.457} \\
 & 720 & \textcolor{red}{0.602} & \textcolor{red}{0.553} & 1.052 & 0.796 & 1.024 & 0.772 & 0.988 & 0.800 & 0.607 & 0.558 & \textcolor{blue}{0.603} & \textcolor{blue}{0.555} \\
 & Avg & 0.451 & \textcolor{blue}{0.456} & 0.733 & 0.637 & 0.767 & 0.645 & 0.629 & 0.584 & \textcolor{blue}{0.450} & 0.457 & \textcolor{red}{0.449} & \textcolor{red}{0.456} \\
\midrule
\multirow{5}{*}{\rotatebox[origin=c]{90}{ETTm2}} & 96 & \textcolor{red}{0.182} & \textcolor{red}{0.284} & 0.640 & 0.629 & 0.411 & 0.474 & 0.359 & 0.445 & 0.184 & 0.289 & \textcolor{blue}{0.183} & \textcolor{blue}{0.288} \\
 & 192 & 0.268 & 0.354 & 1.273 & 0.898 & 0.797 & 0.684 & 0.712 & 0.641 & \textcolor{red}{0.262} & \textcolor{blue}{0.353} & \textcolor{blue}{0.262} & \textcolor{red}{0.353} \\
 & 336 & 0.453 & 0.470 & 1.533 & 0.956 & 1.522 & 0.935 & 1.310 & 0.863 & \textcolor{red}{0.387} & \textcolor{red}{0.448} & \textcolor{blue}{0.391} & \textcolor{blue}{0.450} \\
 & 720 & \textcolor{red}{0.563} & \textcolor{red}{0.564} & 4.077 & 1.629 & 3.939 & 1.515 & 5.681 & 1.739 & 0.743 & 0.664 & \textcolor{blue}{0.719} & \textcolor{blue}{0.652} \\
 & Avg & \textcolor{red}{0.367} & \textcolor{red}{0.418} & 1.881 & 1.028 & 1.667 & 0.902 & 2.015 & 0.922 & 0.394 & 0.439 & \textcolor{blue}{0.389} & \textcolor{blue}{0.436} \\
\midrule
\multirow{5}{*}{\rotatebox[origin=c]{90}{Weather}} & 96 & 0.156 & 0.235 & 0.180 & 0.264 & 0.216 & 0.297 & 0.160 & 0.244 & \textcolor{blue}{0.155} & \textcolor{blue}{0.235} & \textcolor{red}{0.153} & \textcolor{red}{0.232} \\
 & 192 & 0.202 & 0.283 & 0.240 & 0.314 & 0.280 & 0.352 & 0.217 & 0.299 & \textcolor{red}{0.196} & \textcolor{red}{0.277} & \textcolor{blue}{0.197} & \textcolor{blue}{0.278} \\
 & 336 & \textcolor{blue}{0.251} & \textcolor{blue}{0.326} & 0.314 & 0.369 & 0.331 & 0.390 & 0.273 & 0.345 & \textcolor{red}{0.250} & \textcolor{red}{0.322} & 0.254 & 0.327 \\
 & 720 & 0.312 & 0.371 & 0.414 & 0.426 & 0.444 & 0.451 & 0.365 & 0.404 & \textcolor{blue}{0.309} & \textcolor{red}{0.365} & \textcolor{red}{0.308} & \textcolor{blue}{0.367} \\
 & Avg & 0.230 & 0.304 & 0.287 & 0.343 & 0.318 & 0.372 & 0.254 & 0.323 & \textcolor{red}{0.227} & \textcolor{red}{0.300} & \textcolor{blue}{0.228} & \textcolor{blue}{0.301} \\
\midrule
\multirow{5}{*}{\rotatebox[origin=c]{90}{Solar}} & 96 & 0.227 & 0.336 & \textcolor{red}{0.188} & \textcolor{red}{0.242} & 0.216 & 0.258 & \textcolor{blue}{0.199} & \textcolor{blue}{0.244} & 0.230 & 0.333 & 0.215 & 0.315 \\
 & 192 & 0.256 & 0.351 & \textcolor{red}{0.204} & \textcolor{red}{0.256} & 0.244 & 0.292 & \textcolor{blue}{0.224} & \textcolor{blue}{0.280} & 0.248 & 0.348 & 0.241 & 0.330 \\
 & 336 & 0.301 & 0.380 & \textcolor{red}{0.213} & \textcolor{red}{0.270} & 0.267 & 0.298 & \textcolor{blue}{0.225} & \textcolor{blue}{0.292} & 0.274 & 0.364 & 0.300 & 0.379 \\
 & 720 & 0.293 & 0.373 & \textcolor{red}{0.211} & \textcolor{red}{0.270} & 0.228 & \textcolor{blue}{0.287} & \textcolor{blue}{0.222} & 0.292 & 0.295 & 0.381 & 0.283 & 0.351 \\
 & Avg & 0.269 & 0.360 & \textcolor{red}{0.204} & \textcolor{red}{0.259} & 0.239 & 0.283 & \textcolor{blue}{0.217} & \textcolor{blue}{0.277} & 0.262 & 0.356 & 0.260 & 0.344 \\
\midrule
\multirow{5}{*}{\rotatebox[origin=c]{90}{ETTh1}} & 96 & \textcolor{blue}{0.455} & \textcolor{blue}{0.464} & 0.785 & 0.697 & 0.917 & 0.736 & 0.697 & 0.641 & 0.468 & 0.475 & \textcolor{red}{0.452} & \textcolor{red}{0.462} \\
 & 192 & \textcolor{red}{0.626} & \textcolor{red}{0.567} & 0.962 & 0.776 & 1.071 & 0.803 & 0.912 & 0.744 & 0.667 & 0.591 & \textcolor{blue}{0.640} & \textcolor{blue}{0.578} \\
 & 336 & \textcolor{red}{0.852} & \textcolor{red}{0.695} & 1.038 & 0.802 & 1.077 & 0.807 & 1.121 & 0.850 & \textcolor{blue}{0.881} & \textcolor{blue}{0.704} & 0.896 & 0.718 \\
 & 720 & 1.081 & \textcolor{blue}{0.825} & \textcolor{red}{0.978} & \textcolor{red}{0.799} & 1.105 & 0.840 & \textcolor{blue}{1.058} & 0.842 & 1.119 & 0.840 & 1.184 & 0.847 \\
 & Avg & \textcolor{red}{0.753} & \textcolor{red}{0.638} & 0.941 & 0.768 & 1.043 & 0.796 & 0.947 & 0.769 & \textcolor{blue}{0.784} & 0.652 & 0.793 & \textcolor{blue}{0.651} \\
\midrule
\multirow{5}{*}{\rotatebox[origin=c]{90}{ETTh2}} & 96 & 0.515 & 0.513 & 1.901 & 1.104 & 1.962 & 1.055 & 1.353 & 0.900 & \textcolor{red}{0.381} & \textcolor{red}{0.443} & \textcolor{blue}{0.381} & \textcolor{blue}{0.443} \\
 & 192 & \textcolor{red}{0.747} & \textcolor{red}{0.672} & 2.432 & 1.295 & 3.384 & 1.593 & 4.063 & 1.577 & \textcolor{blue}{0.797} & \textcolor{blue}{0.691} & 0.856 & 0.715 \\
 & 336 & \textcolor{red}{0.718} & \textcolor{red}{0.659} & 3.062 & 1.511 & 2.301 & 1.278 & 1.963 & 1.187 & 0.994 & \textcolor{blue}{0.779} & \textcolor{blue}{0.989} & 0.780 \\
 & 720 & \textcolor{red}{1.171} & \textcolor{red}{0.866} & 3.734 & 1.665 & 2.853 & 1.412 & 3.951 & 1.691 & \textcolor{blue}{1.383} & \textcolor{blue}{0.941} & 1.557 & 1.000 \\
 & Avg & \textcolor{red}{0.788} & \textcolor{red}{0.677} & 2.782 & 1.394 & 2.625 & 1.335 & 2.832 & 1.339 & \textcolor{blue}{0.888} & \textcolor{blue}{0.714} & 0.946 & 0.734 \\
\bottomrule
\end{tabular}%
}
\end{table*}

\subsection{Limitations}
\label{app:limitations}

While FRESCO significantly improves the computational and energy efficiency of dense recurrent layers, it inherently carries some limitations related to the reservoir computing paradigm and its specific frequency-domain formulation. Like the standard ESN, FRESCO relies on fixed, randomly initialized recurrent weights, training only the linear readout. While this avoids backpropagation through time and enables efficient closed-form optimization, it inherently sacrifices some absolute task performance compared to deep sequence models where recurrent dynamics are explicitly trained end-to-end for a specific task.

For FRESCO, this limitation is even tighter as it restricts the spatial recurrent connectivity to circulant topologies. Although our experiments demonstrate that this structural foundation does not harm empirical expressivity on the evaluated benchmarks, it theoretically constrains the accessible dynamic state space compared to a fully unstructured dense weight matrix. 

Furthermore, while dimensional zero-padding scales efficiently with input dimensionality, the stability and expressivity of frequency-domain recurrence over ultra-long context windows remain to be empirically validated. In addition, the dimensional zero-padding embedding assumes that the reservoir size $N$ is a multiple of the input dimension $N_x$. While this is easily managed during architectural design and hyperparameter tuning, it does impose a discrete step-size constraint on the configuration space.

The computational execution times reported in our scaling analysis (e.g., in Figure~\ref{fig:FRESCO} and Appendices~\ref{app:embedding-benchmark} and~\ref{app:readout-benchmark}) represent minimal execution times across multiple trials. While this isolates the pure algorithmic complexity and hardware-level scaling limits by filtering out transient system fluctuations, it depicts an ideal best-case scenario. Real-world deployment times may experience slight variances due to hardware throttling, concurrent processes, or sub-optimal memory caching.

\subsection{Nomenclature}
\label{app:nomenclature}

Table~\ref{tab:nomenclature} summarizes the mathematical symbols used throughout the main text and the supplementary material. Note that some variables currently have multiple notations across different sections, which are grouped together in the ``Symbol'' column for reference.

\begin{table}[ht]
    \centering
    \caption{Nomenclature and Notation Summary}
    \label{tab:nomenclature}
    \small
    \begin{tabular}{ll p{8.5cm} l}
        \toprule
        \textbf{Symbol} & \textbf{Name} & \textbf{Description} & \textbf{Space} \\
        \midrule
        \multicolumn{4}{l}{\textit{General}} \\
        \midrule
        $\SNx$ & Input size & Number of reservoir input features. & $\mathbb{N}$ \\
        $\SNy$ & Output size & Number of reservoir output features in the readout layer. & $\mathbb{N}$ \\
        $\SNr$ & Reservoir size & Number of non-linear nodes (neurons) comprising the reservoir. & $\mathbb{N}$ \\

        \midrule
        \multicolumn{4}{l}{\textit{Spatial Domain (SD) - Standard ESN}} \\
        \midrule
        $\Sx{t}$ & Input vector & Reservoir input vector at discrete time step $t$. & $\mathbb{R}^{\SNx}$ \\
        $\Sy{t}$ & Output vector & Readout output vector at time step $t$. & $\mathbb{R}^{\SNy}$ \\
        $\Sr{t}$ & Reservoir state & Hidden state vector representing the reservoir. & $\mathbb{R}^{\SNr}$ \\
        $\SWx$ & Input weights & Input weights mapping the input vector to the reservoir nodes. & $\mathbb{R}^{\SNr \times \SNx}$ \\
        $\SWr$ & Recurrent weights & Recurrent weight matrix governing the reservoir's dynamics. & $\mathbb{R}^{\SNr \times \SNr}$ \\
        $\SWy$ & Readout weights & Readout weights mapping the reservoir state to the output. & $\mathbb{R}^{\SNy \times \SNr}$ \\
        $\Sb$  & Bias vector & Constant offset applied to the reservoir activation arguments. & $\mathbb{R}^{\SNr}$ \\
        $\Sxzp$ & Padded input vector & Zero-padded input vector. & $\mathbb{R}^{\SNr}$ \\
        $\Sactfun$ & Activation function & Non-linear activation function. & Function \\
        
        \midrule
        \multicolumn{4}{l}{\textit{Frequency Domain (FD) - FRESCO}} \\
        
        \midrule
        $\Fx{t}$ & Input vector & FD input vector at discrete time step $t$. & $\mathbb{C}^{\SNx}$ \\
        $\Fxzp$ & Padded input vector & FD representation of zero-padded input vector. & $\mathbb{C}^{\SNr}$ \\
        $\Fwx$ & Input weight vector & FD input weights vector. & $\mathbb{C}^{\SNr}$ \\
        $\Fr{t}$ & Reservoir vector & FD reservoir state vector at discrete time step $t$. & $\mathbb{C}^{\SNr}$ \\
        $\Fwr$ & Recurrent weight vector & FD recurrent weight vector. & $\mathbb{C}^{\SNr}$ \\
        $\Fb$ & Bias vector & FD bias vector. & $\mathbb{C}^{\SNr}$ \\
        
        \midrule
        $\SXzp$ & Padded input matrix & SD input matrix (dimensionally zero-padded input vector). & $\mathbb{R}^{\SNx \times N_2}$ \\
        $\FXzp, \FX{t}$ & Padded input matrix & FD representation of input matrix. & $\mathbb{C}^{N_1 \times N_2}$ \\
        $\FWx$ & Input weight matrix & FD input weight matrix. & $\mathbb{C}^{N_1 \times N_2}$ \\
        $\FR{t}$ & Reservoir state & FD reservoir state matrix at discrete time step $t$. & $\mathbb{C}^{N_1 \times N_2}$ \\
        $\FWr$ & Recurrent weight matrix & FD recurrent weight matrix. & $\mathbb{C}^{N_1 \times N_2}$ \\
        $\FB$ & Bias matrix &  FD bias matrix. & $\mathbb{C}^{N_1 \times N_2}$ \\
        $\Factfun$ & Activation function & Complex-valued non-linear activation function applied in FD. & Function \\
        
        \midrule
        $\Fp{t}$ & Packed state vector & Real-valued packed reservoir state representation $\Fp{t} = \pack{\FR{t}}$. & $\mathbb{R}^{\SNr}$ \\
        $\FWp$ & Packed readout matrix & Readout weight matrix for packed state vectors. & $\mathbb{R}^{\SNy \times \SNr}$ \\
        
        \midrule
        \multicolumn{4}{l}{\textit{Operators \& Transforms}} \\
        \midrule
        $\pack{\cdot}$ & Packing operator & Operator mapping $\FR{t}$ to a non-redundant real-valued vector. & Operator \\
        $\fourier{\cdot}$ & Fourier operator & Forward Discrete Fourier Transform (DFT) operator. & Operator \\
        $\invfourier{\cdot}$ & Inverse Fourier & Inverse Discrete Fourier Transform (IDFT) operator. & Operator \\
        \bottomrule
    \end{tabular}
\end{table}

\newpage

\end{document}